\newtheorem{theorem}{Theorem}
\newtheorem{lemma}{Lemma}
\newtheorem{assumption}{Assumption}
\theoremstyle{definition}
\theoremstyle{remark}
\newcommand{\eps}{\epsilon}
\newcommand{\veps}{\varepsilon}
\newcommand{\Cov}{\mathrm{Cov}}
\newcommand{\OPT}{\mathrm{OPT}}
\newcommand{\CSBM}{\mathrm{CSBM}}
\newcommand{\Ber}{\mathrm{Ber}}
\newcommand{\R}{\mathbb{R}}
\newcommand{\Prob}{\mathds{P}}
\newcommand{\zero}{\mathbf{0}}
\newcommand{\one}{\mathbf{1}}
\newcommand{\bt}{\tilde{b}}
\newcommand{\at}{\tilde{a}}
\newcommand{\w}{\mathbf{w}}
\newcommand{\wt}{\tilde{\w}}
\newcommand{\uv}{\mathbf{u}}
\newcommand{\vv}{\mathbf{v}}
\newcommand{\yv}{\mathbf{y}}
\newcommand{\muv}{\bm{\mu}}
\newcommand{\nuv}{\bm{\nu}}
\newcommand{\Nc}{\mathcal{N}}
\newcommand{\At}{\tilde{A}}
\newcommand{\Xt}{\tilde{X}}
\newcommand{\ie}{{i.e.}}
\newcommand{\iid}{{i.i.d.}}
\newcommand{\poly}{{\rm poly}}
\newcommand{\E}{\mathds{E}}
\newcommand{\setc}{{\mathsf{c}}}
\newcommand{\inner}[1]{\left\langle #1 \right\rangle}
\newcommand{\train}{\text{\it train}}
\newcommand{\test}{\text{\it test}}
\newcommand{\bignorm}[1]{\left\lVert #1 \right\rVert}
\newcommand{\floor}[1]{\lfloor #1 \rfloor}
\newcommand\bigsubseteq[1][1.19]{%
   \mathrel{\vcenter{\hbox{\scalebox{#1}{$\subseteq$}}}}}
\newcommand{\abs}[1]{\left\lvert #1 \right\rvert}
\newcommand{\norm}[1]{\left\lVert #1 \right\rVert}
\newcommand{\bS}{\mathbb{S}}
\title{Graph Convolution for Semi-Supervised Classification: Improved Linear Separability and Out-of-Distribution Generalization}
\author[1]{Aseem Baranwal}
\author[1]{Kimon Fountoulakis}
\author[2]{Aukosh Jagannath}
\affil[1]{David R. Cheriton School of Computer Science, University of Waterloo, Waterloo, Canada}
\affil[2]{Department of Statistics and Actuarial Science, 
Department of Applied Mathematics, University of Waterloo, Waterloo, Canada}
\date{}
\begin{document}
\graphicspath{ {./images/} }

\maketitle
\frenchspacing

\begin{abstract}
Recently there has been increased interest in semi-supervised classification in the presence of graphical information.
A new class of learning models has emerged that relies, at its most basic level, on classifying the data after first applying a graph convolution.
To understand the merits of this approach, we study the classification of a mixture of Gaussians, where the data corresponds to the node attributes of a stochastic block model. We show that graph convolution extends the regime in which the data is linearly separable by a factor of roughly $1/\sqrt{D}$, where $D$ is the expected degree of a node, 
as compared to the mixture model data on its own. Furthermore, we find that the linear classifier obtained by minimizing the cross-entropy loss after the graph convolution  generalizes to out-of-distribution data where the unseen data can have different intra- and inter-class edge probabilities from the training data.
\end{abstract}

\section{Introduction}
\label{intro}
Semi-supervised classification is one of the most important topics in machine learning and artificial intelligence. Recently, researchers extended classification models to include relational information~\cite{HamilBook}, where relations are captured by a graph. The attributes of the nodes capture information about the nodes, while the edges of the graph capture relations among the nodes. 
The reason behind this trend is that many applications require 
the combination of both the graph and the node attributes, such as recommendation systems~\cite{YHCEHL18}, predicting the properties of compounds or molecules~\cite{gilmer:quantum, scarselli:gnn}, predicting states of physical objects~\cite{battaglia:graphnets}, and classifying types of nodes in knowledge graphs~\cite{kipf:gcn}.

The most popular models use graph convolution~\cite{kipf:gcn} where one averages the attributes of a node with those of its neighbors.\footnote{Other types of graph convolution exist, for simplicity we focus on averaging since it's one of the most popular.}
This allows the model to make predictions about a node
using the attributes of its neighbors instead of only using  the node's attributes.
Despite the common perception among practitioners~\cite{CLB19} that graph convolution can improve the performance of models for semi-supervised classification, we are not aware of any work that studies the benefits of graph convolution in improving classifiability of the data as compared to traditional classification methods, such as logistic regression, nor are we aware of work on its generalization performance on out-of-distribution data for semi-supervised classification. 

To understand these issues, we study the performance of a graph convolution on a simple classification model with node attributes that are correlated with the class information, namely semi-supervised classification for the contextual stochastic block model~\cite{BVR17,DSM18}.
The contextual stochastic block model (CSBM) is a coupling of the standard stochastic block model 
(SBM)~\cite{holland83stochastic} with a Gaussian mixture model.
In this model, each class in the graph corresponds to a different Gaussian component of the mixture model, which yields the distribution for the attributes of the nodes. For a precise definition of the model see \Cref{model}. The CSBM allows us to explore a range of questions related to linear separability and, in particular, to probe how various methods perform as one varies both the noise level of the mixture model, namely the distance between the means, and the noise level of the underlying graph, namely the difference between intra- and inter-class edge probabilities.
We focus here on the simple case of two classes where the key issues are particularly transparent. We expect that our methods apply readily to the multi-class setting (see \Cref{conclusion} for more on this).

\subsection{Previous work}
\label{prev_work}
Computer scientists and statisticians have taken a fresh perspective on semi-supervised classification by coupling the graph structure and node attributes, see, e.g., ~\cite{scarselli:gnn,CZY2011,GVB2012,DV2012,GFRT13,YML13,HYL17,JLLHZ19,pmlr-v97-mehta19a}. These papers focus largely on practical aspects of these problems and new graph-based machine learning models.

On the other hand, there is a vast body of theoretical work on unsupervised learning for stochastic block models, see, e.g., \cite{decelle2011asymptotic,massoulie2014community,mossel2018proof,mossel2015consistency,abbe2015community,abbe2015exact,bordenave2015non,deshpande2015asymptotic,montanari2016semidefinite,banks2016information,abbe2018proof}, as well as the recent surveys \cite{Abbe2018,moore2017csphysics}. 
More recently, there has been work on the related problem of unsupervised classification using the contextual stochastic block model~\cite{BVR17,DSM18}. In their work, \cite{DSM18} explore the fundamental thresholds for correctly classifying a macroscopic fraction of the nodes in the regime of linear sample complexity and large but finite degree. Furthermore they present a conjecture for the sharp threshold. Their study, however, is largely focused on the fundamental limits of unsupervised learning whereas the work here is focused on understanding the relative merits of graph convolutions over traditional learning methods for semi-supervised learning and is thus not directly comparable. 

Another line of work has been studying the power of graph convolution models to distinguish graphs~\cite{XHLJ19,GJJ20,A2020}, and the universality of models that use graph convolution~\cite{ALoukas2020}. In this last paper and the references therein, the authors study the expressive power of graph neural networks, i.e., the ability to learn a hypothesis set.
This, however, does not guarantee generalization for unseen data.
Another relevant work that also studies semi-supervised classification using the graphs generated by the SBM is~\cite{CLB19}. There, the authors show that all local minima of cross entropy are approximately global minima if the graphs follow an SBM distribution.  Their work, however, does not provide theoretical evidence for the learning benefits of graph convolution in improving linear separability of data, neither do they show generalization bounds for out-of-distribution data. More recently, \cite{yan:2021:two-sides} showed the changes in the mean and the variance of the data after applying a graph convolution, however, their theoretical analysis is limited to networks without non-linear activations and they primarily address oversmoothing and heterophily problems instead of linear separability.

\subsection{Our contributions}
Let us now briefly summarize our main findings. In the following, let $d$ be the dimension of the mixture model (the number of attributes of a node in the graph), $n$ the number of nodes, $p$ and $q$ the intra- and inter-class edge probabilities respectively, and $D \approx n(p+q)/2$ the expected degree of a node. In our analysis we find the following:
\begin{itemize}
    \item If the means of the mixture model are at most $O(1/\sqrt{d})$ apart, then the data from the mixture model is not linearly separable and the minimal value of the binary cross entropy loss on the sphere of radius $R$ is bounded away from $0$ in a way that depends quantitatively on this distance and the sizes of the labeled data-sets uniformly over $R>0$.\footnote{It is easy to see that this is essentially  sharp, that is, if the means are $\omega(\sqrt{\log d/d}))$ apart then the data is linearly separable.}  
    \item If the  means are at least $\tilde{\omega}(1/\sqrt{d\cdot D})$ apart, the graph is not too sparse ($p,q=\tilde{\omega}(\log^2(n)/n)$), and the noise level is not too large ($(p-q)/(p+q)=\Omega(1)$), then the graph convolution of the data is linearly separable with high probability.
    \item Furthermore, if these conditions hold, then the minimizer of the training loss achieves exponentially small binary-cross entropy even for out-of-sample data with high probability. 
    \item On the other hand, if the means are $O(1/\sqrt{d D})$, then the convolved data is not linearly separable as well and one obtains the same lower bound on the loss for a fixed radius as in the non-convolved setting.
\end{itemize}
In particular, we see that if the average degree is large enough, then there is a substantial gain in the scale on which the corresponding graph convolution can perform well as compared to logistic regression. On the other hand, it is important to note that if the noise level of the graph is very high and the noise level of the data is small then the graph convolution can be disadvantageous. This is also shown empirically in our experiments in \Cref{subsec:ood_synthetic,subsec:ood_real}.

The rest of the paper is organized as follows: we give a precise definition of the semi-supervised contextual stochastic block model in \Cref{model}. In \Cref{results} we present our results along with a discussion. In \Cref{experiments}, we present extensive experiments which illustrate our results. Next, in \Cref{degree-concentration} we note the elementary concentration results for the class sizes and the degrees of nodes, that are used throughout the proofs. We then provide the proof for the separability thresholds in \Cref{separability-thresholds} and the proof for out-of-sample generalization in \Cref{generalization}.

\subsection{The model}
\label{model}
In this section we describe the CSBM~\cite{DSM18}, which is a simple coupling of a stochastic block model with a Gaussian mixture model.

Let $(\veps_{k})_{k\in[n]}$ be \iid\ $\Ber(\frac12)$ random variables. Corresponding to these, consider a stochastic block model consisting of two classes
$C_{0}=\{i\in[n]:\veps_{i}=0\}$ and $C_{1}=C_{0}^\setc$ with inter-class edge probability $q$ and intra-class edge probability $p$ with no self-loops.
In particular, conditionally on $(\veps_{k})$ the adjacency matrix $A=(a_{ij})$ is Bernoulli with $a_{ij}\sim \Ber(p)$ if $i,j$ are in the same class  and $a_{ij}\sim \Ber(q)$ if they are in distinct classes. 
Along with this, consider $X\in\R^{n\times d}$ to be the feature matrix such that each row $X_{i}$ is an independent $d$-dimensional Gaussian random vector with
$X_{i}\sim N(\muv,\frac{1}{d}I)$ if $i\in C_{0}$ and
$X_{i}\sim N(\nuv,\frac{1}{d}I)$ if $i\in C_{1}$.
Here $\muv,\nuv\in\R^{d}$ are fixed vectors with $\norm{\muv}_2,\norm{\nuv}_2\leq1$ and $I$ is the identity matrix.
Denote by $\CSBM(n,p,q,\muv,\nuv)$ the coupling of a stochastic block model with a two component Gaussian mixture model with means $\muv,\nuv$ and covariance $\frac1dI$ as described above and we denote a sample by $(A,X)\sim \CSBM(n,p,q,\muv,\nuv)$.\footnote{We note here that, we could also have considered $\sigma^2 I$ instead of $I/d$, in which case all of our results still hold after rescaling the thresholds appropriately. For example, if we took $\sigma^2 = 1$, then the relevant critical thresholds for linear separability become $\norm{\muv-\nuv} \sim 1$ and $\norm{\muv-\nuv}\sim 1/\sqrt{D}$ for the mixture model and the CSBM respectively.}
Observe that the marginal distribution for $A$ is a stochastic block model and that the marginal distribution for X is a two-component Gaussian mixture model.
Finally, define $\At=(\at_{ij})=A + I$ and $D$, the diagonal degree matrix for $\At$ where $D_{ii} = \sum_{j\in[n]}\at_{ij}$ for all $i\in[n]$. Then the graph convolution of some data $X$ is given by $\tilde{X}=D^{-1}\At X$.

For parameters $\w\in\R^d$ and $b\in \R$, the label predictions are given by $\hat{\yv} = \sigma(D^{-1}\At X \w + b\one)$,
where $\sigma(x)=(1+e^{-x})^{-1}$ is the sigmoid function applied element-wise in the usual sense. Note that we work in the semi-supervised setting where only a fraction of the labels are available. In particular, we will assume that for some fixed $0<\beta_0,\beta_1\le \frac12$, the number of labels available for class $C_0$ is $\beta_0 n$ and for class $C_1$ is $\beta_1 n$. Let $S = \{i: y_i \text{ is available} \}$ so that $|S|=(\beta_0+\beta_1)n$. The loss function we use is the binary cross entropy,
\begin{equation}
    L(A,X,\w,b) = -\frac1{|S|}\sum_{i\in T} y_i \log \hat{y_i} + (1-y_i)\log (1-\hat{y_i}), \label{eq:BCE-loss}
\end{equation}
where $y_i$ is the given label of node $i$, and $\hat{y}_i$ is the predicted label of node $i$ (also, the $i$-th component of vector $\hat{\yv}$).
Observe that the binary cross-entropy loss used in Logistic regression can be written as $L(I,X,\w,b)$.

\subsection{Results}
\label{results}
In this paper we have two main results.
Our first result is regarding the relative performance of the graph convolution as compared to classical logistic regression. Here, there are two types of questions to ask. The first is geometric in nature, namely when is the data linearly separable with high probability?\
This is a statement about the fundamental limit of logistic regression for this data. The second is about the output of the corresponding optimization procedure, the minimizer of  \eqref{eq:BCE-loss}, namely whether or not it performs well in classifying out-of-sample data .

Note that the objective function, while convex, is non-coercive when the data are linearly separable. Therefore, we introduce a norm-ball constraint and consider the following problem:
\begin{equation}\label{eq:OPT}
    \OPT_d(A,X,R)=\min_{\substack{\norm{\w} \le R,\\b\in\R}} ~ L(A,X,\w,b),
\end{equation}
where $\norm{\cdot}$ is the $\ell_2$-norm.
The analogous optimization problem in the setting without graph structure, i.e., logistic regression, is then $\OPT_d(I,X,R)$.
We find that graph convolutions can dramatically improve the separability of a dataset and thus the performance of the regression. In particular, we find that by adding the graph structure to a dataset and using the corresponding convolution, i.e., working with $AX$ as opposed to simply $X$, can make a dataset linearly separable when it was not previously. 

Our second result is about the related question of generalization on out-of-distribution data. Here we take the optimizer, $(\w^*,b^*)$, of problem \eqref{eq:OPT} and we are interested in how well it classifies data coming from a CSBM with the same means but with a different number of nodes, $n'$, and different intra- and inter-class edge probabilities, $p'$ and $q'$ respectively. We find that $(\w^*,b^*)$ performs nearly optimally, even when the values of  $n'$, $p'$, and $q'$ are substantially different from those in the training set.

Let us now state our results more precisely.
 Given a sample $(A,X)\sim \CSBM(n,p,q,\muv,\nuv)$, we say that $(X_i)_{i=1}^n$ is \emph{linearly separable} if there is some unit vector
$\vv$ and scalar $b$ such that
$\inner{X_i,\vv}+b < 0$ for all $i\in C_0$ and $\inner{X_i,\vv} +b > 0$
for all $i\in C_1$, i.e., there is some half-space which correctly classifies the data. We say that $(\Xt_i)_{i=1}^n$ is linearly separable if the same holds for $\tilde{X}$. Let us now define the scaling assumptions under which we work. Define the following quantity:
\[
\Gamma(p,q)=\frac{p-q}{p+q}.
\]
\begin{assumption}\label{assumption:1}
We say that $n$ satisfies \emph{Assumption 1}  if 
\[
\omega(d \log d)\leq n\leq O(\mathrm{poly}(d)).
\]
\end{assumption}
\begin{assumption}\label{assumption:2}
We say that $(p,q)$ satisfies \emph{Assumption 2} if
\[
p,q=\omega(\log^2 (n)/n) \quad\text{ and }\quad\Gamma(p,q)=\Omega(1).
\]
\end{assumption}
\Cref{assumption:1} states that we have at least quasilinearly many samples (\ie, nodes) and at most polynomially many such samples in the dimension of the data.\footnote{The need for the $poly(d)$ upper bound is, heuristically, for the following simple reasons: if $n\sim \exp(Cd)$ for $C$ sufficiently large then the dataset will hit essentially any point in the support of the two Gaussians, even large deviation regions. As such since there should be a large number of points from either community which will lie on the ``wrong'' side of any linear classifier. In particular, our arguments will apply if we relax this assumption to taking $n$ to be subexponential in $d$.} \Cref{assumption:2} states that the CSBM is not too sparse but such that there is a notable difference between the amount of edges within a class as opposed to between different classes. Assumptions of this latter type are similar to those in the stochastic block model literature, see, e.g., \cite{Abbe2018}.

Finally, let $\mathbb{B}^d=\{x\in\R^d:\norm{x}\leq 1\}$ denote the unit ball, let $\Phi(x)$ denote the cumulative distribution function of a standard Gaussian. We then have the following.

\begin{theorem}\label{thm:thresholds}
Suppose that $n$ satisfies \Cref{assumption:1} and that $(p,q)$ satisfies \Cref{assumption:2}. Fix $0<\beta_0,\beta_1\leq 1/2$ and let $\muv,\nuv\in \mathbb{B}^d$. For any $(A,X)\sim \CSBM(n,p,q,\muv,\nuv)$, we have the following:
\begin{enumerate}
    \item For any $K\geq 0$ if $\norm{\muv-\nuv}\leq K/\sqrt{d}$, then there are some $C,c>0$ such that for $d\geq 1$ 
    \[
    \Prob((X_i)_{i\in S} \text{ is linearly separable})\leq C\exp(-cd).
    \]
    Furthermore, for any $t>0$  there is a $c>0$ such that for every $R>0$,
    \[
    \begin{aligned}
     \OPT_d(I,X,R)
    \geq
    2(\beta_0\wedge\beta_1)\Phi\left(-\frac{K}{2}(1+t)\right)\log(2)
    \end{aligned}
    \]
    with probability $1-\exp(-c d)$.
    \item If  $\norm{\muv-\nuv} = \omega(\frac{\log n}{\sqrt{dn(p+q)/2}})$, then 
    \[
    \Prob((\tilde{X}_i)_{i\in S} \text{ is linearly separable}) = 1-o_d(1),
    \]
    where $o_d(1)$ denotes a quantity that converges to $0$ as $d\to\infty$.
    Furthermore, with probability $1-o_d(1)$, we have for all $R>0$
    \[
    \OPT_d(A,X,R) \leq \exp\left(-\frac{R}{2}\Gamma(p,q) \norm{\muv-\nuv}(1-o_d(1))\right).
    \]
    \item Consider a dataset with $N$ CSBM examples drawn independently from the same model, with each example associated with a graph with $n$ nodes. Assume further that $N\log n = \omega(d\log d)$ and that $nN = O(\poly(d))$. Then if the distance between the means $\norm{\muv-\nuv}\leq K/\sqrt{dn(p+q)/2}$ for some constant $K$, then
    \[
    \Prob((\tilde{X}_{k,i})_{i\in S_k} \text{ is linearly separable}\;\forall k\in [N])= o_d(1).
    \]
\end{enumerate}
\end{theorem}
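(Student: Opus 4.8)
The plan is to exploit the independence of the $N$ examples to reduce to a single-graph estimate and then amplify. Since the examples are drawn independently, writing $\rho := \Prob((\tilde{X}_{k,i})_{i\in S_k}\text{ is linearly separable})$, which does not depend on $k$, we have
\[
\Prob\big((\tilde{X}_{k,i})_{i\in S_k}\text{ is linearly separable for all }k\in[N]\big) = \rho^N .
\]
It therefore suffices to prove a per-example bound $\rho\le\rho_0$ for which $\rho_0^N = o_d(1)$, i.e. $N\log(1/\rho_0)\to\infty$. The hypothesis $N\log n = \omega(d\log d)$ is tailored so that a per-example bound of the form $\log(1/\rho_0) = \Omega(\log n)$ already yields $N\log(1/\rho_0) = \Omega(N\log n) = \omega(d\log d)\to\infty$; any stronger per-example bound only helps, and $nN = O(\poly(d))$ will play the role of part~(1)'s polynomial cap.

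For the single-example estimate I would first pass to an effective Gaussian-mixture description of the convolved features, mirroring the reduction used in the first part of \Cref{thm:thresholds}. Conditioning on the block assignments $(\veps_i)$ and on the adjacency matrix, and invoking the degree- and class-size concentration of \Cref{degree-concentration}, the convolved feature $\tilde{X}_i = (D^{-1}\At X)_i$ of a labeled node in $C_0$ has conditional mean $\tfrac{N_0(i)\muv + N_1(i)\nuv}{D_{ii}}$, which concentrates at $\tilde{\muv}_0 := \tfrac{p\muv+q\nuv}{p+q}$, and conditional covariance $\tfrac{1}{D_{ii}d}I \approx \tfrac{1}{d\cdot n(p+q)/2}I$; the analogous statements hold for $C_1$ with mean $\tilde{\muv}_1 := \tfrac{q\muv+p\nuv}{p+q}$. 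Thus the two convolved clusters are separated by $\norm{\tilde{\muv}_0-\tilde{\muv}_1} = \Gamma(p,q)\norm{\muv-\nuv}$ while the per-coordinate noise has scale $1/\sqrt{d\cdot n(p+q)/2}$, so that after rescaling by $\sqrt{n(p+q)/2}$ the labeled features look exactly like the mixture-model data of part~(1) with mean gap at most $\Gamma(p,q)K/\sqrt d\le K/\sqrt d$. In particular the effective signal-to-noise ratio is a constant of order $K$.

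The heart of the argument is the per-example separability probability $\rho$. Unlike part~(1), where the samples are independent, the convolved features within one graph are correlated through shared neighbors: $\Cov(\tilde{X}_i,\tilde{X}_j)\propto (D^{-1}\At\At D^{-1})_{ij}$, which is of order $1/n$ off the diagonal. I would bound $\rho$ by a Cover-type function-counting estimate: for a point configuration in general position in $\R^d$ the number of dichotomies realizable by an affine halfspace is at most $2\sum_{k=0}^{d}\binom{|S|-1}{k}$, so the probability that the particular labeling induced by the blocks is realizable is at most $2^{-|S|+1}\sum_{k=0}^d\binom{|S|-1}{k}$, up to a correction coming from the weak (order-$K$) signal that I would control by a change of measure shifting the two clusters back onto one another. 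Since $|S| = (\beta_0+\beta_1)n = \Theta(n)$ while the ambient dimension is $d$, this yields, in the relevant regime, a bound with $\log(1/\rho)$ at least of order $\log n$ (and an exponentially small $\rho$, hence non-separability already from a single example, once $n\gg d\log d$), which is exactly what the amplification step consumes.

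Combining the two parts gives $\rho^N\le\exp(-\Omega(N\log n)) = \exp(-\omega(d\log d)) = o_d(1)$, as required; here $nN = O(\poly(d))$ lets me union-bound the concentration events of the second step simultaneously over all $nN$ nodes and rules out atypical large-deviation points lying far on the wrong side of a would-be separator, exactly as the $\poly(d)$ cap does in part~(1). The main obstacle I anticipate is the third step: the correlations induced by the graph convolution break the independence underlying both the clean dichotomy count and the Gaussian concentration, so the counting bound must be re-established for the correlated configuration, and the order-$K$ signal correction must be handled uniformly for every constant $K$ rather than only for $K$ small. Controlling correlation and signal together, and matching the resulting per-example rate to the counting hypothesis $N\log n=\omega(d\log d)$, is where the real work lies; the independence amplification and the effective-mixture reduction are comparatively routine.
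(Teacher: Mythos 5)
You have identified the right obstacle, but your plan does not survive it, and the difficulty is not ``deferred work'' --- it sits exactly where your reduction loses the structure of the problem. Factoring $\Prob = \rho^N$ is valid for the event ``each example is separable by its own hyperplane,'' but it then forces you to prove a per-example bound $\rho \le n^{-c}$, and this is precisely what cannot be established (and is most likely false) under the theorem's hypotheses. Take the dense regime $p,q = \Theta(1)$, which \Cref{assumption:2} allows. For a fixed unit direction $\w$, the relevant noise variables are $\inner{Z_i,\w}$ with $Z_i = D_{ii}^{-1}\sum_j \at_{ij} g_j$; two same-class nodes share about $\tfrac n2(p^2+q^2)$ of their roughly $\Delta = \tfrac n2(p+q)$ neighbours, so the within-class family $\{\inner{Z_i,\w}\}_{i\in C_0}$ carries a common Gaussian factor of constant relative variance (essentially the class averages of the $\inner{g_j,\w}$), conditionally on which the variables are nearly independent. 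Separation by this fixed $\w$ happens exactly when that common factor dips to order $-\sqrt{\log n/\Delta}$, an event of probability $n^{-\Theta(1)}$ --- polynomially, \emph{not} exponentially, small, and this is tight; it is also exactly what the paper's Sudakov--Borell computation yields. Since an $\eps$-net of directions has cardinality $e^{\Theta(d\log d)}$ while $n = O(\poly(d))$ makes $n^{-c} = e^{-O(\log d)}$, no union bound over directions can close for a single example; worse, there are $e^{\Theta(d)}$ nearly orthogonal, hence nearly independent, candidate directions each succeeding with probability $n^{-c}$, so one should expect a single convolved example to be separable with probability tending to one in this regime. Your claim of ``an exponentially small $\rho$, hence non-separability already from a single example'' is therefore not a gap to be filled but a statement to be disbelieved. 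The Cover-type count cannot rescue it either: Cover's bound converts into a probability only when the realized dichotomy is (conditionally) independent of the point configuration, whereas here, even after your change of measure sets $\muv=\nuv$, the configuration still encodes the labels through the graph --- same-class points have more strongly correlated noise vectors --- so the block labeling is far from a uniformly random dichotomy. (Separately, the change-of-measure correction costs $e^{\Theta(K^2 n)}$ against a gain of $e^{-\Theta(n)}$, so even in the independent setting of Part 1 it would only handle small $K$, as you partly concede.)

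The paper's proof succeeds because it orders the quantifiers the other way, and that ordering is where the hypothesis $N\log n = \omega(d\log d)$ is actually consumed: what is bounded is the probability that some \emph{common} direction $\w$ (with per-example bias) separates every one of the $N$ examples, i.e., the dataset is handled by a single linear classifier. For fixed $\w$, the per-example failure probability $2n^{-c}$ is raised to the power $N$ using independence \emph{across examples}, and only then is the union over the $e^{O(d\log(d/\eps))}$-point net of directions taken; the product $e^{O(d\log d)}\cdot n^{-cN}$ is $o(1)$ exactly when $N\log n = \omega(d\log d)$. Your own observation that, under your reduction, $N\log n\to\infty$ would already suffice is the diagnostic: the entropy $d\log d$ of the direction net --- the real obstruction --- never enters your argument, because your first step discards the common-classifier structure. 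To repair the proof you must restore it and follow the paper's route: concentration of the class-wise means of $\tilde X$, Sudakov minoration for the correlated process $\{\inner{Z_{k,i},\w}\}$ via the neighbourhood symmetric differences, Borell's inequality for the per-direction bound $n^{-c}$, independence across the $N$ examples, and finally the $\eps$-net over $\w$.
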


Let us briefly discuss the meaning of \Cref{thm:thresholds}. The first part of this theorem shows that if we consider a two-component mixture of Gaussians in $\R^d$ with the same variances but different means, then if the means are $O(1/\sqrt{d})$ apart, it is impossible to linearly separate the data and the minimal loss is order $1$ with high probability.
For the second part we find that the convolved data, $\tilde{X}=D^{-1}\tilde{A}X$, is linearly separable provided the means are a bit more than $\Omega(1/\sqrt{d(n(p+q)/2)})$ apart and furthermore, on this scale the loss decays exponentially in $R\norm{\muv-\nuv}\Gamma$. Consequently, as $n(p+q)/2$ is diverging this regime contains the regime in which the data $(X_i)$ is not linearly separable and logistic regression fails to classify well. We note here that our arguments show that this bound is essentially sharp, provided $R$ is chosen to be at least $\Omega(\sqrt{d (n(p+q))/2)}$. Finally the third part shows that, analogously, the convolved data is not linearly separable below the $1/\sqrt{d n(p+q)/2}$ threshold.

We note here that these results hold here under Assumption 2, and in particular, under the assumption of $\Gamma(p,q)=\Omega(1)$. This is to be compared to the work on community detection for stochastic block models  and CSBMs \cite{abbe2015exact,mossel2015consistency,massoulie2014community,mossel2018proof,DSM18}  where the sharp threshold is at $(p-q)\Gamma(p,q)=1$. Those works, however, are for the (presumably) harder problems of unsupervised learning and hold in a much sparser regime.

Let us now turn to the related question of generalization. Here we are interested in the performance of the optimizer of \eqref{eq:OPT}, call it $(\w^*,b^*)$ on out-of-distribution data and, in particular, we are interested in an upper bound on the loss achieved with respect to new data $(A',X')$.
We find that the graph convolution performs well on any out-of-distribution example. In particular, given that the attributes of the test example are drawn from the same distribution as the attributes of the training sample, the graph convolution makes accurate predictions with high probability even when the graph is sampled from a different distribution. More precisely, we have the following theorem.
\begin{theorem}\label{thm:generalization}
Suppose that $n$ and $n'$ satisfy \Cref{assumption:1}. Suppose furthermore that the pairs $(p,q)$ and $(p',q')$ satisfy \Cref{assumption:2}. Fix $0<\beta_1,\beta_2\leq 1/2$ and $\muv,\nuv\in \mathbb{B}^d$.  
Let $(A,X)\sim\CSBM(n,p,q,\muv,\nuv)$. Let $(\w^*(R),b^*(R))$ be the optimizer of \eqref{eq:OPT}.
Then for any sample $(A',X')\sim\CSBM(n',p',q',\muv,\nuv)$ independent of $(A,X)$, there is a $C>0$ such that with probability $1-o_d(1)$ we have that for all $R>0$
\[
L(A',X',\w^*(R),b^*(R)) \le C\exp\Big(-\frac{R}{2}\norm{\muv-\nuv} \Gamma(p',q')(1 - o(1))\Big)
\]
where the loss \eqref{eq:BCE-loss} is with respect to the full test set $S=[n']$.
\end{theorem}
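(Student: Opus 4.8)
The plan is to exploit a structural feature of the learned classifier: although $(\w^*,b^*)$ is trained on $(A,X)$, the \emph{direction} it must take to make the training loss small, and the \emph{bias} it must take, are governed only by the means $\muv,\nuv$ and not by the particular edge probabilities $p,q$. This is exactly what makes the classifier portable across distributions. Concretely, writing $c(i)$ for the class of node $i$, the conditional mean of a convolved test feature $\Xt'_i$ is, up to graph fluctuations, $\tilde\muv'_0=\frac{p'\muv+q'\nuv}{p'+q'}$ for $i\in C_0$ and $\tilde\muv'_1=\frac{q'\muv+p'\nuv}{p'+q'}$ for $i\in C_1$. Both have the \emph{same} midpoint $\frac{\muv+\nuv}{2}$, independent of $p',q'$, while their difference is $\tilde\muv'_1-\tilde\muv'_0=\Gamma(p',q')(\nuv-\muv)$. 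Thus if $\w^*$ points nearly along $\nuv-\muv$ with norm $R$ and $b^*\approx-\inner{\w^*,\frac{\muv+\nuv}{2}}$, the two test classes are separated at the level of means by margin $\frac{R}{2}\Gamma(p',q')\norm{\muv-\nuv}$, which is precisely the claimed rate.

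First I would characterize $(\w^*,b^*)$ from training optimality. By the achievability bound in part 2 of \Cref{thm:thresholds}, the minimizer satisfies $\OPT_d(A,X,R)\le\exp(-\frac{R}{2}\Gamma(p,q)\norm{\muv-\nuv}(1-o_d(1)))$. Since the training loss is an average over labeled points whose scores $\inner{\Xt_i,\w^*}+b^*$ concentrate (by the degree- and class-size estimates of \Cref{degree-concentration} together with Gaussian concentration of the convolution noise) around $\inner{\tilde\muv_{c(i)},\w^*}+b^*$, a loss this small forces the $C_0$-mean score below $-\frac{R}{2}\Gamma(p,q)\norm{\muv-\nuv}(1-o_d(1))$ and the $C_1$-mean score above its negative, up to an $O(1)$ asymmetry $\frac12\log(\beta_1/\beta_0)$ coming from the unequal class weights. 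Subtracting the two bounds and using $\inner{\tilde\muv_1-\tilde\muv_0,\w^*}=\Gamma(p,q)\inner{\nuv-\muv,\w^*}\le\Gamma(p,q)R\norm{\muv-\nuv}$ forces $\inner{\nuv-\muv,\w^*}\ge R\norm{\muv-\nuv}(1-o_d(1))$; by Cauchy--Schwarz and $\norm{\w^*}\le R$ this pins $\w^*$ to the direction $\nuv-\muv$ with orthogonal remainder of norm $o_d(R)$. Adding the two score bounds pins $b^*=-\inner{\w^*,\frac{\muv+\nuv}{2}}+O(1)$.

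Next I would transfer to the independent test sample. Conditioning on $(\w^*,b^*)$, the test score of node $i$ is $\inner{\Xt'_i,\w^*}+b^*=\inner{\tilde\muv'_{c(i)},\w^*}+b^*+g_i$, where, given the test graph, $g_i$ is centered Gaussian with variance $\norm{\w^*}^2/(dD'_{ii})=R^2/(dD'_{ii})$, and the gap between the graph-conditional mean and $\tilde\muv'_{c(i)}$ is controlled by \Cref{degree-concentration}. Substituting the characterization of $(\w^*,b^*)$ and the identity $\tilde\muv'_{c(i)}-\frac{\muv+\nuv}{2}=\pm\frac12\Gamma(p',q')(\muv-\nuv)$ gives, for $i\in C_0$, a score $-\frac{R}{2}\Gamma(p',q')\norm{\muv-\nuv}(1-o_d(1))+g_i$, and the opposite sign for $i\in C_1$. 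The crucial observation is that both the signal $\frac{R}{2}\Gamma(p',q')\norm{\muv-\nuv}$ and the fluctuation scale $R/\sqrt{dD'}$ are proportional to $R$, so their ratio is independent of $R$; a union bound over the $O(\poly(d))$ test nodes then shows that with probability $1-o_d(1)$ every test node is correctly classified with margin $m_i\ge\frac{R}{2}\Gamma(p',q')\norm{\muv-\nuv}(1-o_d(1))$, uniformly in $R$. Bounding the per-node cross-entropy by $\log(1+e^{-m_i})\le e^{-m_i}$ and averaging over $S=[n']$ then yields the stated bound, with the $O(1)$ bias asymmetry and any vanishing fraction of badly fluctuating nodes absorbed into $C$.

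The main obstacle is the first step: converting the scalar guarantee ``small training loss'' into the geometric statement that $\w^*$ is aligned with $\nuv-\muv$ and $b^*$ sits at the midpoint, with the fluctuation terms controlled \emph{inside} the exponent and \emph{uniformly} over all $R>0$. The delicate point is that one cannot simply take expectations of $e^{s_i}$, since the exponential moment carries a factor $e^{R^2/(2dD')}$ that is dangerous for large $R$; the argument must instead run through high-probability, uniform-in-$R$ control of the margins, which is exactly what the proportionality of both signal and noise to $R$ makes possible. A secondary technical point is to verify that the $O(1)$ bias shift from $\beta_0\neq\beta_1$ and the graph-induced deviations of the convolved means are genuinely of lower order than the margin, so that they can be swept into the constant $C$ and the $(1-o_d(1))$ factor, the latter vanishing in the regime where $\norm{\muv-\nuv}$ exceeds the convolution threshold of part 2 of \Cref{thm:thresholds}.
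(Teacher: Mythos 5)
Your proposal is correct and follows essentially the same route as the paper: it first characterizes the optimizer from training optimality --- small training loss (via the ansatz of \Cref{lem:loss-wt}) plus concentration of the convolved features forces $\w^*(R)$ to align with $\nuv-\muv$ and yields the two score bounds at the convolved training means, which is exactly the paper's \Cref{lem:w*-aligns-with-means} --- and then transfers to the test sample by exploiting that the geometry of the convolved class means (common midpoint $\tfrac{\muv+\nuv}{2}$, difference proportional to $\nuv-\muv$) is independent of the edge probabilities. The only difference is bookkeeping: you pin $b^*$ at the midpoint bias and use the identity $\tilde{\muv}'_{c(i)}-\tfrac{\muv+\nuv}{2}=\pm\tfrac12\Gamma(p',q')(\muv-\nuv)$, whereas the paper retains the inequalities \eqref{eq:m0-train}--\eqref{eq:m1-train} and computes $\inner{m_0'-m_0,\w^*}$ explicitly; the two computations are algebraically equivalent.
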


Let us end by noting here that while we have stated our result for generalization in terms of the binary-cross entropy, our arguments immediately yield that the number of nodes misclassified by the half-space  classifier defined by $(\w^*,b^*)$ must vanish with probability tending to 1.

\subsection{Proof sketch for \texorpdfstring{\Cref{thm:thresholds,thm:generalization}}{Theorems 1 and 2}}
We now briefly sketch the main ideas of the proof of \Cref{thm:thresholds,thm:generalization}. Let us start with the first.
To show that the data $(X_i)_{i=1}^n$ is not linearly separable, we observe that we can decompose the data in the form 
\[
X_i = (1-\varepsilon_i)\muv+\varepsilon_i\nuv + \frac{Z_i}{\sqrt{d}},
\]
where $Z_i\sim N(\zero,I)$ are \iid. The key observation is that when the means are $O(1/\sqrt{d})$ apart
then the intersection of the high probability regions of the two components of the mixture is most of the mass of both, so that no plane can separate the high probability regions.\footnote{We expect that the sharp threshold here is when the distance between the means is $K\sqrt{\log{d}/d}$ for some $K$.} To make this precise, consider the Gaussian processes, $g_i(\vv) = \inner{Z_i,\vv}$. Linear separability can be reduced to showing that for some unit vector $\vv$, either 
the maximum of $g_i(\vv)$ for $i\in S_0$ or the minimum of $g_i(v)$ for $i\in S_1$
is bounded above or below respectively by an order 1 quantity over the entire sphere. This is exponentially unlikely by direct calculation using standard concentration arguments via an $\epsilon-$net argument. In fact, this calculation also shows that for $0<t<\Phi(-K/2)$, every hyperplane misclassifies at least $nt$ of the data points from each class with high probability, which yields the corresponding loss lower bound.

For the convolved data, the key observation is that 
\[
\tilde{X}_i \approx
\begin{cases}
\frac{p \muv +q\nuv}{p+q} + \frac{Z_i}{\sqrt{d D_{ii}}} &i\in C_0\\
\frac{q\muv+p\nuv}{p+q} + \frac{Z_i}{\sqrt{d D_{ii}}} & i\in C_1.
\end{cases}
\]
From this we see that, while the means move closer to each other by a factor of $(p-q)/(p+q)$, the variance has reduced by a factor of
$D_{ii}\approx (n(p+q)/2)^{-1}$.
This lowers the threshold for non-separability by the same factor. Consequently, if the distance between the means is a bit larger than $1/\sqrt{dn(p+q)/2}$ apart then  we can separate the data by the plane through the mid-point of the two means whose normal vector is the direction vector from $\muv$ to $\nuv$ with overwhelming probability. More precisely, it suffices to take as ansatz $(\wt,\bt)$ given by
\[
\wt \propto \nuv -\muv\qquad
\bt = \inner{\muv+\nuv,\wt}/2.
\]
To obtain a training loss upper bound, it suffices to evaluate $L(A,X,\wt,\bt)$. A direct calculation shows that this decays exponentially fast with rate $-R\Gamma\norm{\muv-\nuv}/2$ . 

Let us now turn to \Cref{thm:generalization}. The key point here is to observe that the preceding argument in fact shows two things. Firstly, the optimizer of the training loss, $\w^*$, must be close to this ansatz and the corresponding $b^*$ must be such that the pair $(\w^*,b^*)$ separates the data better than the ansatz. Secondly, the ansatz we chose does not depend on the particular values of $p$ and $q$. As such, it can be shown that $(\wt,\bt)$ performs well on out-of-distribution data corresponding to different values of $p'>q'$. Combining these two observations then shows that $(\w^*,b^*)$ also performs well on the out-of-distribution data.

\section{Experiments}
\label{experiments}

In this section we provide experiments to demonstrate our theoretical results in \Cref{results}. To solve problem~\eqref{eq:OPT} we used CVX, a package for specifying and solving convex programs~\cite{cvx,GB08}. 
Throughout the section we set $R=d$ in~\eqref{eq:OPT} for all our experiments.

\subsection{Training and test loss against distance of means}\label{subsec:train_loss_vs_distance}
In our first experiment we illustrate how the training and test losses scale as the distance between the means increases from nearly zero to $2/\sqrt{d}$. Note that according to Part 1 and Part 3 of \Cref{thm:thresholds}, $1/\sqrt{dn(p+q)}$ and $1/\sqrt{d}$ are the thresholds for the distance between the means, below which the data with and without graph convolution are not linearly separable with high probability, respectively. For this experiment we train and test on CSBM samples with $p=0.5$, $q=0.1$, $d=60$, and $n=400, N=10$ so that $n$ is roughly equal to $0.85\cdot d^{3/2}$,
and each class has $200$ nodes. We present results averaged over $10$ trials for the training data and $10$ trials for the test data. This means that for each value of the distance between the means we have $100$ combinations of train and test data. The results for training loss are shown in \Cref{fig:1a} and the results of the test loss are shown in \Cref{fig:1b}. We observe that graph convolution results in smaller training and test loss when the distance of the means is larger than $\log n / \sqrt{d n(p+q)} \approx 0.035$, which is the threshold such that graph convolution is able to linearly separate the data (Part 2 of \Cref{thm:thresholds}). 
\begin{figure}[ht!]
    \centering
	\begin{subfigure}[t]{3in}
		\centering
		\includegraphics[width=\columnwidth]{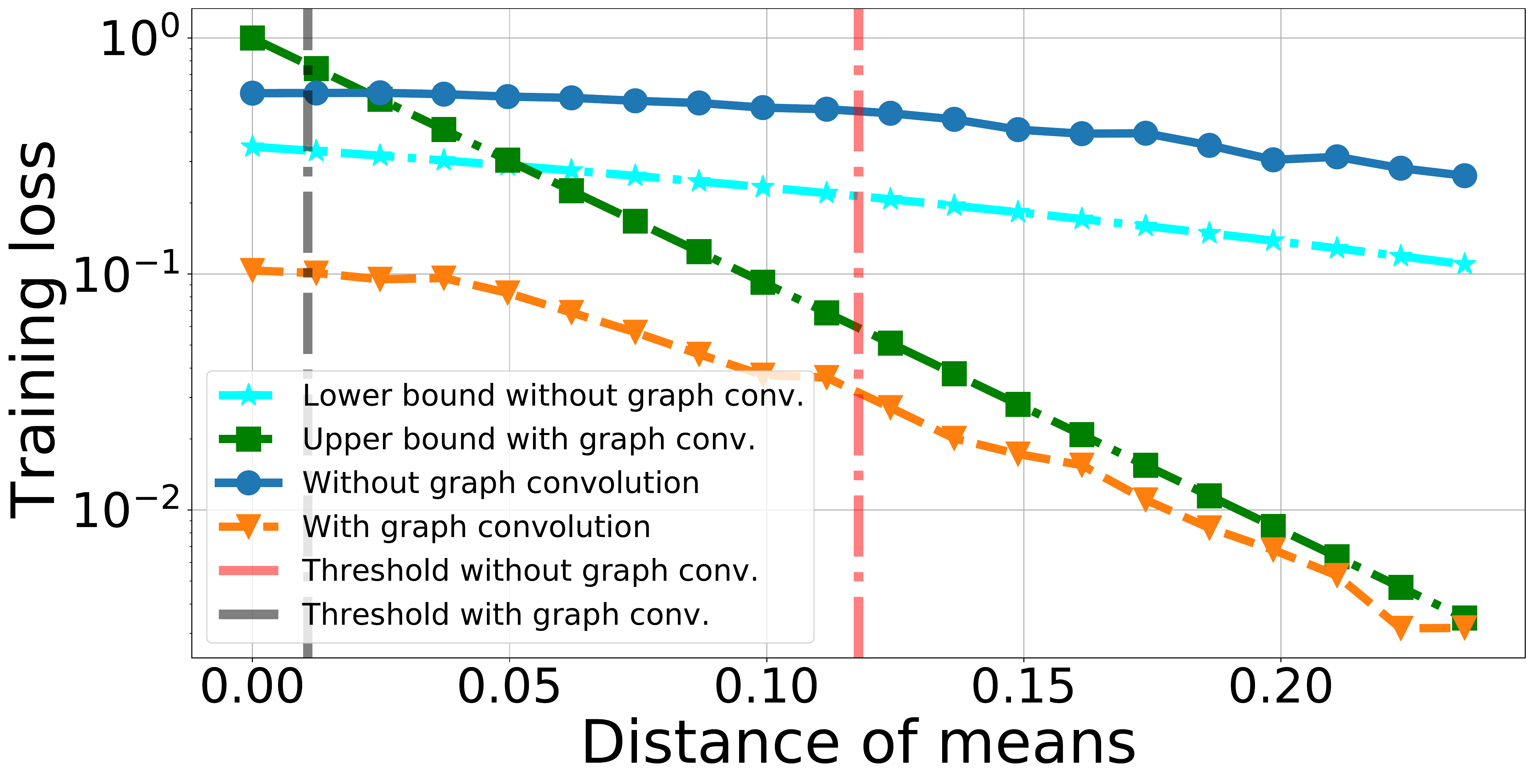}
		\caption{Training loss vs distance of means}\label{fig:1a}		
	\end{subfigure}
	\begin{subfigure}[t]{3in}
		\includegraphics[width=\columnwidth]{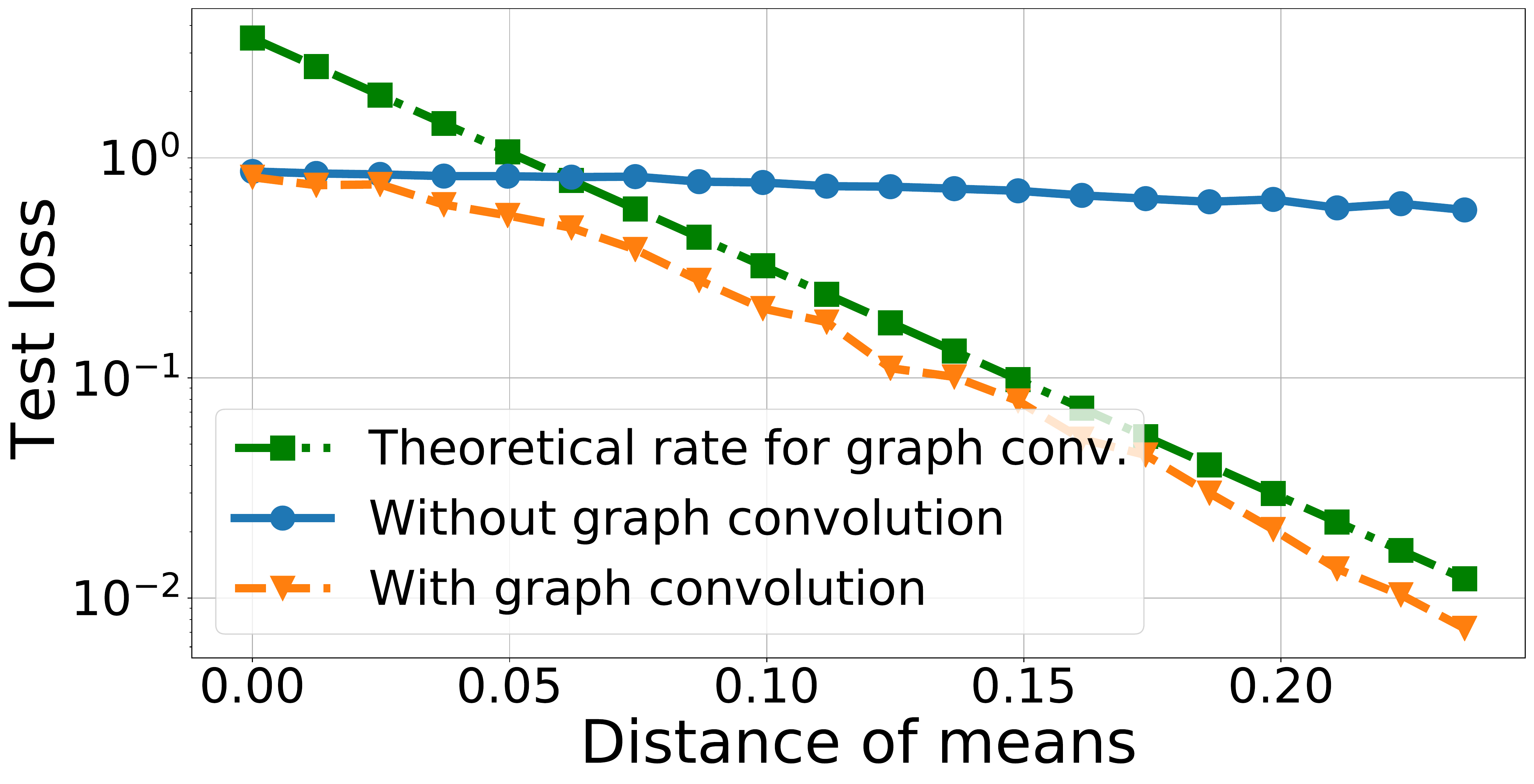}
		\caption{Test loss vs distance of means}\label{fig:1b}
	\end{subfigure}
	\caption{Training and test loss with/without graph convolution for increasing distance between the means. The vertical dashed red and black lines correspond to the separability thresholds from Parts 1 and 3 of \Cref{thm:thresholds}, respectively. The green dashed line with square markers illustrates the theoretical rate from \Cref{thm:generalization}. The cyan dashed line with star markers corresponds to the lower bound from Part 1 of \Cref{thm:thresholds}. We train and test on a CSBM with $p=0.5$, $q=0.1$, $n=400$ and $d=60$. The $y$-axis is in log-scale.}\label{fig:1}
\end{figure}

\subsection{Training and test loss against density of graph}
\label{subsec:train_loss_vs_density}
In our second experiment, we illustrate how the training and test losses scale as the density of the graph increases while maintaining the same signal to noise ratio for the graph. By density we mean the value of the intra- and inter-class edge probabilities $p$ and $q$, since they both control the average degree of each node in the graph. It is important to note that our theoretical results are based on \Cref{assumption:2}, which states lower bounds for $p$, $q$ and $\Gamma(p,q)$. For this experiment we train and test on a CSBM with $q=0.2 p$ where $p$ varies from $1/n$ to $0.5$ and $\Gamma(p,q)\approx 0.6$, $d=60$, $n=400$, and $N=10$, where $n$ is roughly equal to $0.85\cdot d^{3/2}$, and each class has $200$ nodes. For this experiment we set the distance between the means to $2/\sqrt{d}$.
The results for training loss are shown in \Cref{fig:2a} and the results of the test loss are shown in \Cref{fig:2b}. In these figures we observe that the performance of graph convolution improves as density increases. We also observe that for $p,q \le \log^2 n /n$, the performance of graph convolution is as poor as that of standard logistic regression.
\begin{figure}[ht!]
	\centering
	\begin{subfigure}[t]{3in}
		\centering
		\includegraphics[width=\columnwidth]{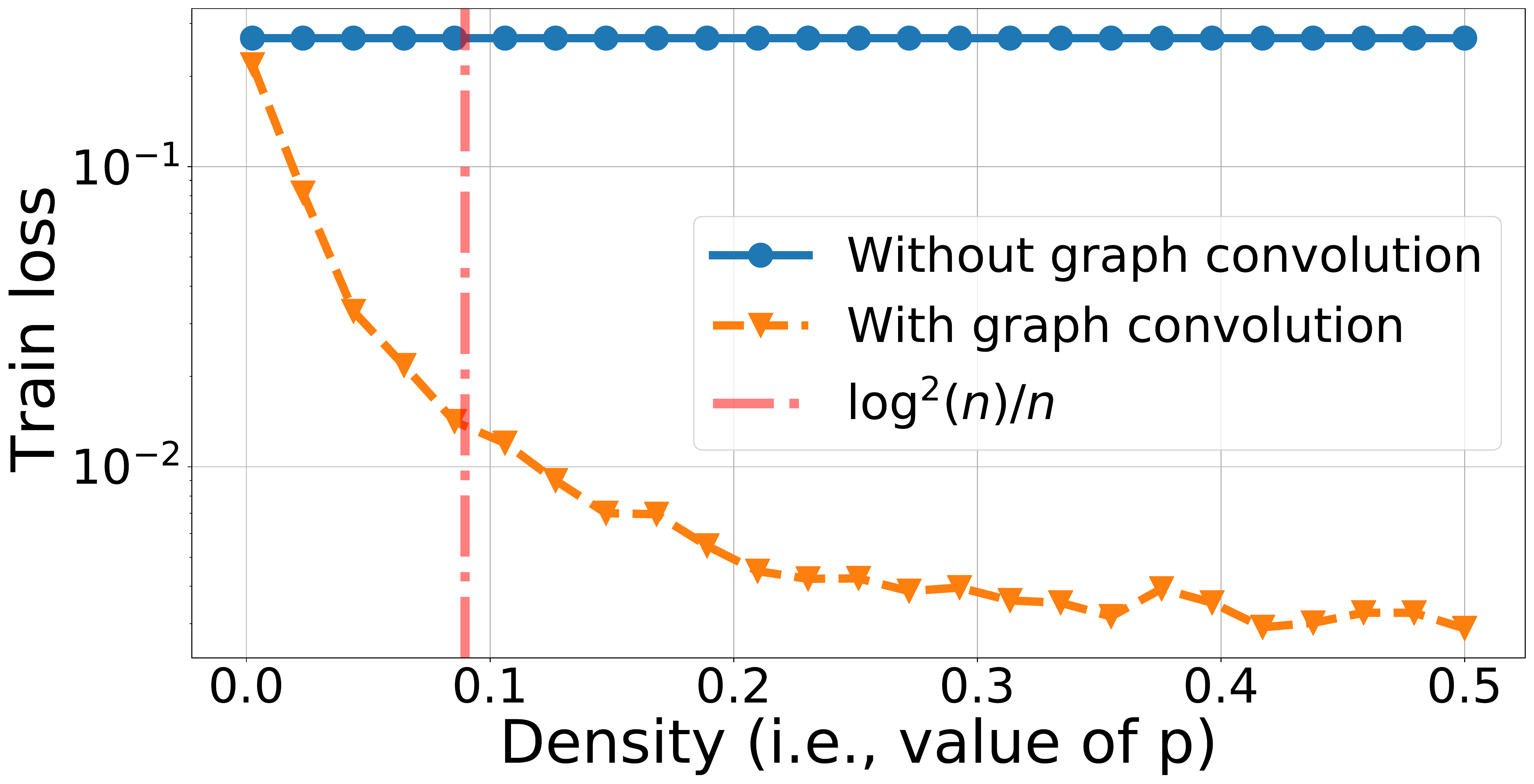}
		\caption{Training loss vs density}\label{fig:2a}		
	\end{subfigure}
	\begin{subfigure}[t]{3in}
		\includegraphics[width=\columnwidth]{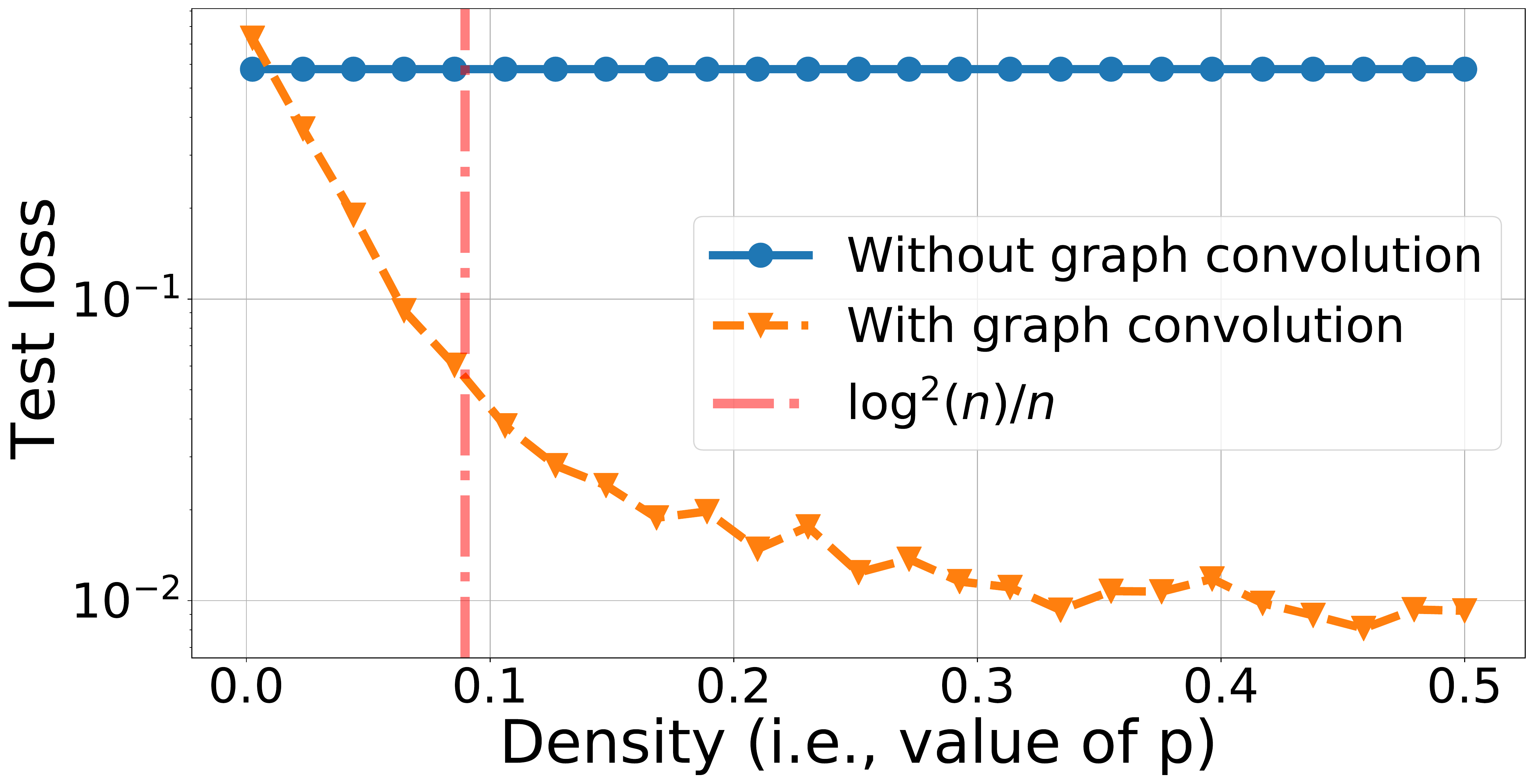}
		\caption{Test loss vs density}\label{fig:2b}
	\end{subfigure}
	\caption{Training and test loss with/without graph convolution for increasing density. The vertical dashed red line corresponds to the lower bound of $p$ and $q$ from \Cref{assumption:2}. See the main text for a detailed description of the experiment's parameters. The $y$-axis is in log-scale.}\label{fig:2}
\end{figure}

\subsection{Out-of-distribution generalization}\label{subsec:ood_synthetic}
In this experiment we test the performance of the trained classifier on out-of-distribution datasets.
We perform this experiment for two different distances between the means, $16/\sqrt{d}$ and $2/\sqrt{d}$. We train on a CSBM with $p_{\train}=0.5$, $q_{\train}=0.1$, $n=400$ and $d=60$, and we test on CSBMs with $n=400$, $d=60$ and varying $p_{\test}$ and $q_{\test}$ while $p_{\test}>q_{\test}$. The results are shown in \Cref{fig:3}\footnote{Note that the x-axis is $q$. Another option, that is more aligned with \Cref{thm:generalization} would be $\Gamma(p_{\test},q_{\test})$. However, the log-scale collapses all lines to one and the result is less visually informative.}. In this figure we observe what was studied in \Cref{thm:generalization} that is, out-of-distribution generalization to CSBMs with the same means but different $p$ and $q$ pairs. In particular, for small distance between the means, i.e., $2/\sqrt{d}$, where the data are close to being not linearly separable with high probability (Part 1 \Cref{thm:thresholds}), \Cref{fig:3a} shows that graph convolution results in much lower test error than not using the graph. This happens even when $q_{\test}$ is close to $p_{\test}$ in the figure, i.e., $\Gamma(p_{\test},q_{\test})$ from the bound in \Cref{thm:generalization} is small. Furthermore, in \Cref{fig:3b}, we observe that for large distance between the means, i.e., $16/\sqrt{d}$, where the data are linearly separable with high probability (Part 1 \Cref{thm:thresholds}), and $q_{\test}$ is much smaller than $p_{\test}$  (i.e., $\Gamma(p_{\test},q_{\test})$ is large), then graph convolution has low test error, and this error is lower than that obtained without using the graph. On the other hand, in this regime for the means, as $q_{\test}$ approaches $p_{\test}$ (i.e, as $\Gamma(p_{\test},q_{\test})$ decreases), the test error increases and eventually it becomes larger than without the graph. 

In summary, we observe that in the difficult regime where the data are close to linearly inseparable, i.e., the means are close but larger than $1/\sqrt{d}$, then graph convolution can be very beneficial. However,
if the data are linearly separable and their means are far apart, then we get good performance without the graph. Furthermore, if $\Gamma(p_{\test},q_{\test})$ is small then the graph convolution can actually result in worse training and test errors than logistic regression on the data alone. In \cref{additional-experiments}, we provide similar plots for various training pairs $p_{\test}$ and $q_{\test}$. We observe similar trends in those experiments.
\begin{figure}[ht!]
	\centering
	\begin{subfigure}[t]{3in}
		\centering
		\includegraphics[width=\columnwidth]{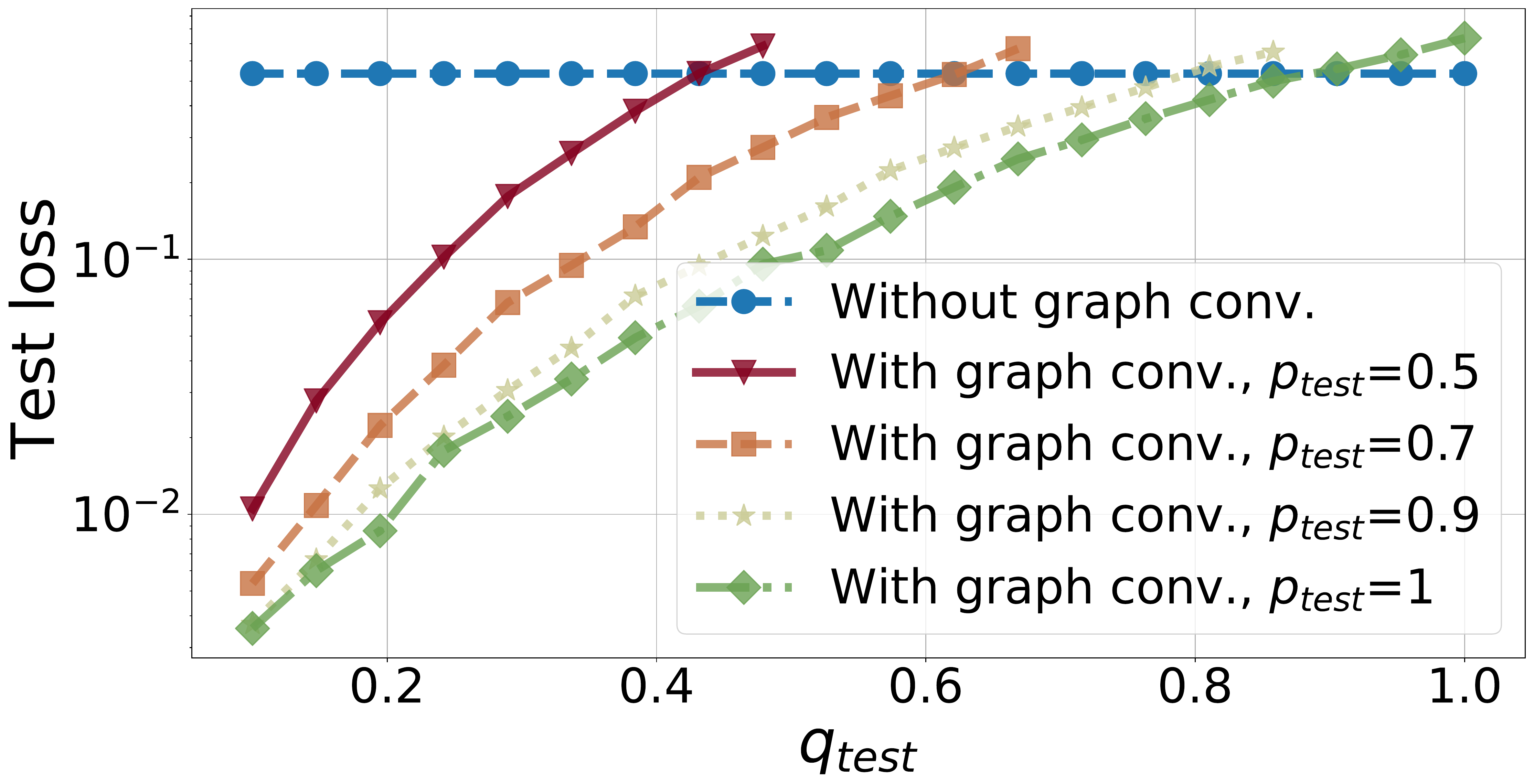}
		\caption{$\norm{\mu_d-\nu_d}=2/\sqrt{d}$}\label{fig:3a}	
	\end{subfigure}
	\begin{subfigure}[t]{3in}
		\includegraphics[width=\columnwidth]{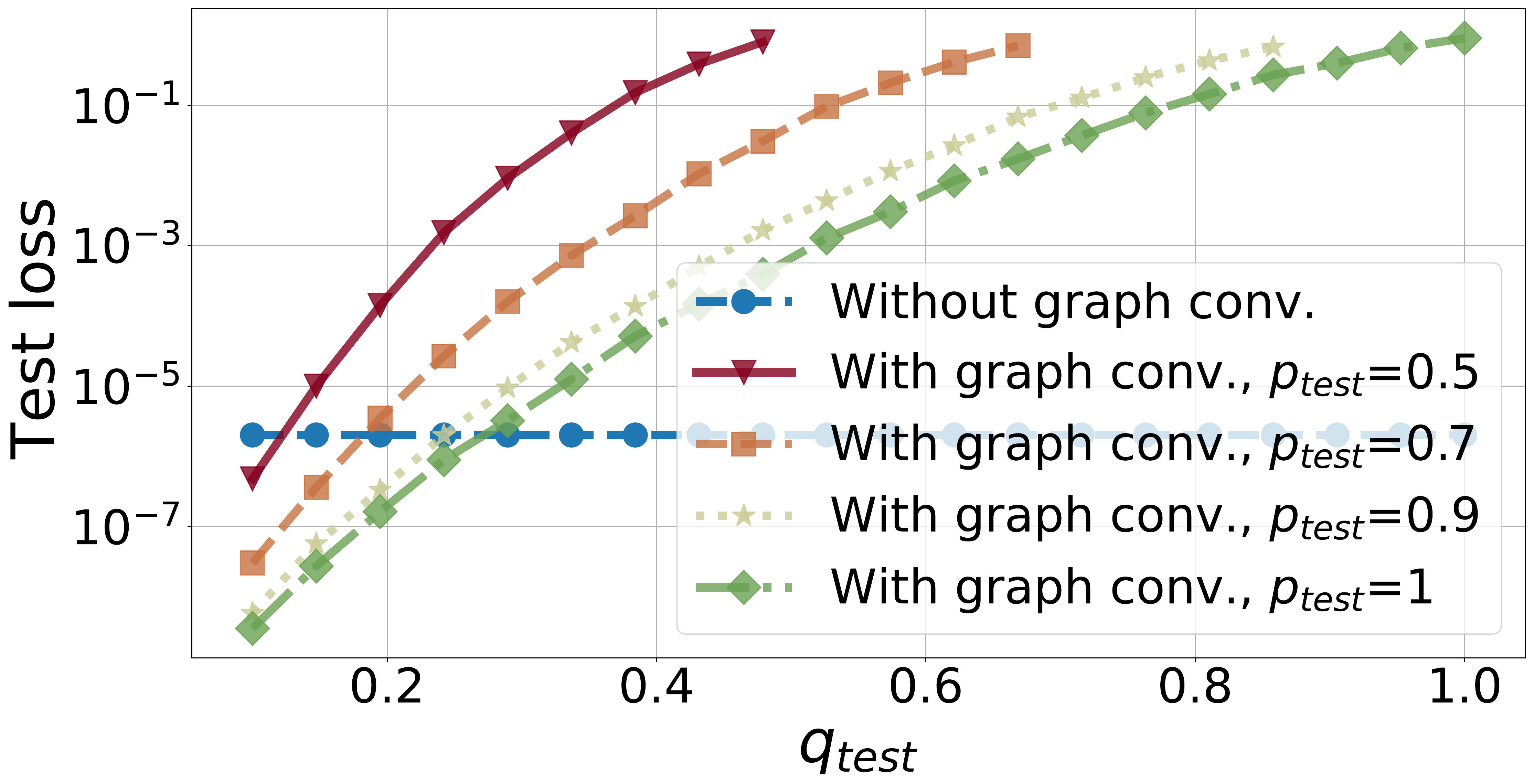}
		\caption{$\norm{\mu_d-\nu_d}=16/\sqrt{d}$}\label{fig:3b}
	\end{subfigure}
	\caption{Out-of-distribution generalization. We train on a CSBM with $p_{\train}=0.5$, $q_{\train}=0.1$, $n=400$ and $d=60$. We test on CSBMs with $n=400$, $d=60$ and varying $p_{\test}$ and $q_{\test}$ while $p_{\test}>q_{\test}$ and fixed means. The $y$-axis is in log-scale.}\label{fig:3}
\end{figure}

\subsection{Out-of-distribution generalization on real data}\label{subsec:ood_real}
In this experiment we illustrate the generalization performance on real data for the linear classifier obtained by solving~\eqref{eq:OPT}. In particular, we use the partially labelled real data to train two linear classifiers, with and without graph convolution. We generate new graphs by adding inter-class edges uniformly at random. Then we test the performance of the trained classifiers on the noisy graphs with the original attributes. Therefore, the only thing that changes in the new unseen data are the graphs, the attributes remain the same.
Note that our goal in this experiment is not to beat current baselines, but rather to demonstrate out-of-distribution generalization for real data when we use graph convolution. 

We use the popular real data Cora, PubMed and WikipediaNetwork. These data are publicly available and can be downloaded from~\cite{FL2019}. The datasets come with multiple classes, however, for each of our experiments we do a one-v.s.-all  classification for a single class. WikipediaNetwork comes with multiple masks for the labels, in our experiments we use the first mask. Moreover, this is a semi-supervised problem, meaning that only a fraction of the training nodes have labels. Details about the datasets are given in \Cref{table:1}. 
\begin{table}[ht!]
\caption{Information about the datasets, $\beta_0$ and $\beta_1$ are defined in \Cref{model}. Note that for each dataset we only consider classes $A$ and $B$ and we perform linear classification in a one-v.s.-all fashion. Here, $A$ and $B$ refer to the original classes of the dataset. Results for other classes are given in \cref{additional-experiments}.} \vspace{0.1cm}
\centering
 \begin{tabular}{||c c c c||} 
 \hline
 Info./Dataset & Cora & PubMed & Wiki.Net. \\ [0.5ex] 
 \hline\hline
 \# nodes                        & $2708$ & $19717$ & $2277$ \\ 
 \hline
 \# attributes                     & 1433  & $500$ & $2325$ \\
 \hline
 $\beta_0$, class $A$ & $5.0$e$-2$ & $2.5$e$-3$ & $4.7$e$-1$ \\
 \hline
 $\beta_1$, class $A$ & $5.6$e$-2$ & $4.8$e$-3$ & $4.9$e$-1$ \\
 \hline
 $\beta_0$, class $B$ & $4.8$e$-2$  & $3.3$e$-3$ & $4.7$e$-1$  \\
 \hline
 $\beta_1$, class $B$ & $9.2$e$-2$  & $2.5$e$-3$ & $4.7$e$-1$  \\
 \hline
 $\norm{\muv-\nuv}$, class $A$   & $7.0$e$-1$ & $1.0$e$-1$  & $3.6$e$-1$ \\
 \hline
 $\norm{\muv-\nuv}$, class $B$   & $9.4$e$-1$ & $7.2$e$-2$ & $3.0$e$-1$ \\ [1ex] 
 \hline
\end{tabular}
\label{table:1}
\end{table}

The results for this experiments are presented in \Cref{fig:4}.
We present results for classes $A$ and $B$ for each dataset. This set of experiments is enough to demonstrate good and bad performance when using graph convolution. The results for the rest of the classes are presented in \Cref{additional-experiments}. The performance for other classes is similar. Note in the plots that in this figure the y-axis (Test error) measures the number of misclassified nodes\footnote{The reason that we do not use the loss in the y-axis is because the test loss did not differ much between using and not using graph convolution. However, the misclassified nodes had large differences as shown in \Cref{fig:4}. As noted after \Cref{thm:generalization}, our argument for the bound on the loss immediately yields a bound on the number of misclassified nodes. } over the number of nodes in the graph. In all sub-figures in \Cref{fig:4} except for \Cref{fig:4c} we observe that graph convolution has lower test error than without the graph convolution. However, as we add inter-class edges (noise increases), then graph convolution can be disadvantageous. Also, there can be cases like in \Cref{fig:4c} where graph convolution is disadvantageous for any level of noise. Interestingly, in the experiment in \Cref{fig:4c} the test errors with and without graph convolution are low (roughly $\sim 0.080$). This seems to imply that the dataset is close to being linearly separable with respect to the given labels. However, the dataset seems to be nearly non-separable after the graph convolution, since adding noise to the graph results in larger test error.
\begin{figure}[ht!]
	\centering
	\begin{subfigure}[t]{3in}
		\centering
		\includegraphics[width=\columnwidth]{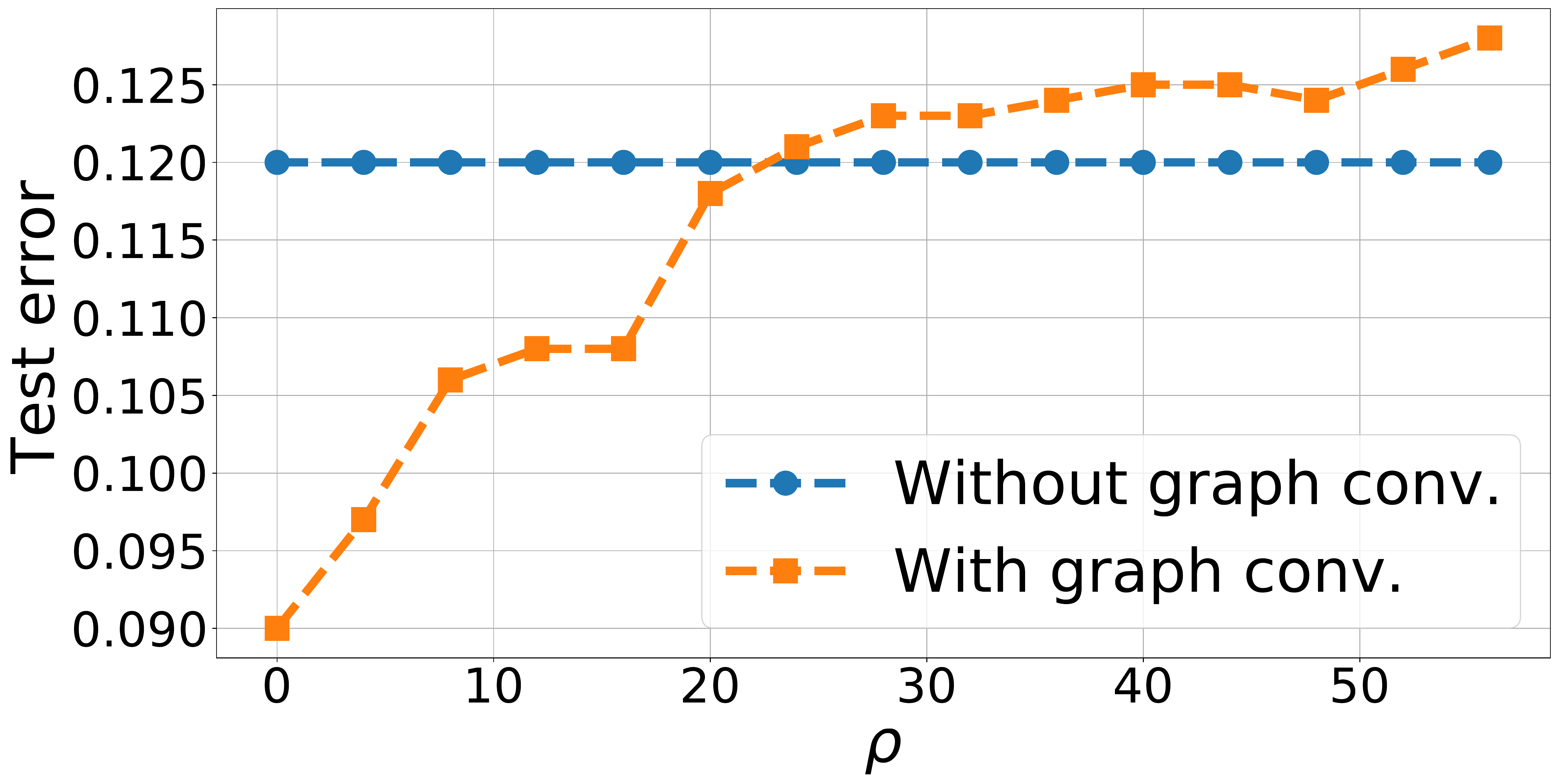}
		\caption{Cora, class $A$}\label{fig:4a}
	\end{subfigure}
	\begin{subfigure}[t]{3in}
		\centering
		\includegraphics[width=\columnwidth]{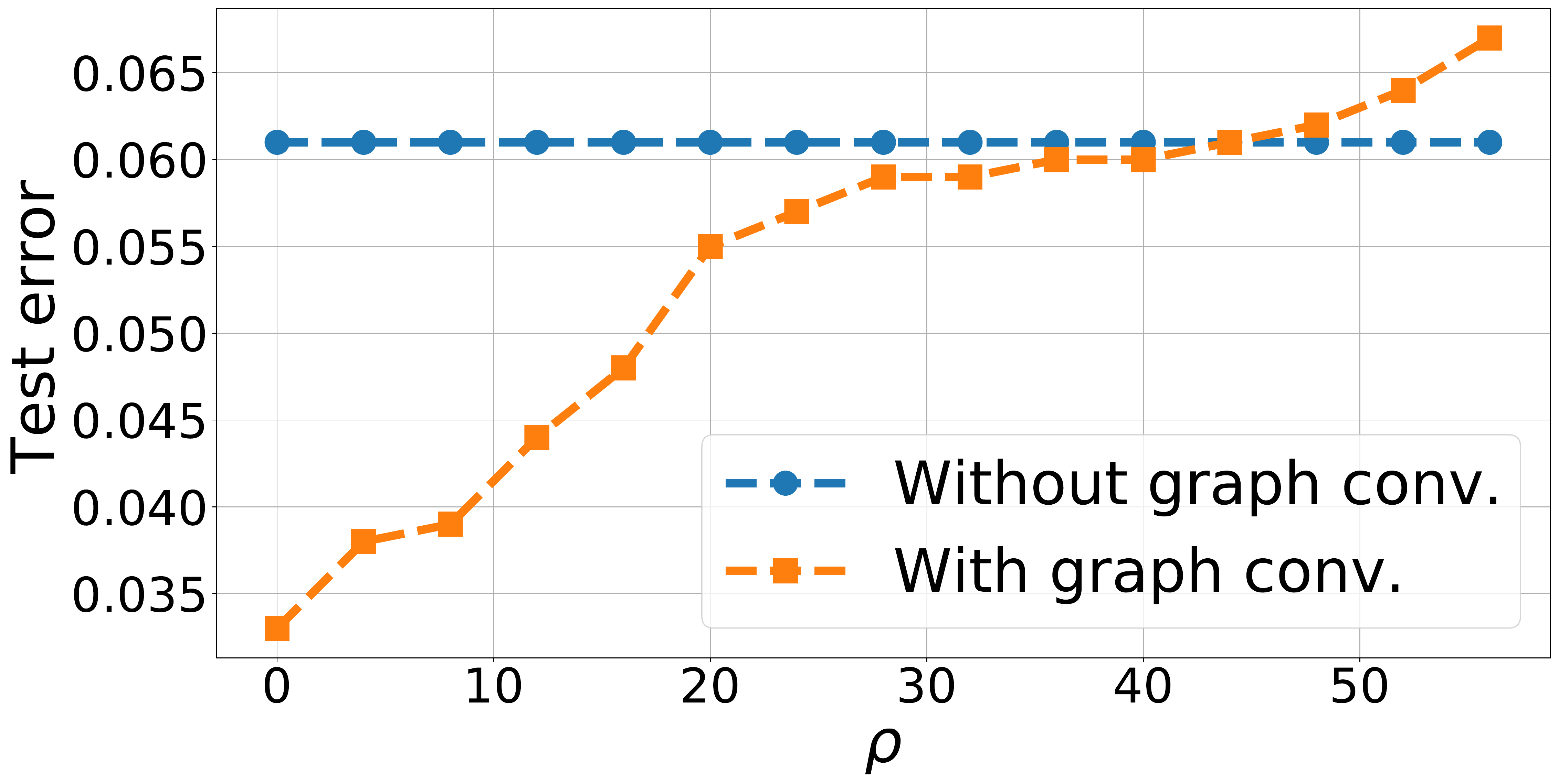}
		\caption{Cora, class $B$}\label{fig:4b}
	\end{subfigure}
	\\\vspace{0.2cm}
	\begin{subfigure}[t]{3in}
		\centering
		\includegraphics[width=\columnwidth]{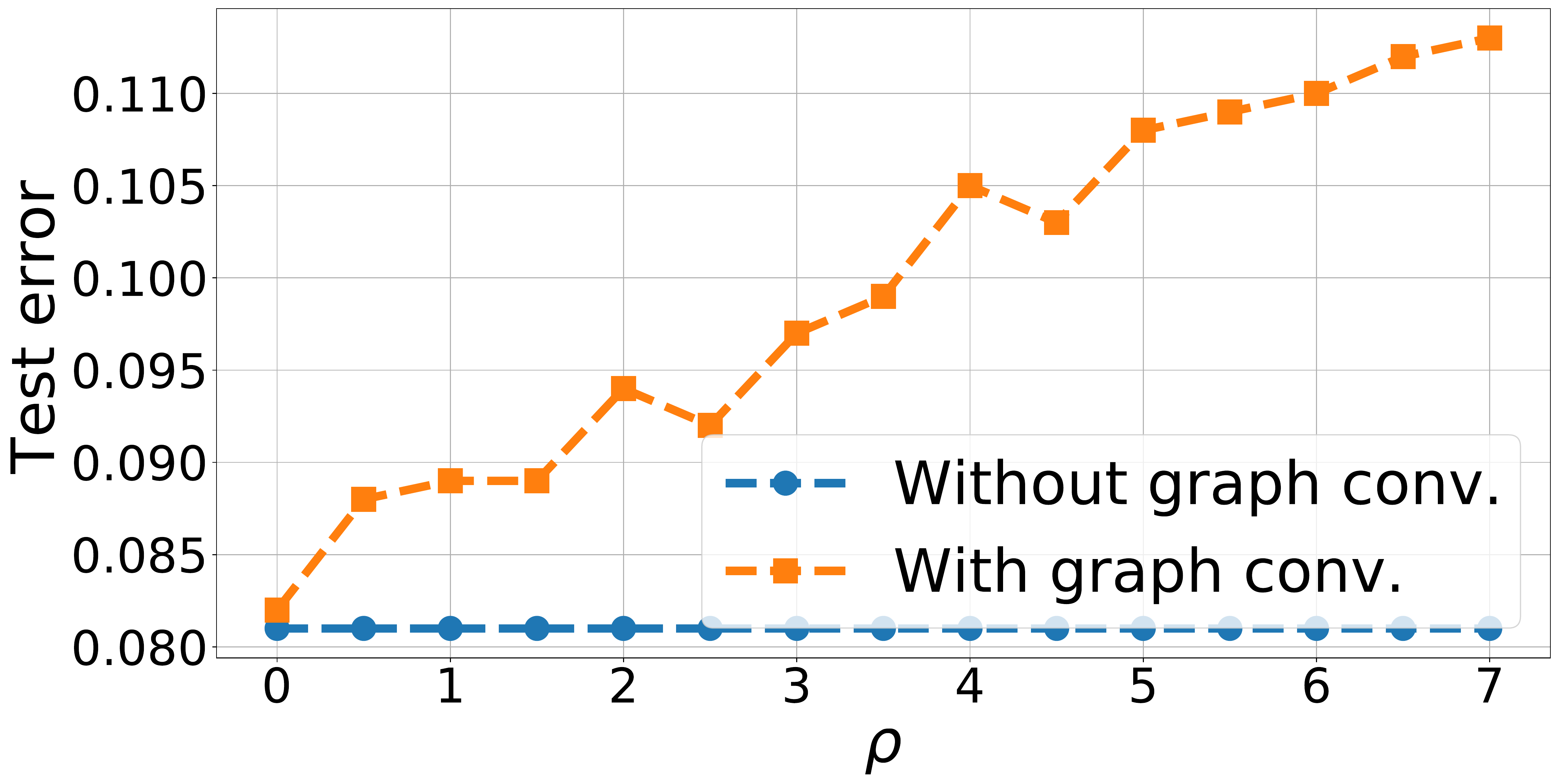}
		\caption{PubMed, class $A$}\label{fig:4c}
	\end{subfigure}
	\begin{subfigure}[t]{3in}
		\centering
		\includegraphics[width=\columnwidth]{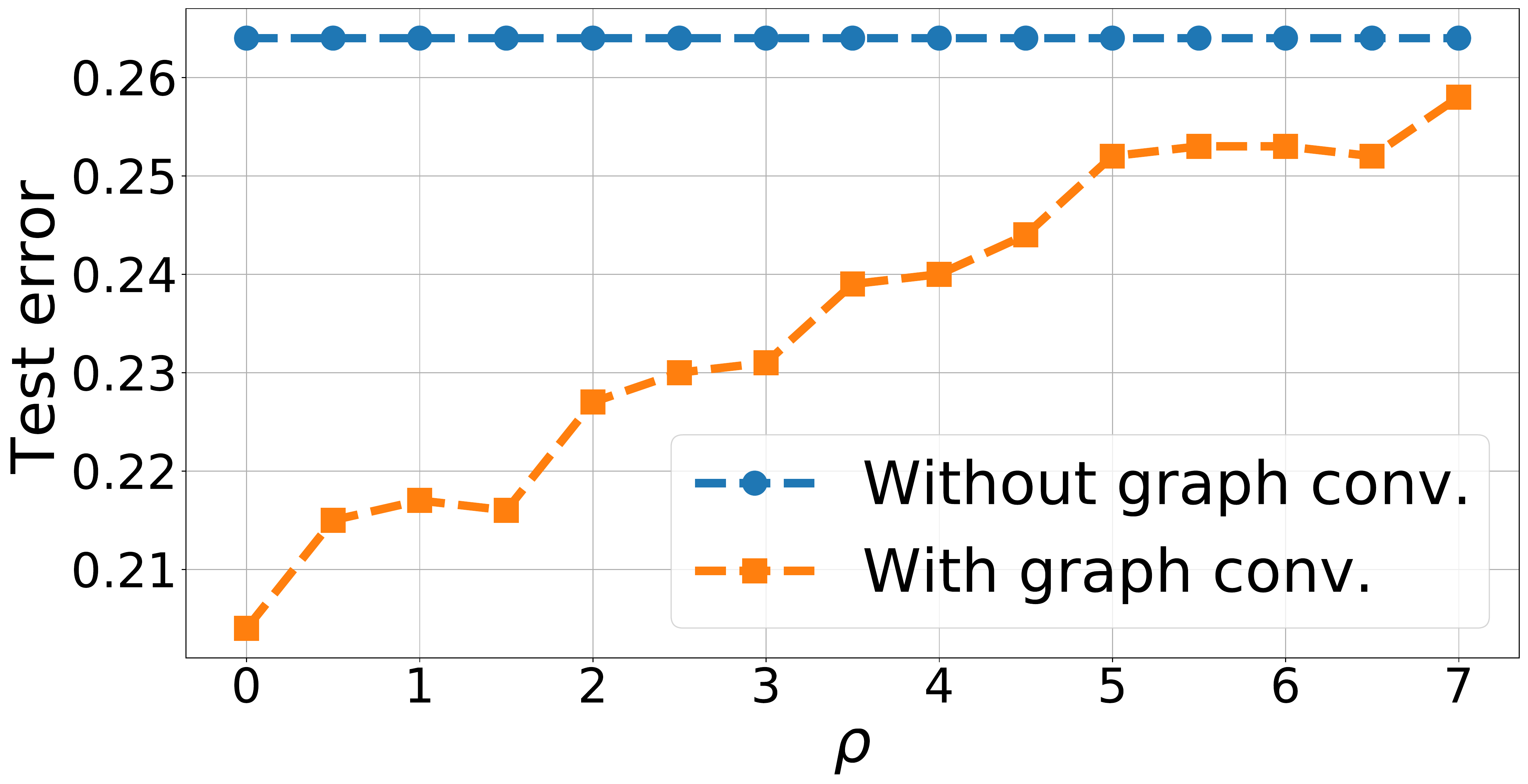}
		\caption{PubMed, class $B$}\label{fig:4d}
	\end{subfigure}
	\\\vspace{0.2cm}
	\begin{subfigure}[t]{3in}
		\centering
		\includegraphics[width=\columnwidth]{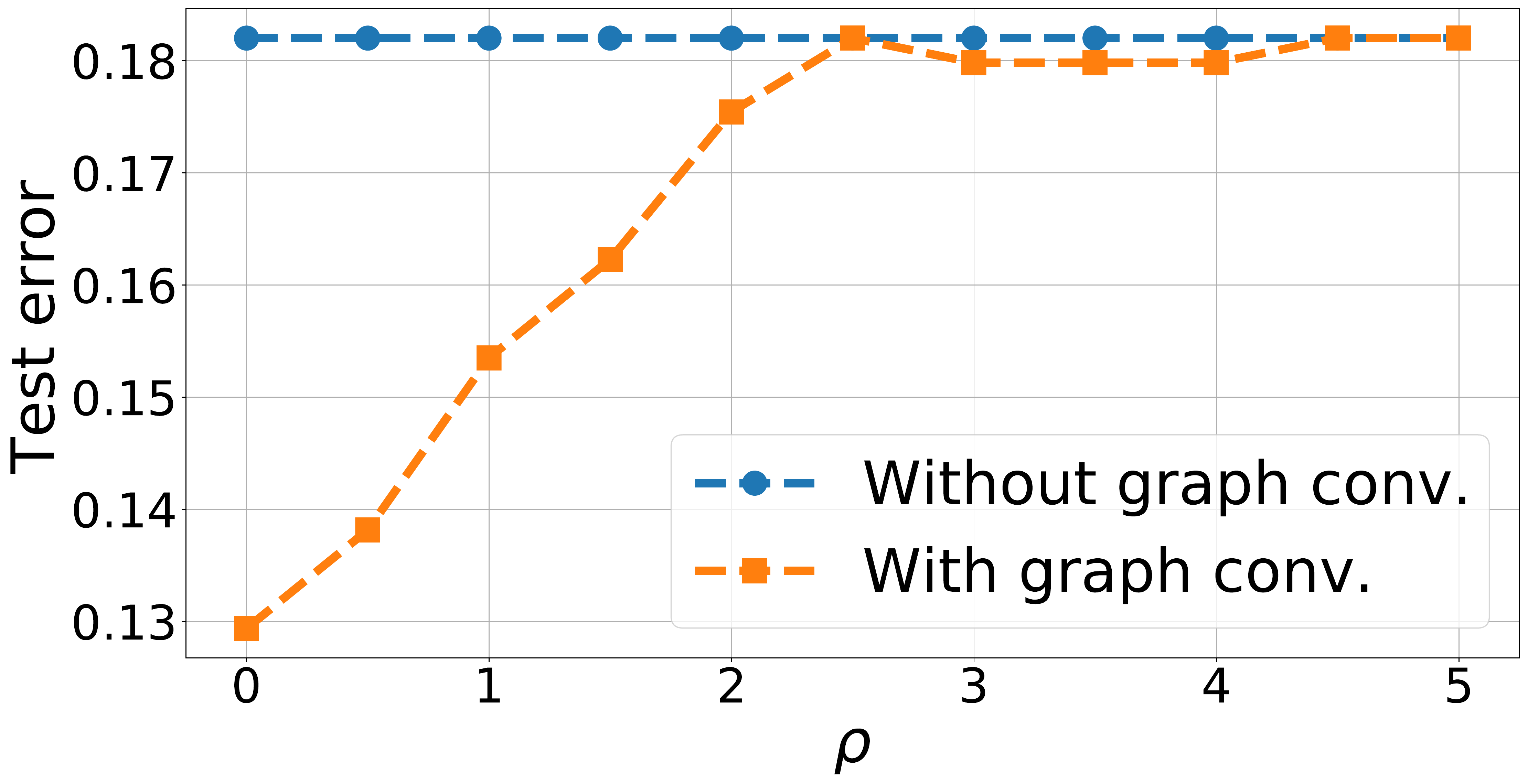}
		\caption{WikipediaNetwork, class $A$}\label{fig:4e}
	\end{subfigure}
	\begin{subfigure}[t]{3in}
		\centering
		\includegraphics[width=\columnwidth]{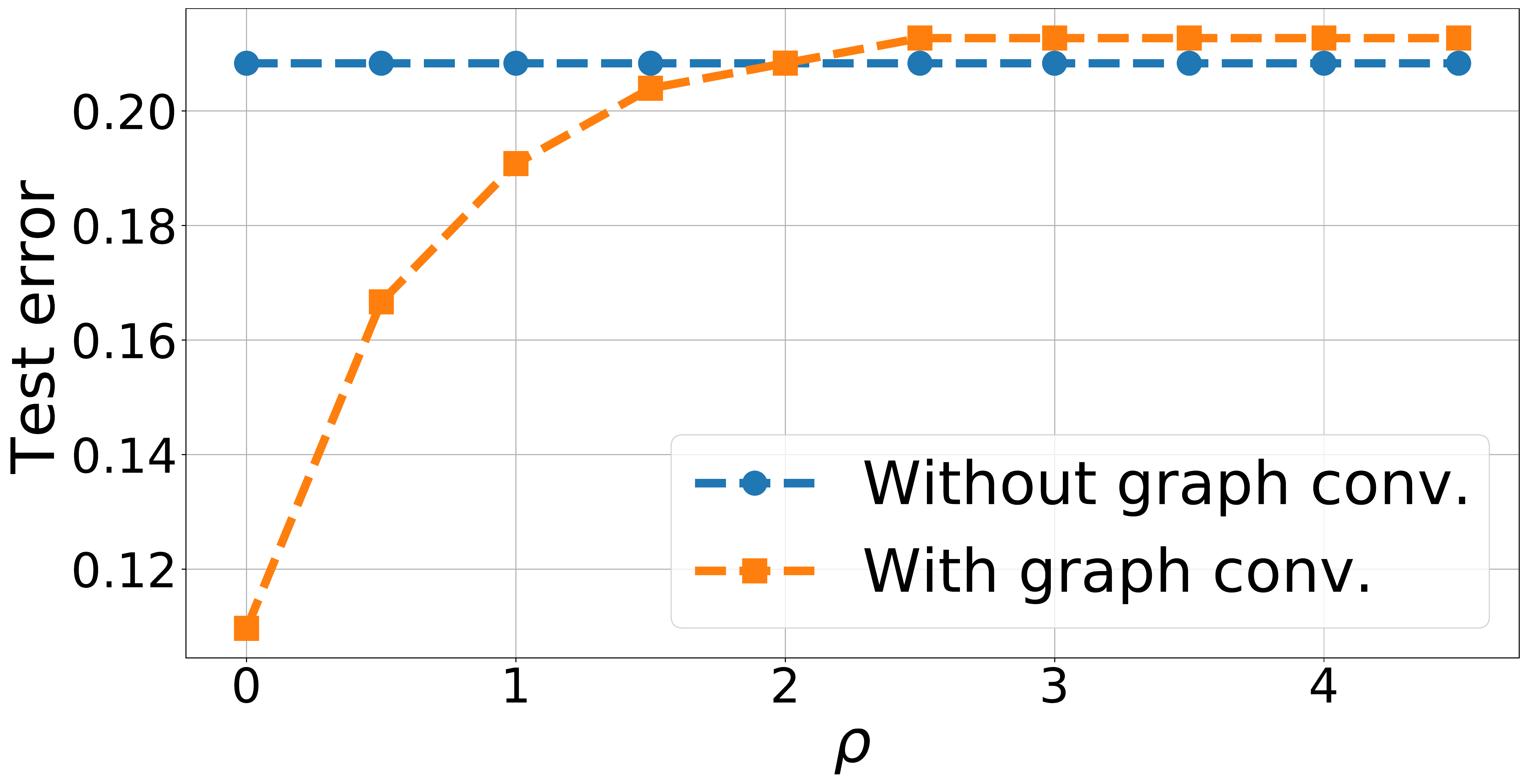}
		\caption{WikipediaNetwork, class $B$}\label{fig:4f}
	\end{subfigure}
	\caption{Test loss as the number of nodes increases. The test error measures the number of misclassified nodes over the number of nodes in the graph. Here, $\rho$ denotes the ratio of added inter-class edges over the number of inter-class edges of the original graph. The $y$-axis is in log-scale.}\label{fig:4}
\end{figure}

\section{Degree concentration}
\label{degree-concentration}
We note here the following elementary concentration results for the class size and degrees, which are all straightforward consequences of the Chernoff bound for sums of independent Bernoulli random variables, see, e.g., \cite[Theorem 2.3.1]{Vershynin:2018}.

Since $(\veps_i)_{i\in[n]}\sim\Ber(\frac12)$, by the Chernoff bound we have for any $\delta>0$ that
\[
\Prob\Big(\Big|\frac1n\sum_{i=1}^n\varepsilon_i - \frac12\Big| \ge \delta/2\Big) \le 2\exp(-n\delta^2/6).
\]
In particular, we have that for any $\delta>0$ the number of nodes in each class satisfies
\begin{equation}\label{eq:class-size-concentration}
    \Prob\left(\frac{\abs{C_0}}{n},\frac{\abs{C_1}}{n}\in\left[\frac{1}{2}-\delta,\frac{1}{2}+\delta\right]\right)\geq1-C\exp(-cn\delta^2),
\end{equation}
for some $C,c>0$.

The degrees are sums of Bernoulli random variables. Hence, by the Chernoff bound, for each $i$, we have for $\delta\in(0,1)$ that
\begin{align*}
    \Prob\left(\Big|D_{ii} - \E[D_{ii}]\Big| \ge \delta\E[D_{ii}]\right) \le 2\exp(-\E[D_{ii}]\delta^2/3),
\end{align*}
where for any $i$,
\[
\E[D_{ii}] = \frac12(\E[D_{ii}\mid\veps_i=0] + \E[D_{ii}\mid\veps_i=1]) = 1 + \frac{n-1}2(p+q).
\]
In particular, it follows that for any $\delta\in(0,1)$ we have
\begin{equation}\label{eq:degree-concentration}
\Prob\left(\frac{D_{ii}}n \in \left[\frac12(p+q)(1-\delta), \frac12(p+q)(1+\delta)\right]^\setc\right)\leq C\exp(-cn(p+q)\delta^2),
\end{equation}
for some $C,c>0$. As we will frequently work on the event that the degrees and class sizes concentrate, for fixed $\delta,\delta'>0$ we define the event 
\begin{equation}\label{eq:B-def}
B(\delta,\delta') = 
\left\{\frac{n}{2}(1-\delta)\leq |C_0|,|C_1|\leq \frac{n}{2}(1+\delta)\right\}
\bigcap_{i\in[n]}
\left\{\frac n2(p+q)(1-\delta')\leq D_{ii}\leq \frac n2(p+q)(1+\delta')\right\}.
\end{equation}

Since $p,q=\omega(\frac{\log^2 n}{n})$, by the union bound, if we choose $\delta=n^{-1/2+\eps}$ and $\delta'=(\log n)^{-1/2+\eps}$ then for $\eps>0$ small enough, for any $c>1$ there is  $C>0$ such that
\begin{equation}\label{eq:B-probability}
\Prob(B(n^{-1/2+\eps},(\log n)^{-1/2+\eps})) \ge 1 - \frac{C}{n^c}.
\end{equation}
Let $N_i$ denote those vertices connected to $i$ (including $i$), and let 
\[
\begin{aligned}
\tilde{B}(\delta,\delta')&=B(\delta,\delta')\bigcap_{i\in[n]}
\left\{\frac{(1-\veps_i)p+\veps_i q}{p+q}(1-\delta')\leq \frac{\abs{C_0\cap N_i}}{D_{ii}}\leq \frac{(1-\veps_i)p+\veps_i q}{p+q}(1+\delta')\right\}\\
&\bigcap_{i\in[n]}
\left\{\frac{\veps_i p+(1-\veps_i) q}{p+q}(1-\delta')\leq \frac{\abs{C_1\cap N_i}}{D_{ii}}\leq \frac{\veps_i p+(1-\veps_i) q}{p+q}(1+\delta')\right\}
\end{aligned}
\]
by similar reasoning, a Chernoff bound and union bound yields, for  $\eps>0$ small enough, we have that for any $c>0$, and some $C>0$,
\begin{equation}\label{eq:B-prob-with-ngbd}
 \Prob\left(\tilde{B}(n^{-1/2+\eps},(\log n)^{-1/2+\eps})\right)
\geq 1-\frac{C}{n^c}.
\end{equation}

\section{Separability thresholds}
\label{separability-thresholds}
In this section, we prove \Cref{thm:thresholds}. We begin by first proving a bound on a certain Gaussian process. We then develop concentration bounds for the convolved data. We end the section by proving the three parts of the theorem in turn.
\subsection{Bounds for the isonormal process}
Consider the Gaussian process,  $g(\vv)=\inner{Z,\vv}$ for all $\vv\in\R^d$, for some standard Gaussian vector $Z\sim N(\zero,I)$. The process $g$ is sometimes called the \emph{isonormal process} or \emph{canonical Gaussian process}. Controlling its behaviour will be an essential step in showing that the mixture model or CSBM data is not linearly separable below a certain threshold.  Let $g_i(\vv)$ denote \iid\ copies of this process and Define the events
\[
\begin{aligned}
  A_{k,\gamma,n}(\vv) &=\{\exists J\subseteq [n]:
  g_i(\vv) > \gamma \text{ for } i\in J,\quad
  |J| = k\},\\
  \tilde{A}_{k,\gamma,n}(\vv) &= \{\exists J\subseteq [n]:
  g_i(\vv) > \gamma \text{ for } i\in J,
  g_i(\vv)<\gamma  \text{ for } i\in J^c \quad
  |J| = k\}
\end{aligned}
\]
Observe that for each $k$, $A_{k,\gamma,n}(\vv)$ is the event that at $\vv$ the $k$-th largest of the $g_i(\vv)$ is large and the tilded version is the event that this occurs and the remaining $g_i(\vv)$ are all small. 
Let $$H(x)=-x \log x -(1-x)\log(1-x).$$
Finally for $\eps>0$, let $\Sigma_{\eps,d}$ denote an $\eps$-net of the unit sphere, $\bS^{d-1}$. We begin by showing the following result about the isonormal process.

\begin{lemma}\label{lem:isonormal-process}
Suppose that $n$ satisfies \Cref{assumption:1}. Then for any $\gamma>0$ and $0 < t < 1-\Phi(\gamma)$, there is a $C>0$ such that for $d\geq 1$,
\begin{equation}
     \frac{1}{d}\log \Prob\Big(\bigcup_{\vv\in \bS^{d-1}}A_{\floor{tn},\gamma,n}(\vv)^c\Big) \leq -C, \label{eq:union-bound-separable-1}
\end{equation}
and for any $0<t\leq 1$, there is a $C>0$ such that for $d\geq 1$
\begin{equation}
     \frac{1}{d}\log \Prob\Big(\bigcup_{\vv\in \bS^{d-1}}A_{1,\gamma,\floor{tn}}(\vv)^c\Big) \leq -C. \label{eq:union-bound-separable-2}
\end{equation}
\end{lemma}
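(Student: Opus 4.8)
The plan is to combine a per-direction concentration estimate with an $\eps$-net argument over the sphere. Fix a unit vector $\vv$. Since $g_i(\vv)=\inner{Z_i,\vv}$ are \iid\ $\Nc(0,1)$, the count $N_\gamma(\vv)=\#\{i:g_i(\vv)>\gamma\}$ is $\mathrm{Binomial}(n,1-\Phi(\gamma))$, and $A_{\floor{tn},\gamma,n}(\vv)^c=\{N_\gamma(\vv)<\floor{tn}\}$ is a lower-tail event. Because $t<1-\Phi(\gamma)$, this is a genuine deviation below the mean, so the Chernoff bound for the binomial lower tail gives
\[
\Prob\big(A_{\floor{tn},\gamma,n}(\vv)^c\big)\leq \exp\big(-n\,D(t\,\|\,1-\Phi(\gamma))\big),
\]
where $D(t\|p)=t\log\tfrac tp+(1-t)\log\tfrac{1-t}{1-p}=-H(t)-t\log p-(1-t)\log(1-p)$ is strictly positive for $t\neq p$. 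For the second display, $A_{1,\gamma,\floor{tn}}(\vv)^c$ is simply the event that all $\floor{tn}$ copies lie below $\gamma$, whose probability is exactly $\Phi(\gamma)^{\floor{tn}}$; since $\Phi(\gamma)<1$ this decays like $e^{-c_0 n}$ with no constraint on $t$. In both cases the per-direction probability is at most $e^{-c_0 n}$ for a constant $c_0>0$.

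To upgrade to a uniform bound over the sphere I would pass to the net $\Sigma_{\eps,d}$, of cardinality at most $(3/\eps)^d$, and transfer the event at an arbitrary $\vv$ to the nearest net point $\vv'$ with $\norm{\vv-\vv'}\leq\eps$. Since $|g_i(\vv)-g_i(\vv')|=|\inner{Z_i,\vv-\vv'}|\leq \norm{Z_i}\,\eps$, on the event $E_M=\{\max_{i\in[n]}\norm{Z_i}\leq M\}$ every comparison $g_i(\vv)\leq\gamma$ propagates to $g_i(\vv')\leq\gamma+M\eps=:\gamma'$. Consequently, on $E_M$,
\[
\bigcup_{\vv\in\bS^{d-1}}A_{\floor{tn},\gamma,n}(\vv)^c \;\subseteq\; \bigcup_{\vv'\in\Sigma_{\eps,d}}A_{\floor{tn},\gamma',n}(\vv')^c,
\]
and similarly for the second event. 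Because $\norm{Z_i}^2\sim\chi^2_d$ concentrates at scale $d$, I would take $M=\sqrt{2d}$, so that $\Prob(E_M^c)\leq n\,e^{-cd}$, and then fix $\eps=\eta/\sqrt d$ for a small constant $\eta$, which keeps the inflation $M\eps=\sqrt2\,\eta$ as small as desired while giving $|\Sigma_{\eps,d}|\leq\exp\!\big(\tfrac d2\log d\,(1+o(1))\big)$. For the first display I would then choose $\eta$ small enough that $t<1-\Phi(\gamma')$ still holds (possible since $t<1-\Phi(\gamma)$ strictly and $\Phi$ is continuous), which preserves a positive rate $D(t\,\|\,1-\Phi(\gamma'))$ at the net points; for the second display no such adjustment is needed.

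Putting these together with a union bound gives
\[
\Prob\Big(\bigcup_{\vv}A_{\floor{tn},\gamma,n}(\vv)^c\Big)\leq n\,e^{-cd}+\exp\!\Big(\tfrac d2\log d\,(1+o(1))\Big)e^{-c_0' n},
\]
with $c_0'>0$, and likewise for the second event. Taking $\tfrac1d\log$, the first term contributes $\tfrac{\log n}{d}-c\to -c$ since $n=O(\poly(d))$, while the second contributes $\tfrac12\log d\,(1+o(1))-\tfrac{c_0'n}{d}\to-\infty$ since $n=\omega(d\log d)$; both are bounded above by $-C$ for a suitable constant $C>0$ (adjusting $C$ for the finitely many small $d$, where the probability is strictly below $1$). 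The \emph{main obstacle} is precisely this balance: the $\sqrt d$-scale fluctuations of $\norm{Z_i}$ force $\eps\sim 1/\sqrt d$ and hence a net of super-exponential size $\exp(\tfrac d2\log d)$, and it is exactly the quasilinear lower bound $n=\omega(d\log d)$ of \Cref{assumption:1} that lets the per-direction decay $e^{-c_0 n}$ overwhelm the net entropy. A secondary subtlety is that the crude union bound over index subsets is not tight near the threshold $t=1-\Phi(\gamma)$, so the sharp binomial tail (the relative entropy $D(t\,\|\,1-\Phi(\gamma))$) is needed to retain a positive rate over the full range of $t$, together with the verification that the net inflation $\gamma\to\gamma'$ does not push $t$ above $1-\Phi(\gamma')$.
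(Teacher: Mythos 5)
Your proposal is correct and follows essentially the same route as the paper's proof: a truncation on $\max_i\norm{Z_i}$ at scale $\sqrt d$, an $\eps$-net of the sphere at resolution $\sim 1/\sqrt d$ with the threshold inflated from $\gamma$ to $\gamma'$, a per-direction binomial lower-tail bound (your Chernoff/relative-entropy bound $e^{-nD(t\|1-\Phi(\gamma))}$ is exactly the paper's Stirling-entropy bound $\exp\{n[H(t)+t\log(1-\Phi)+(1-t)\log\Phi]\}$ in disguise), and the final comparison of net entropy $e^{O(d\log d)}$ against $e^{-cn}$ using $n=\omega(d\log d)$ and $n=O(\poly(d))$. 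If anything, you are slightly more explicit than the paper about the one delicate point — choosing the net inflation small enough that $t<1-\Phi(\gamma')$ is preserved — which the paper handles only implicitly.
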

\begin{proof}
It will suffice to consider only the first case as the second case clearly follows by the same argument.
For $L$ sufficiently large and fixed $\eps<\gamma$, let $\eps'=\frac{\eps}{L\sqrt{d}}$. Then
we have that
\[
\begin{aligned}
\Prob\Big(\bigcup_{\vv\in \bS^{d-1}} A_{\floor{tn},\gamma,n}(\vv)^c\Big) &\leq
n \Prob(\norm{Z_1} > L\sqrt{d})
+\Prob\Big(\bigcup_{\vv\in \bS^{d-1}} A_{\floor{tn},\gamma,n}(\vv)^c\cap\{\norm{Z_i}\leq L\sqrt d ~~ \forall i\in [n]\}\Big)\\
&\leq n \Prob(\norm{Z_1} > L\sqrt{d}) + \Prob\Big(\bigcup_{\vv\in\Sigma_{\eps',d}} A_{\floor{tn},\gamma+\eps,n}(\vv)^c
\cap\{\norm{Z_i}\leq L\sqrt d ~~ \forall i\in [n]\}\Big) \\
&\leq n \Prob(\norm{Z_1} > L\sqrt{d}) + \Prob\Big(\bigcup_{\vv\in\Sigma_{\eps',d}} A_{\floor{tn},\gamma+\eps,n}(\vv)^c\Big)= A+B,
\end{aligned}
\]
The first inequality above follows from the law of total probability and then a union bound over all $i\in[n]$. For the second inequality, observe that since $\Sigma_{\eps',d}$ is an $\eps'$-net, we have that for a fixed $\vv\in \bS^{d-1}$ if $\uv\in\Sigma_{\eps',d}$ is the vector in the $\eps'$-net nearest to $\vv$ then if we let $E=\{\norm{Z_i}\leq L\sqrt{d}\,i\in [n]\}$ then $A_{\floor{tn},\gamma,n}(\vv)^c\cap E \subseteq A_{\floor{tn},\gamma+\eps,n}(\uv)^c\cap E$.

We bound these terms in turn. Let us begin with $A$. Recall that by the norm concentration of a standard Gaussian vector \cite[Theorem 3.1.1]{Vershynin:2018}, there exist $C,c'>0$ such that for any $L>1$ and $d\geq 1$,
\[
\Prob(\norm{Z_1}>L \sqrt{d}) \le  C\exp(-c'dL^2).
\]
Recall from \cref{assumption:1} that $n=O(\poly(d))$. Then we have that for some constant $c>0$,
\[
A = n\Prob(\norm{Z_1}>L \sqrt{d}) \le \exp(-cdL^2).
\]
On the other hand, for $B$, we have that for some $C'>0$
and any fixed $\vv\in\bS^{d-1}$
\begin{align*}
B &\le \abs{\Sigma_{\eps',d}}\Prob(A_{\floor{nt},\gamma+\eps,d,n}(\vv)^c)
\leq \exp( C' d\log (d/\eps)) \Prob(A_{\floor{nt},\gamma+\eps,n}(\vv)^c)\\
&\leq \exp( C' d\log (d/\eps))
\sum_{s<n t} \Prob(\tilde{A}_{s,\gamma+\eps,n}(\vv))\\
&\leq \exp( C' d\log (d/\eps))\sum_{s< n t} \binom{n}{s}\Prob(g_1(\vv)>\gamma+\eps)^s \Prob(g_1(\vv)<\gamma+\eps)^{n-s} \\
&\leq \exp( C' d\log (d/\eps))\sum_{s< n t} \exp\Big(nH\Big(\frac sn\Big)\Big)(1-\Phi(\gamma+\eps))^s \Phi(\gamma+\eps)^{n-s} \\
&= \exp(C'd\log(d/\eps))\sum_{s< nt} \exp\left(n\left[H\Big(\frac sn\Big) + \frac sn\log(1-\Phi(\gamma+\eps)) + (1-\frac sn)\log\Phi(\gamma+\eps)\right]\right) \\
&\leq n t \exp\{n [H(t)+t\log(1-\Phi(\gamma+\eps))+(1-t)\log\Phi(\gamma+\eps)]+O(d\log(d/\eps))\}.
\end{align*}
The first inequality follows by a union bound. The second inequality follows from $\abs{\Sigma_{\eps,d}}\leq (2/\eps + 1)^d$ for any $\eps\in(0,\frac12)$ \cite[Corollary 4.2.13]{Vershynin:2018}, the third follows by union bound since $A_{\floor{nt},\gamma+\epsilon,n}^c\subseteq \cup_{s<nt} \tilde{A}_{s,\gamma+\eps,n}$, the fourth follows since $g_i(\vv)$ are \iid, and the fifth by the Stirling bound 
$ \binom{n}{nt}\leq\exp(n H(t))$. For the final inequality, note that since $\gamma+\eps>0$, the function 
\[
f(x)=H(x)+x\log(1-\Phi(\gamma+\eps))+(1-x)\log\Phi(\gamma+\eps)
\]
is negative and increasing for $0<x<1-\Phi(\gamma+\eps)$ so that each summand is bounded above by the value  at $s=nt$ since $t<1-\Phi(\gamma+\eps)$.  Since by \Cref{assumption:1}, $n=\omega(d\log d)$, we have that there is some $C>0$ such that
\[
B\leq C\exp(-cd\log d)
\]
Consequently, $0\leq B/A\leq C$ for all $d$ for some $C>0$
Combining the bounds on $A$ and $B$ we obtain
\[
 \frac{1}{d}\log(A+B)\leq \frac{1}{d}\log A+\frac{1}{d}\log(1+C) =-cL^2+O\left(\frac{1}{d}\right),
\]
from which the result follows.
\end{proof} 

\subsection{Proof of part 1 of  Theorem 1}
We are now ready to prove part 1 of \Cref{thm:thresholds}, which shows the threshold for data to be linearly separable, along with a corresponding lower bound for the loss.
\begin{proof}[Proof of \Cref{thm:thresholds} part 1] Observe that $X_i$ can be written as
\[
X_i=(1-\veps_i)\muv + \veps_i \nuv + \frac{Z_i}{\sqrt{d}}
\]
where $Z_i$ are \iid\ standard Gaussian vectors. 

By \eqref{eq:class-size-concentration}, it suffices to bound these terms on the event from \eqref{eq:class-size-concentration}.
If $(X_i)$ are linearly separable, then there is a unit vector $\vv$ and $b\in\R$ such that
\begin{equation}\label{eq:non-sep-mixture-1}
\inner{\muv,\vv} + \frac{\inner{Z_i,\vv}}{\sqrt{d}} + b < 0, \quad i\in S_0\quad \text{ and }\quad
\inner{\nuv,\vv} + \frac{\inner{Z_i,\vv}}{\sqrt{d}} + b > 0, \quad i\in S_1.
\end{equation}
Recall that $\norm{\muv-\nuv}\le K/\sqrt{d}$. Hence, writing $b = b' - \frac{\inner{\muv+\nuv,\vv}}2$, we see that
if the above holds then there is a pair $(\vv,b')$ such that
\begin{equation}\label{eq:non-sep-mixture-2}
\max_{i\in S_0}\frac{\inner{Z_i,\vv}}{\sqrt{d}} + b' < \frac{K}{2\sqrt{d}}
\quad\text{ and } \quad
\min_{i\in S_1}\frac{\inner{Z_i,\vv}}{\sqrt{d}} + b' > - \frac{K}{2\sqrt{d}}.    
\end{equation}
Such a pair $(\vv,b')$ exists only if at least one of the above two holds with $b'=0$.
Conditionally on the event $|S_0|=k$, the probability of this occurring is at most sum of the probability of these two events:
\[
\begin{aligned}
& \Prob\Big(\exists \vv\in\bS^{d-1}: \max_{i\leq k} g_i(\vv)<K/2\Big) +\Prob\Big(\exists \vv\in\bS^{d-1}:\min_{ i\in [|S|-k,|S|]} g_i(\vv)>-K/2\Big)\\
& \leq  2\Prob\Big(\exists \vv\in\bS^{d-1}: \max_{i\leq k\wedge |S|-k} g_i(\vv)<K/2\Big) = I_k
\end{aligned}
\]
As this function is decreasing in $k$, it suffices to bound it in the case that $k=(\frac12-\delta)\beta_0 n$, by \eqref{eq:class-size-concentration}. Note that
\[
I_{tn} = 2\Prob\Big(\bigcup_{\vv\in \bS^{d-1}}A_{1,\gamma,\floor{tn}}(\vv)^c\Big)
\]
with $\gamma=K/2$ and $t=(\frac12 - \delta)\beta_0$.
Thus, using \eqref{eq:union-bound-separable-2} we have
that 
\[
\frac{1}{d}\log(I_{tn})\leq -C.
\]
The first result then follows by combining this with \eqref{eq:class-size-concentration}.

Let us now turn to the lower bound on the loss. Take $t<1-\Phi\Big(\frac{K}2(1+\eps)\Big)$ for some $\eps>0$. Since $K>0$ and $\beta_0,\beta_1\leq 1/2$, using \eqref{eq:union-bound-separable-1} with $\gamma=\frac K2(1+\eps)$ we have that with probability at least $1-Ce^{-cd}$, for all $\vv$ with $\norm{\vv} = 1$ there are $t\beta_0 n$ choices of $i\in S_0$ and $t\beta_1 n$ choices of $i\in S_1$ with
\begin{equation}\label{eq:gv-bounds}
\inner{Z_i,\vv} > (1+\eps)\frac{K}{2}\qquad\text{ and }\qquad \inner{Z_i,\vv} < -(1+\eps)\frac{K}{2}
\end{equation}
respectively. Let these sets of indices be denoted by $J(\vv),J'(\vv)$, and let $l(X_i,\veps_i,\vv,b)$ denote the loss, given by
\[
l(X_i,\veps_i,\vv,b) = -\veps_i\log(\sigma(\inner{X_i,\vv}+b)) - (1 - \veps_i)\log(1-\sigma(\inner{X_i,\vv}+b)).
\]
Then using \eqref{eq:gv-bounds} we have  that for each $\vv\in \R^d$
\begin{align*}
\min_{i\in J(\vv/\norm{\vv})}l(X_i,\veps_i,\vv,b)
&= -\log(1-\sigma(\inner{X_i,\vv}+b))\\
&= -\log\left(1-\sigma\left(\inner{\muv,\vv}+b + \frac{\inner{Z_i,\vv}}{\sqrt d}\right)\right)\\
&\geq -\log\left(1-\sigma\left(\eps\frac{ K \norm{\vv}}{2\sqrt{d}} + b'\right)\right).
\end{align*}
Similarly,
\begin{align*}
\min_{i\in J'(\vv/\norm{\vv})}l(X_i,\varepsilon_i,\vv,b) 
&\geq -\log \sigma\left(-\eps\frac{K\norm{\vv}}{2\sqrt{d}} + b'\right).
\end{align*}
Thus, using \eqref{eq:BCE-loss} we have that
\[
L(I,X,\vv,b) \geq t f\left(\eps\frac{K\norm{\vv}}{2\sqrt{d}}, b'\right),
\]
where $f(x,y)=-\beta_0\log(1-\sigma(x+y))-\beta_1\log \sigma(-x+y)
=\beta_0\log(1+e^{x+y})+\beta_1\log(1+e^{x-y})$.\\
Note that by optimizing in $x,y$, we see that for $\beta_0=\beta_1$ and $x\geq0$, we have 
\[f(x,y)\geq f(0,0)= 2\beta_0\log (2),\]
so that for any $0<\beta_0,\beta_1\leq \frac12$ and $x\geq0$
we have $f(x,y)\geq (\beta_0\wedge\beta_1) 2\log 2$, and thus,
\[
L(I,X,\vv,b)\geq 2t\cdot \beta_0\wedge\beta_1 \cdot \log 2.
\]

Combining the above and minimizing in $\vv,b$, we see that for every $0<t<1-\Phi\Big(\frac{K}2(1+\eps)\Big),$ there is some $c>0$ such that 
\[\min_{\vv\in\R^d,b\in\R} ~ L(I,X,\vv,b) \geq 2t \beta_0\wedge\beta_1 \cdot \log2\]
with probability at least $1-\exp(-cd)$ as desired.
\end{proof}

\subsection{Decomposition of the convolved data}
In this subsection we provide a decomposition of $\Xt$ which we will use frequently throughout the rest of this paper. Note that conditionally on $(\veps_i)$, we have that $X_j\sim\Nc(\muv_j,\frac1dI)$ where
$\muv_j=\muv$ if $j\in C_0$ and $\muv_j=\nuv$ if $j\in C_1$. Thus, we can write
\begin{equation}\label{eq:xj-decomp}
X_j = (1-\veps_j)\muv+\veps_j\nuv + \frac{g_j}{\sqrt{d}},
\end{equation}
where $g_j\sim\Nc(\zero,I)$ are \iid\ copies of a standard normal vector.

\begin{lemma}\label{lem:xt-i-concentration}
Conditionally on $A$ and $(\veps_k)$, we have that for any $c>0$ and some $C>0$, with probability at least $1 - Cn^{-c}$, for every $i\in[n]$ and any unit vector $\w$,
\begin{align*}
    \bignorm{\inner{\Xt_i - \frac{p\muv + q\nuv}{p+q},\w}(1 + o(1))}_2 &=
    O\left(\sqrt{\frac{\log n}{dn(p+q)}}\right)\; \text{for}\; \veps_i = 0,\\
    \bignorm{\inner{\Xt_i - \frac{q\muv + q\nuv}{p+q},\w}(1 + o(1))}_2 &=
    O\left(\sqrt{\frac{\log n}{dn(p+q)}}\right)\; \text{for}\; \veps_i = 1.
\end{align*}
\end{lemma}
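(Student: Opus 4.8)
The plan is to expand $\Xt_i = (D^{-1}\At X)_i = \frac{1}{D_{ii}}\sum_{j\in N_i}X_j$ using the decomposition \eqref{eq:xj-decomp}, separating a deterministic ``mean'' contribution from a Gaussian ``noise'' contribution. Substituting $X_j = (1-\veps_j)\muv + \veps_j\nuv + g_j/\sqrt d$ gives
\[
\Xt_i = \frac{\abs{C_0\cap N_i}}{D_{ii}}\muv + \frac{\abs{C_1\cap N_i}}{D_{ii}}\nuv + \frac{1}{D_{ii}\sqrt d}\sum_{j\in N_i}g_j,
\]
where $N_i$ denotes the neighbourhood of $i$ together with $i$ itself, so that $\abs{N_i} = D_{ii}$. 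I would bound the two contributions separately and combine them by a union bound, working throughout on the high-probability event $\tilde B = \tilde B(n^{-1/2+\eps},(\log n)^{-1/2+\eps})$ from \Cref{degree-concentration}.

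For the mean contribution I would invoke the definition of $\tilde B$ directly: on this event the neighbourhood fractions satisfy $\abs{C_0\cap N_i}/D_{ii} = \frac{(1-\veps_i)p+\veps_i q}{p+q}(1+O(\delta'))$ and $\abs{C_1\cap N_i}/D_{ii} = \frac{\veps_i p+(1-\veps_i)q}{p+q}(1+O(\delta'))$ with $\delta' = (\log n)^{-1/2+\eps} = o(1)$. For $\veps_i = 0$ this collapses the mean part of $\Xt_i$ to $\frac{p\muv+q\nuv}{p+q}(1+o(1))$, and for $\veps_i = 1$ to $\frac{q\muv+p\nuv}{p+q}(1+o(1))$; projecting against $\w$ and using $\norm{\muv},\norm{\nuv}\le 1$ then produces the $(1+o(1))$ prefactor in the statement. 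By \eqref{eq:B-prob-with-ngbd}, $\tilde B$ holds with probability $1-Cn^{-c}$, so this part costs nothing in the final probability.

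For the noise contribution I would fix the direction $\w$ and condition on $A$ and $(\veps_k)$, which freezes all neighbourhoods and degrees. Then $\inner{g_j,\w}$ are \iid\ $\Nc(0,1)$ (the Gaussian noise is independent of the graph and $\norm{\w}=1$), so $\inner{\frac{1}{D_{ii}\sqrt d}\sum_{j\in N_i}g_j,\w}$ is a centered Gaussian with variance $\frac{1}{D_{ii}d}$. A standard Gaussian tail bound with threshold of order $\sqrt{\log n/(D_{ii}d)}$ makes the failure probability for each $i$ at most $n^{-c-1}$, and a union bound over $i\in[n]$ keeps it at $Cn^{-c}$. On $\tilde B$ we have $D_{ii}\asymp \tfrac12 n(p+q)$ by \eqref{eq:degree-concentration}, so the threshold is $O(\sqrt{\log n/(dn(p+q))})$, as claimed. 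Intersecting this event with $\tilde B$ and combining the two bounds finishes the proof.

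The point that requires care is the role of $\w$: the stated rate $\sqrt{\log n/(dn(p+q))}$ holds for a \emph{fixed} direction $\w$, with the probability taken over the Gaussian noise. It is genuinely not uniform over the sphere --- taking a supremum over unit $\w$ would replace the Gaussian tail by $\bignorm{\frac{1}{D_{ii}\sqrt d}\sum_{j\in N_i}g_j}_2$, which concentrates at scale $1/\sqrt{D_{ii}}$ and thereby loses both the $1/\sqrt d$ and the $\sqrt{\log n}$ factors. This is harmless here because in every application $\w$ is a fixed direction (the ansatz $\wt\propto\nuv-\muv$, or the optimizer $\w^*$ that the separability argument pins down), so a fixed-$\w$ estimate is exactly what is needed downstream.
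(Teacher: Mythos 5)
Your proof follows essentially the same route as the paper's: the same decomposition of $\Xt_i$ into the conditional mean $m(i)$ and the Gaussian fluctuation, the same use of the event $\tilde{B}$ from \eqref{eq:B-prob-with-ngbd} to pin down the neighbourhood fractions, and the same fixed-direction Gaussian tail bound at scale $\sqrt{\log n/(dn(p+q))}$ combined with a union bound over $i\in[n]$. Your closing caveat is also accurate with respect to the paper itself: the paper's event $Q$ depends on $\w$, so its guarantee is likewise per-direction rather than uniform over the sphere, exactly as you describe.
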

\begin{proof}
Consider the random variables $\Xt_i=[D^{-1}\At X]_i$.
For any fixed $i$, we define $m(i)$ to be the conditional mean of $\Xt_i$ on the adjacency matrix $A$ and class memberships $(\veps_j)$,
\[
m(i) = \E[\Xt_i\mid A,\veps] = \frac1{D_{ii}}\sum_{j\in[n]}\at_{ij}\muv_j.
\]
From \eqref{eq:xj-decomp} we can write for any unit vector $\w$ that
\begin{equation}\label{eq:x-tilde}
\Xt_i\cdot\w = \frac1{D_{ii}}\sum_{j\in[n]}\at_{ij}(X_j\cdot\w)
= m(i)\cdot\w + \frac1{D_{ii}\sqrt{d}}\sum_{j\in[n]}\at_{ij}(\inner{g_j,\w}),
\end{equation}
When $\veps_i=0$, we have that 

\begin{align}
    m(i) = \frac1{D_{ii}}\left(\sum_{j\in C_0}\at_{ij}\muv + \sum_{j\in C_1}\at_{ij}\nuv \right)
    = \frac1{D_{ii}}(|C_0\cap N_i|\muv + |C_1\cap N_i|\nuv),\label{eq:mi}
\end{align}
and similarly when $\veps_i=1$.

Note that by \eqref{eq:B-prob-with-ngbd}, we have that with probability $1-1/n^c$ for $c>0$ large enough,
\[
\begin{aligned}
    \frac{\abs{C_0\cap N_i}}{D_{ii}} &= \left[(1-\veps_i)\frac{p}{p+q}+\veps_i \frac{q}{p+q}\right](1+o(1)),\\
    \frac{\abs{C_1\cap N_i}}{D_{ii}} &= \left[\veps_i\frac{p}{p+q}+(1-\veps_i) \frac{q}{p+q}\right](1+o(1))\\
    \frac{1}{D_{ii}} &=\frac{2}{n(p+q)}(1+o(1))
\end{aligned}
\]
for all $i\in [n]$, and so we have that
\begin{align}
    m(i) &= \frac{p\muv + q\nuv}{p+q}(1 + o(1))
    \qquad\text{ for } \veps_i=0,\label{eq:mi-0}\\
    m(i) &= \frac{q\muv + p\nuv}{p+q}(1 + o(1))
    \qquad\text { for } \veps_i=1.\label{eq:mi-1}
\end{align}

Next, we consider $F_i=\frac1{D_{ii}\sqrt{d}}\sum_{j\in[n]}\at_{ij}(\inner{g_j,\w})$. Note that for a given adjacency matrix $A$, we have that $F_i\sim\Nc(0,\frac{1}{dD_{ii}})$. Then by Gaussian concentration we have
\begin{equation}\label{eq:Gaussian-norm}
    \Prob(|F_i|>\delta\mid A)\le  2\exp(-\delta^2dD_{ii}/2).
\end{equation}
Define the event $Q=Q(t)=|F_i| \le t\; \forall i\in[n]\}$ and note that if we let the event $\tilde{B}$ to be $\tilde{B}(n^{-1/2+\eps},(\log n)^{\eps-1/2})$ then
\begin{align*}
    \Prob(Q^\setc) 
    \le \Prob(\tilde{B}\cap Q^\setc) + \Prob(\tilde{B}^\setc)
    {\le 2n\exp(-c't^2dn(p+q)) + \frac{1}{n^{c}}},
\end{align*}
for any $c>0$ and some $c'>0$. Subsequently, we have
\begin{align}
    \Prob(\tilde{B}\cap Q) \ge 1 - \Prob(\tilde{B}^\setc) - \Prob(Q^\setc) \ge 1 - \frac{2}{n^c} - 2n\exp(-c't^2dn(p+q)).\label{eq:total-prob-trick}
\end{align}
We now choose
$t=\sqrt{\frac{K\log n}{dn(p+q)}}$
for any large constant $K>0$ to obtain
\[
\Prob(\tilde{B}\cap Q) \ge 1 - \frac{2}{n^{c}} - \frac{2}{n^{c'K-1}}.
\]
We now observe that on the event $\tilde{B}\cap Q$, we have
\[
    |(\Xt_i - m(i))\cdot\w| =
    O\left(\sqrt{\frac{\log n}{dn(p+q)}}\right),
\]
from which the result follows upon recalling \eqref{eq:mi-0} and \eqref{eq:mi-1}.
\end{proof}

\subsection{Rate of decay of the loss for chosen parameters}
Here we show that there exists a choice of parameters $(\wt,\bt)$ such that the loss incurred on any sample $(A,X)\sim\CSBM(n,p,q,\muv,\nuv)$ is exponentially small with a high probability.

\begin{lemma}\label{lem:loss-wt}
Consider the following parameters that satisfy the constraints of the problem in \eqref{eq:OPT}.
\begin{align*}
\wt(R) = \frac{R}{2\gamma}(\nuv-\muv),\qquad
\bt(R) = -\frac{\inner{\muv+\nuv,\wt(R)}}2,
\end{align*}
where $\gamma=\frac12\norm{\muv-\nuv}$.
Consider a sample $(A,X)\sim\CSBM(n,p,q,\muv,\nuv)$ such that $p>q$. In the regime where $\gamma=\Omega\Big(\frac{\log n}{dn(p+q)}\Big)$, we have for any $0<\beta_1,\beta_2\leq 1/2$, $R>0$, and some $c>0$ large enough that with probability at least $1 -n^{-c}$,
\[
L(A,X,\wt,\bt) = \exp\left(-R\gamma \Gamma(p,q)(1 + o(1))\right).
\]
\end{lemma}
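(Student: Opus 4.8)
The plan is to directly evaluate the loss $L(A,X,\wt,\bt)$ at the chosen ansatz and show each summand decays exponentially. First I would use the decomposition of the convolved data from \Cref{lem:xt-i-concentration}: on the high-probability event $\tilde B\cap Q$, for every node $i$ we have $\Xt_i\cdot\w = m(i)\cdot\w + O\!\big(\sqrt{\tfrac{\log n}{dn(p+q)}}\big)$ uniformly over unit $\w$, with $m(i)$ equal to $\tfrac{p\muv+q\nuv}{p+q}(1+o(1))$ for $i\in C_0$ and $\tfrac{q\muv+p\nuv}{p+q}(1+o(1))$ for $i\in C_1$. The natural first step is to compute the margin $\inner{\Xt_i,\wt}+\bt$ for each class. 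Writing $\wt = \tfrac{R}{2\gamma}(\nuv-\muv)$ and $\bt = -\tfrac12\inner{\muv+\nuv,\wt}$, a short computation gives, for $i\in C_0$,
\[
\inner{m(i),\wt}+\bt = \frac{R}{2\gamma}\,\Big\langle \tfrac{p\muv+q\nuv}{p+q}-\tfrac{\muv+\nuv}{2},\,\nuv-\muv\Big\rangle = -\frac{R}{2}\,\Gamma(p,q)\,\gamma\,\frac{\norm{\muv-\nuv}}{\gamma}\cdot\tfrac12,
\]
and the symmetric positive quantity for $i\in C_1$; the key point is that the signed margin has magnitude $\tfrac{R}{2}\Gamma(p,q)\norm{\muv-\nuv}(1+o(1)) = R\gamma\Gamma(p,q)(1+o(1))$ and carries the correct sign to classify each node.

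**Controlling the noise term against the signal.**

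The second step is to check that the $O\!\big(\sqrt{\tfrac{\log n}{dn(p+q)}}\big)$ fluctuation is negligible compared to the signal. Since $\norm{\wt} = \tfrac{R}{2\gamma}\norm{\nuv-\muv} = R$, the fluctuation in the margin is $O\!\big(R\sqrt{\tfrac{\log n}{dn(p+q)}}\big)$, whereas the deterministic margin is of order $R\gamma\Gamma(p,q)$. Under the regime assumption $\gamma = \Omega\!\big(\tfrac{\log n}{dn(p+q)}\big)$ together with $\Gamma(p,q)=\Omega(1)$, the ratio of fluctuation to signal is $O\!\big(\sqrt{\tfrac{\log n}{dn(p+q)}}/\gamma\big)=o(1)$, so the fluctuation is absorbed into the $(1+o(1))$ factor. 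This lets me conclude that for every correctly-aligned node the predicted probability satisfies $\sigma(\text{margin})$ close to the correct label, with the relevant log-loss term being $-\log\sigma(|\text{margin}|)$.

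**Assembling the loss bound.**

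The third step is to plug the margin into the sigmoid and use the elementary bound $-\log\sigma(x) = \log(1+e^{-x}) \le e^{-x}$ for the correctly-classified side. For $i\in C_0$ the label is $0$ and the contribution is $-\log(1-\hat y_i) = \log(1+e^{\inner{\Xt_i,\wt}+\bt})\le e^{\inner{\Xt_i,\wt}+\bt}$, which is at most $\exp(-R\gamma\Gamma(p,q)(1+o(1)))$; the $i\in C_1$ case is symmetric. Averaging \eqref{eq:BCE-loss} over $i\in S$ — a convex combination of terms each bounded by this quantity — preserves the bound, giving $L(A,X,\wt,\bt)\le \exp(-R\gamma\Gamma(p,q)(1+o(1)))$ on the event $\tilde B\cap Q$, which holds with probability at least $1-n^{-c}$.

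**Main obstacle.**

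I expect the routine margin computation to be easy; the delicate step is making the $(1+o(1))$ factors uniform. The fluctuation bound from \Cref{lem:xt-i-concentration} holds \emph{simultaneously} over all $i$ and all unit vectors $\w$, so plugging in the single fixed direction $\wt/\norm{\wt}$ is legitimate, but I must be careful that the $o(1)$ errors from $m(i)$ (which multiply the signal) and from $Q$ (which is additive) combine correctly and that the comparison $\gamma\gg\sqrt{\tfrac{\log n}{dn(p+q)}}$ genuinely forces the additive noise to be lower-order; this is exactly where \Cref{assumption:2} ($\Gamma=\Omega(1)$, $p,q=\omega(\log^2 n/n)$) and the regime hypothesis on $\gamma$ are used. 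Keeping track that the bound is claimed as an equality $L=\exp(-R\gamma\Gamma(1+o(1)))$ rather than an inequality requires also a matching lower bound on the loss, which follows since at least the single worst-aligned node contributes a term of the same exponential order.
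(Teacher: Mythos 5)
Your proposal follows essentially the same route as the paper's proof: invoke \Cref{lem:xt-i-concentration} to get $\inner{\Xt_i,\wt}+\bt = (2\veps_i-1)R\gamma\Gamma(p,q)(1+o(1))$ uniformly over $i$ on an event of probability $1-O(n^{-c})$ (with the fluctuation absorbed by the regime assumption and $\Gamma(p,q)=\Omega(1)$, exactly as the paper argues), then plug this margin into the per-node log-loss and use $e^{x-1}\le\log(1+e^x)\le e^x$ for $x<0$, which is also how the paper turns the bound into the stated equality (your ``worst-aligned node'' remark serves the same purpose as the paper's two-sided elementary bound). The only blemish is the middle expression in your margin display, which as written evaluates to $-R\gamma\Gamma(p,q)/2$; the correct value $-R\gamma\Gamma(p,q)$ is what you state as the ``key point'' and use afterward, so nothing downstream is affected.
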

\begin{proof}
For readability, we suppress the dependence of $R$ in $(\wt,\bt)$ when it is clear from context. Consider the loss for a single node $i$ for which we know the label $y_i$,
\begin{align*}
    L_i(A,X,\wt,\bt) &= -y_i\log(\sigma(\inner{\Xt_i,\wt}+\bt)) - (1-y_i)\log(1-\sigma(\inner{\Xt_i,\wt}+\bt))\\
    &= \log\left(1+\exp\Big((1-2\veps_i)(\inner{\Xt_i,\wt}+\bt)\Big)\right).
\end{align*}
We will work on the case where $\veps_i=0$ as the analysis for $\veps_i=1$ is symmetric.
Using \Cref{lem:xt-i-concentration}, it follows
that with probability at least $1-O(n^{-c})$ for any $c$, we have that for all $i\in[n]$,
\begin{align*}
\inner{\Xt_i,\wt} + \bt &= \frac{\inner{p\muv + q\nuv,\wt}}{p+q}(1 + o(1)) + O\left(\norm{\wt}\sqrt{\frac{\log n}{dn(p+q)}}\right) + \bt \\
&= \frac{p-q}{2(p+q)}\inner{\muv-\nuv,\wt}(1 + o(1)) + o(\norm{\wt}) \\
&= -\norm{\wt}\gamma\Gamma(p,q)(1 + o(1)),
\end{align*}
where the error terms here are uniform in $i$.
In the second equation, we have used the definition of $\tilde{b}$ and 
the fact that the error term in the first equation is $o(\norm{\wt})$ by combining \Cref{assumption:1} and \Cref{assumption:2}. The third equality follows from the definition of $\Gamma(p,q)$ and the assumption in the statement of the lemma that $\gamma=\Omega\Big(\frac{\log n}{dn(p+q)}\Big)$. The expression is symmetric for $\veps_i=1$. Hence, with probability at least $1 - O(n^{-c})$, we have that for all $i$, and all $R>0$,
\begin{equation}\label{eq:xt-wt-bt}
    \inner{\Xt_i,\wt(R)} + \bt(R) = (2\veps_i-1)R\gamma\Gamma(p,q)(1 + o_d(1)),
\end{equation}
where the error term is uniform in $i$.
On this event, we have for each $i$ that
\[
L_i(A,X,\wt(R),\bt(R)) = \log\left(1+\exp\Big(-R\gamma\Gamma(p,q)(1+o(1))\Big)\right).
\]
Thus the total loss is given by
\begin{align*}
    L(A,X,\wt,\bt) = \frac1{|S|}\sum_{i\in S}L_i(A,X,\wt,\bt)
    = \log\left(1+\exp\Big(-R\gamma\Gamma(p,q)(1+o(1))\Big)\right).
\end{align*}
Observe that for $x<0$, we have that
\begin{equation}\label{eq:log-1-ex-property}
e^{x-1}\le \log(1+e^x)\le e^x.
\end{equation}
Hence, we conclude that
\[
L(A,X,\wt(R),\bt(R)) = \exp\left(-R\gamma \Gamma(p,q)(1 + o(1))\right). \qedhere
\]
\end{proof}

\subsection{Proof of part 2 of  Theorem 1 }
We now turn to show the improvement achieved through the graph convolution.
\begin{proof}[Proof of \Cref{thm:thresholds} part 2]
We begin by observing that conditionally on $A$ and $(\veps_k)_{k\in[n]}$, $\Xt_i$ are Gaussian vectors with independent entries and have mean and covariance
\begin{align*}
    \E\Big(\tilde{X}_i\mid A,\eps\Big) &= m(i) = \frac{1}{D_{ii}}\Big(\sum_{j\in[n]} \at_{ij}\E[X_j\mid \veps]\Big),\\
    \Cov(\tilde{X}_i\mid A,\veps) &= \frac{1}{dD_{ii}}I.
\end{align*}
To have linear separability, we need that there is some unit vector $\vv\in\R^d$ and $b\in\R$ such that
\begin{equation}\label{eq:lin-sep-convolved}
\begin{cases}
\inner{m(i),\vv} + \frac{1}{\sqrt{dD_{ii}}} g_i(\vv)+ b < 0 & i\in S_0,\\
\inner{m(i),\vv} + \frac{1}{\sqrt{dD_{ii}}} g_i(\vv)+ b > 0 & i \in S_1.
\end{cases}
\end{equation}
We now turn to the event $\tilde{B}(\delta,\delta)$ from \eqref{eq:B-prob-with-ngbd}. On this event we have that $m(i)$ is given by \eqref{eq:mi-0} and \eqref{eq:mi-1}.
Note that for some $C,c>0$ we have
\[
\Prob(\max|g_i(\vv)|>K\sqrt{\log n})\leq C\exp(-cK^2),
\]
where we have used Borell's inequality \cite[Section 2.1]{Adler:2007} and the fact that for $n$ standard Gaussians $(z_i)$,
\[
\E[\max_{i\in [n]} |g_i(\vv)|]=\E\max_i |z_i|\leq c\sqrt{\log n}.
\]
We take $K=C'\sqrt{\log n}$ for some large constant $C'>0$ so that this probability is $O(1/n^\alpha)$ for some $\alpha>0$. Fix $\vv=\frac{\nuv-\muv}{\norm{\muv-\nuv}}$ and $b=-\frac12\inner{\muv+\nuv,\vv}$. Then using the degree concentration from \eqref{eq:B-prob-with-ngbd} and the fact that $\gamma = \omega(\frac{\log n}{\sqrt{dn(p+q)}})$, we have
\[
\begin{cases}
    -\gamma\Gamma(p,q)(1 + o(1)) + O(\frac{\log n}{\sqrt{d n(p+q)}}) < 0
    & i\in S_0,\\
    \gamma\Gamma(p,q)(1 + o(1)) + O(\frac{\log n}{\sqrt{d n(p+q)}}) > 0
    & i\in S_1.
\end{cases}
\]
Now to bound the loss, we take a multiple of the unit vector above, $\vv=\frac{R}{2\gamma}(\nuv-\muv)$, where $R$ is the norm constraint in \eqref{eq:OPT}. Then the bound on the loss follows directly from \Cref{lem:loss-wt}.
\end{proof}

\subsection{Proof of part 3 of Theorem 1}
We now provide the non-separability threshold for the convolved data, $\tilde{X}$.
\begin{proof}[Proof of \Cref{thm:thresholds} part 3]
There are $N$ examples in the dataset, each drawn iid from the same CSBM. For the $k$th example, define $\Xt_{k,i} = m_k(i) + \frac1{\sqrt{d}D^{(k)}_{ii}}\sum_{j\in[n]}g_{k,j}$, where $g_{k,j}$ are standard normal random vectors. For the sake of clarity we drop the index $k$ and look at the expressions for a fixed $k$. We will bound the probability of linear separability for a single example and use it to calculate the bound for all $N$ examples.

Consider a single example, $k$, out of the $N$ examples. We know from before that with probability at least $1-1/\poly(n)$ with any choice of degree for the polynomial, the value of $m(i)$ is given by \cref{eq:mi-0,eq:mi-1}. Note that for successful classification of this example, we require that for some fixed unit vector $\w$ and bias $b$ we have $\w\cdot\Xt_i + b < 0\;\text{for}\; i\in C_0$ and $\w\cdot\Xt_i + b > 0\;\text{for}\; i\in C_1$ over all $k\in [N]$. These conditions are equivalent to the event that for all $k\in[N]$,
\begin{align}
\frac{1}{p+q}\inner{\w, p\muv + q\nuv}(1\pm o(1)) + \max_{i\in C_0} \frac1{\sqrt{d}D_{ii}}\sum_{j\in[n]}a_{ij}\inner{g_j,\w} + b < 0,\\
\frac{1}{p+q}\inner{\w, q\muv + p\nuv}(1\pm o(1)) + \min_{i\in C_1} \frac1{\sqrt{d}D_{ii}}\sum_{j\in[n]}a_{ij}\inner{g_j,\w} + b > 0,
\end{align}
where the error term $o(1) = O(\frac1{\sqrt{\log n}})$.
Set $b=-\frac12\inner{\w,\muv+\nuv} + b'$ and observe that the above conditions are equivalent to:
\begin{align}
\left(\frac{p-q}{p+q}\right)\inner{\w,\muv - \nuv}(1\pm o(1)) + \max_{i\in C_0} \frac1{\sqrt{d}D_{ii}}\sum_{j\in[n]}a_{ij}\inner{g_j,\w} + b' < 0,\\
\left(\frac{q-p}{p+q}\right)\inner{\w,\nuv - \muv}(1\pm o(1)) + \min_{i\in C_1} \frac1{\sqrt{d}D_{ii}}\sum_{j\in[n]}a_{ij}\inner{g_j,\w} + b' > 0,
\end{align}
Now we observe that the above two conditions imply that at least one of them holds with $b'=0$. Denote $\Delta=\frac{n}{2}(p+q)$ and $T=C_0$ if $|C_0|\le|C_1|$, $T=C_1$ otherwise. Then we can bound the probability of the above event by the probability:
\begin{align}
& \Prob\left(\max_{i\in T} \frac1{\sqrt{d}D_{ii}}\sum_{j\in[n]}a_{ij}\inner{g_j,\w} \le \Gamma(p,q)\abs{\inner{\w,\nuv-\muv}}(1\pm o(1))\right)&& \text{using } \Gamma(p,q)=\frac{p-q}{p+q}\\
&\le \Prob\left(\max_{i\in T} \frac1{\sqrt{d}D_{ii}}\sum_{j\in[n]}a_{ij}\inner{g_j,\w} \le \Gamma(p,q) \norm{\muv-\nuv}_2(1\pm o(1))\right)&& \text{using Cauchy-Schwarz}\\
&\le \Prob\left(\max_{i\in T} \inner{Z_i,\w} \le \frac{K\Gamma(p,q)}{\sqrt{\Delta}}(1\pm o(1))\right)&& \text{since } \gamma\le \frac{K}{\sqrt{d\Delta}},\label{eq:prob-sep-graph-conv}
\end{align}
where the random vectors $Z_i = \frac1{D_{ii}}\sum_{j\in[n]}a_{ij}g_j\sim\Nc(\zero, \frac{1}{D_{ii}}I_d)$.
We will now utilize Sudakov's minoration inequality~\cite[Section 7.4]{Vershynin:2018} to obtain a lower bound on the expected supremum of the random process $\{\inner{Z_i,\w}\}_{i\in C_0}$, and then use Borell's inequality~\cite[Section 2.1]{Adler:2007} to upper bound the probability in \cref{eq:prob-sep-graph-conv}.

Denote the set $J_{ij} = (N_i\cup N_j)\setminus (N_i\cap N_j)$, and note that
\[\inner{Z_i,\w}-\inner{Z_j,\w} = \frac{(1\pm o(1))}{\Delta}\sum_{l\in J_{ij}}\inner{g_l, \w}.\]
To apply Sudakov's minoration result, we also define the canonical metric over the index set $T$ for any $i,j\in T$:
\begin{align}
d_T(i,j) =\sqrt{\E[(\inner{Z_i,\w} - \inner{Z_j,\w})^2]} = \frac{\sqrt{|J_{ij}|}}{\Delta}(1\pm o(1)).
\end{align}
For any $i,j\in T$ with $i\neq j$ and a node $l$, the probability of $l$ being a neighbor of exactly one of $i,j$ is $2p(1-p)$ if $l\in C_0$ and $2q(1-q)$ if $l\in C_1$. 
Thus  $|J_{ij}|$ is a sum of independent Bernoulli random variables and $\E|J_{ij}| = n(p(1-p) + q(1-q))$. Hence, by the multiplicative Chernoff bound we obtain that for any $\delta\in(0, 1)$,
\[
    \Prob(\abs{|J_{ij}| - \E|J_{ij}|} > \delta\E|J_{ij}|) \le 2\exp\left(-\frac{\delta^2\E|J_{ij}|}{3}\right).
\]
Since $p, q = \omega(\frac{\log^2 n}{n})$, we have that $\E|J_{ij}| = n(p(1-p) + q(1-q))=\omega(\log^2 n)$. Therefore, choosing $\delta = \frac{\sqrt{C\log n}}{\E|J_{ij}|}$ for any large constant $C$, we obtain that with probability at least $1 - 1/\poly(n)$, $|J_{ij}| \ge n(p+q - p^2 -q^2)(1-\delta) = n(p+q - p^2 -q^2)(1-o(1)) = \Omega(\Delta)$. Therefore, we have that
\begin{equation}
d_T(i, j) = \frac{\sqrt{|J_{ij}|}}{\Delta}(1\pm o(1)) = \Omega\left(\frac{1}{\sqrt{\Delta}}\right).
\end{equation}
Under the above mentioned event, let $\eps_0 = \min_{i,j\in S}d_T(i,j)$. Then for an $\eps_0$-covering of the set, we need every point of the set, i.e., $N(T, d_T, \eps_0) = |T|$. Putting this information in Sudakov's minoration inequality, we obtain that
$\E[\max_i \inner{Z_i,\w}] \ge c\eps_0\sqrt{\log n}$.
Since $\eps_0 = \Omega(1/\sqrt{\Delta})$, this gives us that for some suitable $c>0$,
\begin{equation}\label{eq:minoration}
\E[\max_i \inner{Z_i,\w}] \ge c\sqrt{\frac{\log n}{\Delta}}.
\end{equation}
We now use Borell's inequality~\cite[Section 2.1]{Adler:2007} to obtain that for any $t>0$,
\begin{align}
    &\Prob(\max_{i\in C_0}\inner{Z_i,\w} \le \E \max_{i\in C_0}\inner{Z_i,\w} - t) \le 2\exp(-t^2D_{ii})\\
    \implies \quad &\Prob\left(\max_{i\in C_0}\inner{Z_i,\w} \le c\sqrt{\frac{\log n}{\Delta}} - t\right) \le 2\exp(-t^2D_{ii})\quad \text{using \cref{eq:minoration}.}
\end{align}
Choose $t=c\sqrt{\frac{\log n}{\Delta}} - \frac{K\Gamma(p,q)}{\sqrt{\Delta}} = \Omega\left(\sqrt{\frac{\log n}{\Delta}}\right)$ and combine with the event for class-size and degree concentration from \cref{eq:B-prob-with-ngbd}, so that for some constant $c'>0$,
\begin{align}
    \Prob\left(\max_{i\in T}\inner{Z_i,\w} \le \frac{K\Gamma(p,q)}{\sqrt{\Delta}}\right)\le 2n^{-c} = o(1).
\end{align}
Recall that we have $N$ independent examples in the dataset, hence, the probability that all examples are correctly classified by a fixed unit vector $\w$ is given by
\begin{align}
    \Prob\left(\max_{i\in T}\inner{Z_i,\w} \le \frac{K\Gamma(p,q)}{\sqrt{\Delta}}\right)^N \le \left(\frac{2}{n^{c}}\right)^N.\label{eq:prob-sep-w}
\end{align}
We now reintroduce the index $k\in[N]$ of the example in the given dataset. Let $\{\inner{Z_{k,i},\w}\}_{(k,i)\in[N]\times T}$ be the process we're looking at and define the following event for any $t>0$:
\begin{align}
    E_{t,n}(\w) &= \{\max_{i\in T}:\inner{Z_{k,i},\w} \le t \; \forall k\in [N]\}.
\end{align}
We now follow a proof strategy similar to \cref{lem:isonormal-process}. Consider a sufficiently large constant $L$ and a fixed $\eps<t$. Define $\eps'=\frac{\eps \sqrt{\Delta}}{L\sqrt{d}}$. Let $\Sigma_{\eps',d}$ denote an $\eps'$-net of the unit sphere $\bS^{d-1}$. Also define the event $Q=\{\norm{Z_{k,i}}\le L\sqrt{d/\Delta}\;\; \forall (k,i)\in [N]\times T\}$ and note that
\begin{align}
    &\Prob\left(\exists \w\in \bS^{d-1}:\max_{i\in C_0}\inner{Z_{k,i},\w} \le t\;\forall k\in [N]\right)\\
    &= \Prob\left(\bigcup_{\w\in \bS^{d-1}} E_{t,n}(\w) \cap Q^\setc\right) + \Prob\left(\bigcup_{\w\in \bS^{d-1}} E_{t,n}(\w)\cap Q\right)\\
    &\le \Prob(Q^\setc) + \Prob\left(\bigcup_{\vv\in \Sigma_{\eps',d}} E_{t+\eps,n}(\vv)\right),\label{eq:sum-of-terms-prob-sep}
\end{align}
where in the last inequality we used the following fact: For any $\w\in\bS^{d-1}$, if $\vv\in \Sigma_{\eps',d}$ is the vector nearest to $\w$ in the $\eps'$-net, then we have that $\norm{\vv-\w}_2\le \eps'$, so the event $E_{t,n}(\w)\cap Q$ implies that
\[\inner{Z_i, \vv-\w}\le \norm{Z_i}_2\norm{\vv-\w}_2\le L\sqrt{\frac{d}{\Delta}}\eps' = \eps.\]
Thus, we obtain that $\inner{Z_i, \vv}\le \inner{Z_i,\w}+\eps \le t+\eps$, and so
\[
\bigcup_{\w\in \bS^{d-1}}E_{t,n}(\w)\cap Q\; \bigsubseteq \bigcup_{\vv\in \Sigma_{\eps',d}}E_{t+\gamma,n}(\vv).\]
We now bound the two terms in \cref{eq:sum-of-terms-prob-sep} separately. For the first term, observe that by standard Gaussian concentration and a union bound over all pairs $(k,i)\in [N]\times T$, we have that there exist $C,C_1,C_2,c_1,c_2>0$ such that for any $c>0$,
\begin{align*}
\Prob(Q^\setc) &= \Prob\left(\exists k\in [N], i\in T: \norm{Z_{k,i}}_2 > L\sqrt{\frac{d}{\Delta}}\right)\\
&\le  C_1Nn\exp(-c_1dL^2) + \frac{C_2}{(nN)^{c_2}}\le \frac{C}{(nN)^{c}},
\end{align*}
where we've used degree concentration from \cref{eq:B-prob-with-ngbd} and the assumption that $nN=O(\poly(d))$. For the second term, observe that
\begin{align}
    \Prob\left(\bigcup_{\vv\in \Sigma_{\eps',d}} E_{t+\eps,n}(\vv)\right) &\le |\Sigma_{\eps',d}|\Prob(E_{t+\eps,n}(\vv))
    \le \exp\left(C'd\log\Big(\frac{d}{\eps\sqrt{\Delta}}\Big)\right) \Prob(E_{t+\eps,n}(\vv)),
\end{align}
where we used a union bound in the first inequality and an upperbound on the size of the $\eps'$-net in the second inequality.
Since the above holds for any $t>0$ and $\eps\le t$, we set $t=\eps=\frac{K\Gamma(p,q)}{2\sqrt{\Delta}}$ to obtain that the event $E_{t+\eps,n}(\vv)$ is equivalent to the event from \cref{eq:prob-sep-w}. Hence, we bound the probability above by
\begin{align}
    \Prob\left(\bigcup_{\vv\in \Sigma_{\eps',d}} E_{t+\eps,n}(\vv)\right) \le \exp\left(C'd\log\Big(\frac{2d}{K\Gamma(p,q)}\Big)\right) \left(\frac{2}{n^c}\right)^N \le \exp(-cd\log d)
\end{align}
for some $c>0$. In the last inequality above, we used that $\Gamma(p,q)=\Theta(1)$ and $K$ is a constant, along with the assumption that $N\log n = \omega(d\log d)$.
\end{proof}

\section{Generalization}
\label{generalization}
In this section we provide the proof for \Cref{thm:generalization}.
\subsection{Characterizing the optimizer}
We begin by characterizing $\w^*$, the optimizer of \eqref{eq:OPT}.
Define the following quantities.
\begin{equation}\label{eq:m0-m1}
    m_0 = \frac{p\muv + q\nuv}{p+q}, \qquad
    m_1 = \frac{q\muv + p\nuv}{p+q}.
\end{equation}
\begin{lemma}\label{lem:w*-aligns-with-means}
For any $R>0$, let $(\w^*(R),b^*(R))$ be the optimizer to the problem in \eqref{eq:OPT} for a given training sample $(A,X)\sim\CSBM(n,p,q,\muv,\nuv)$ with $\muv,\nuv\in\R^d$ and with norm constraint $R$. Consider the regime where $\gamma=\frac12\norm{\muv-\nuv}_2=\Omega\Big(\frac{\log n}{dn(p+q)}\Big)$. Then for any $c>0$ fixed but  large enough, with probability at least $1 -n^{-c}$ we have that for any $R>0$,
\begin{equation}\label{eq:w-star}
\w^*(R) = \frac{R}{2\gamma}(\nuv-\muv)(1 - o(1)),
\end{equation}
and that
\begin{align}
\inner{m_0,\w^*(R)} + b^*(R) &\le -R\gamma\Gamma(p,q)(1 - o(1)),\label{eq:m0-train}\\
\inner{m_1,\w^*(R)} + b^*(R) &\ge R\gamma\Gamma(p,q)(1 - o(1)).\label{eq:m1-train}
\end{align}
\end{lemma}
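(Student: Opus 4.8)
The plan is to read off all three conclusions from the single fact that $(\w^*,b^*)$ does at least as well as the explicit ansatz $(\wt,\bt)$ of \Cref{lem:loss-wt}, for which the loss was already computed. Throughout I work on the intersection of the event from \Cref{lem:loss-wt}, on which $L(A,X,\wt,\bt)=\exp(-R\gamma\Gamma(p,q)(1+o(1)))$, with the concentration event $\tilde{B}\cap Q$ underlying \Cref{lem:xt-i-concentration}; by a union bound this intersection has probability at least $1-n^{-c}$.

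First I would use feasibility of $(\wt,\bt)$ and optimality of $(\w^*,b^*)$ to get $L(A,X,\w^*,b^*)\le L(A,X,\wt,\bt)=\exp(-R\gamma\Gamma(1+o(1)))$. To convert this into a margin bound I localize to a single node of each class rather than bounding every node. Writing $L_i=\log(1+\exp((1-2\veps_i)(\inner{\Xt_i,\w^*}+b^*)))$ as in \Cref{lem:loss-wt} and discarding the class-$1$ terms, $L(A,X,\w^*,b^*)\ge \tfrac{|S_0|}{|S|}\min_{i\in S_0}L_i$, and since $|S_0|/|S|=\Theta(1)$ (as $\beta_0,\beta_1=\Theta(1)$) there is a labelled node $i_0\in S_0$ with $L_{i_0}=O(\exp(-R\gamma\Gamma(1+o(1))))$. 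The lower bound $\log(1+e^x)\ge e^{x-1}$ from \eqref{eq:log-1-ex-property} then forces $\inner{\Xt_{i_0},\w^*}+b^*\le \log L_{i_0}+1=-R\gamma\Gamma(1-o(1))$; using \eqref{eq:log-1-ex-property} only requires $R\gamma\Gamma\to\infty$ so that the additive $O(1)$ is absorbed, which holds in the regime of interest (for instance $R\gtrsim\sqrt{dn(p+q)}$, as in the discussion after \Cref{thm:thresholds}). Note that localizing to the minimizing node avoids any spurious additive $\log|S|$ factor.

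Next I transfer this to the means. Because $\inner{m_0,\w^*}+b^*$ does not depend on \emph{which} class-$0$ node is chosen, I can apply \Cref{lem:xt-i-concentration} at $i_0$ to replace $\inner{\Xt_{i_0},\w^*}$ by $\inner{m_0,\w^*}$ up to a fluctuation that is lower order than $R\gamma\Gamma$, giving $\inner{m_0,\w^*}+b^*\le -R\gamma\Gamma(1-o(1))$, which is \eqref{eq:m0-train}; the symmetric argument over class $1$ gives \eqref{eq:m1-train}. Finally, to obtain \eqref{eq:w-star} I subtract the two inequalities and use $m_1-m_0=\Gamma(p,q)(\nuv-\muv)$ from \eqref{eq:m0-m1}, which yields $\inner{\nuv-\muv,\w^*}\ge 2R\gamma(1-o(1))$; on the other hand Cauchy--Schwarz together with $\norm{\nuv-\muv}=2\gamma$ and $\norm{\w^*}\le R$ gives $\inner{\nuv-\muv,\w^*}\le 2\gamma R$. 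Decomposing $\w^*=s\,\frac{\nuv-\muv}{2\gamma}+\w^\perp$ with $\w^\perp\perp(\nuv-\muv)$, these two bounds pin $s=R(1-o(1))$ and hence $\norm{\w^\perp}^2=\norm{\w^*}^2-s^2=o(R^2)$, that is, $\w^*=\frac{R}{2\gamma}(\nuv-\muv)(1-o(1))$, which is \eqref{eq:w-star}.

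The main obstacle is the concentration transfer for the \emph{data-dependent} optimizer $\w^*$: \Cref{lem:xt-i-concentration} controls $\inner{\Xt_i-m(i),\w}$ sharply only for a \emph{fixed} direction $\w$, while a uniform $\eps$-net bound over $\norm{\w}\le R$ would pay a factor $\sqrt d$ that \Cref{assumption:1} forbids absorbing into the lower-order term. The way I would resolve this is to exploit that the loss depends on $\w$ only through the separation margin $\inner{\nuv-\muv,\w}$, up to a shift absorbed by the free bias $b$ (since $m_1-m_0$ is the unique class-dependent quantity and it is parallel to $\nuv-\muv$); minimizing the loss therefore already forces $\w^*$ into a narrow cone about $\nuv-\muv$, after which the sharp fixed-direction estimate of \Cref{lem:xt-i-concentration} applies at $\w^*$ itself.
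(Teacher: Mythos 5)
Your proposal follows the paper's own route almost step for step: compare the optimizer against the feasible ansatz $(\wt,\bt)$ of \Cref{lem:loss-wt}, convert the resulting small loss into per-node margin bounds, transfer those margins from $\Xt_i$ to the conditional means via \Cref{lem:xt-i-concentration}, then subtract the two class inequalities and use $m_1-m_0=\Gamma(p,q)(\nuv-\muv)$ together with Cauchy--Schwarz to pin down $\w^*$. Your localization to a single minimizing node $i_0$ in each class is a mild refinement: the paper instead bounds every node's loss by $|S|$ times the total loss, paying an additive $\log|S|$ in the exponent; both versions implicitly need $R\gamma\Gamma(p,q)\to\infty$ to absorb the additive constants, despite the ``for any $R>0$'' in the statement, so this difference is cosmetic.

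The one place where you genuinely depart from the paper is your final paragraph, and the argument there does not work. You are right about the obstacle: the proof of \Cref{lem:xt-i-concentration} is a fixed-direction estimate, and a uniform bound over $\norm{\w}\le R$ is unavailable at the claimed scale, since $\sup_{\norm{\w}=1}\abs{\inner{\Xt_i-m(i),\w}}=\norm{\Xt_i-m(i)}\approx\sqrt{2/(n(p+q))}$, which exceeds $\sqrt{\log n/(dn(p+q))}$ by a factor $\sqrt{d/\log n}\to\infty$ under \Cref{assumption:1}. But your proposed fix is circular. The claim that the loss depends on $\w$ only through $\inner{\nuv-\muv,\w}$ is false: only the class-dependent part of the conditional means is parallel to $\nuv-\muv$ (the common part $\tfrac12(\muv+\nuv)$ can indeed be absorbed into $b$), whereas the noise projections $\inner{\Xt_i-m(i),\w}$ depend on every component of $\w$, and ruling out that the optimizer exploits those directions is precisely the content of \eqref{eq:w-star}. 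In other words, ``minimizing the loss forces $\w^*$ into a narrow cone about $\nuv-\muv$'' is the conclusion you are trying to prove, not a fact you may invoke in order to apply the fixed-direction estimate at $\w^*$. For what it is worth, the paper does not close this point either --- it simply applies \Cref{lem:xt-i-concentration} at the data-dependent $\w^*$, taking its (over-stated) ``any unit vector'' phrasing at face value --- so your write-up is no less rigorous than the published proof; but the patch you offer should be removed or replaced, since as stated it substitutes an incorrect justification for a missing one.
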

\begin{proof}
Fix $R>0$ and let $(\w^*(R),b^*(R))$ be the solutions to the problem in \eqref{eq:OPT} with norm constraint $R$. Let the training sample be $(A,X)\sim\CSBM(n,p,q,\muv,\nuv)$. Then we have that
\[
\OPT_d(A,X,R) = L(A,X,\w^*,b^*) \le L(A,X,\wt,\bt),
\]
where $(\wt,\bt)$ are defined in \Cref{lem:loss-wt}. Let $\Xt_i = (D^{-1}\At X)_i$. Now we focus our scope to the event that for every $i\in[n]$, and $R>0$
\[
    \inner{\Xt_i,\wt} + \bt = (2\veps_i-1)R\gamma\Gamma(p,q)(1+o(1)).
\]
Note that from \eqref{eq:xt-wt-bt} this event occurs with probability at least $1 - n^{-c}$ for $c$ large but $O(1)$. Since $(\w^*,b^*)$ are solutions to \eqref{eq:OPT}, on this event we have for all $i$ that
\begin{align*}
    \inner{\Xt_i,\w^*} + b^* &\le -R\gamma\Gamma(p,q)(1-o(1)) && \text{ for } \veps_i=0,\\
    \inner{\Xt_i,\w^*} + b^* &\ge R\gamma\Gamma(p,q)(1-o(1)) && \text{ for } \veps_i=1.
\end{align*}
Note that \Cref{lem:xt-i-concentration} implies that with probability at least $1 -  n^{-c}$, for all $i$ we also have
\[
|\inner{\Xt_i-m_{\veps_i}(1+o(1)),\w^*}| \le O\left(\norm{\wt}\sqrt{\frac{\log n}{dn(p+q)}}\right)
\]
Since $\norm{\w^*}\le R$, we conclude that
\begin{align*}
\inner{m_0,\w^*} + b^* &\le -R\gamma\Gamma(p,q)(1 - o(1)),\\
\inner{m_1,\w^*} + b^* &\ge R\gamma\Gamma(p,q)(1 - o(1)).
\end{align*}
that is, \eqref{eq:m0-train} and \eqref{eq:m1-train} hold as desired. It remains to show \eqref{eq:w-star}. Subtracting \eqref{eq:m0-train} from \eqref{eq:m1-train} we obtain
\begin{equation}\label{eq:m1-mminus-m0}
    \inner{m_1 - m_0,\w^*} = \frac{p-q}{p+q}\inner{\nuv-\muv,\w^*} \ge 2R\gamma\Gamma(p,q)(1 - o(1)).
\end{equation}
This implies that $\norm{\w^*} \ge R(1 - o(1))$. Since $\norm{\w^*}\le R$ from the optimization constraint, we have
\[
1 - o(1)\le \frac{\inner{\nuv-\muv,\w^*}}{\norm{\muv-\nuv}\norm{\w^*}} \le 1.\qedhere
\]
\end{proof}

\subsection{Proof of Theorem 2}
Now we turn to the proof of \Cref{thm:generalization}.

\begin{proof}[Proof of \Cref{thm:generalization}]
Consider a test sample $(A',X')\sim\CSBM(n',p',q',\muv,\nuv)$. Let $\Xt'$ be the corresponding convolution $D'^{-1}\At' X'$. Similar to \eqref{eq:mi-0}, \eqref{eq:mi-1} and \eqref{eq:m0-m1} we also define $m'(i)$, $m'_0$ and $m'_1$ corresponding to the sample $(A',X')$.
We restrict our calculations to the case where $\veps_i=0$. Note that
\[
    m'_0 - m_0 = \frac{qp' - pq'}{(p+q)(p'+q')}(\muv-\nuv).
\]
From \Cref{lem:xt-i-concentration,lem:w*-aligns-with-means}, we see that for $c'>0$ large but $O(1)$, with probability at least $1 - (n')^{-c'}$ we have that for any $R>0$
\[
\inner{\Xt'_i,\w^*} = \inner{m'_{\veps_i}, \w^*}(1 + o(1))
\]
for some $\eta>0$ and for all $i\in[n']$.

Let $\gamma=\frac{\norm{\muv-\nuv}}2$. Therefore, by the same lemmas, we have that for any $c,c'>0$ large enough, with probability $1 - O(1/(n')^{c'}+1/n^{c})$, when $\veps_i=0$
\begin{align*}
    \inner{\Xt'_i,\w^*} + b^* &= \inner{m_0', \w^*}(1 + o(1)) + b^*\\
    &= \inner{m_0'-m_0, \w^*}(1 + o(1)) + \inner{m_0, \w^*}(1 + o(1)) + b^*\\
    &\le \frac{qp' - pq'}{(p+q)(p'+q')}\inner{\muv-\nuv,\w^*} - R\gamma\Gamma(p,q)(1 - o(1))\\
    &\le -\frac{4d\gamma^2(qp' - pq')}{2\gamma(p+q)(p'+q')}(1 - o(1)) - R\gamma\Gamma(p,q)(1 - o(1))\\
    &= R\gamma\left(\frac{2(pq' - qp')}{(p+q)(p'+q')} - \frac{p-q}{p+q}\right)(1 - o(1))\\
    &= -R\gamma\Gamma(p',q')(1 - o(1)).
\end{align*}
The first inequality above uses \eqref{eq:m0-train}, while the second inequality follows from \Cref{lem:w*-aligns-with-means}. Similarly, for $\veps_i=1$ we obtain
\begin{align*}
    \inner{\Xt'_i,\w^*} + b^* &\ge R\gamma\Gamma(p',q')(1 - o(1)).
\end{align*}
The loss is then given by
\begin{align*}
    L(A',X',\w^*,b^*) &= \frac1{n'}\sum_{i\in[n']}\log\left(1+\exp\Big((1-2\veps_i)(\inner{\Xt',\w^*} + b^*)\Big)\right)\\
    &\le \frac1{n'}\sum_{i\in[n']}\log\left(1+\exp\Big(-R\gamma\Gamma(p',q')(1 - o(1))\Big)\right)
\end{align*}
Now it follows from \eqref{eq:log-1-ex-property} that on this event,
\[
L(A',X',\w^*,b^*) \le C\exp\left(-R\gamma \Gamma(p',q')(1 - o(1))\right).\qedhere
\]
\end{proof}

\section{Conclusion and Future Work}
\label{conclusion}
In this work we study the benefits of graph convolution for the problem of semi-supervised classification of data. Using the contextual stochastic block model we show that graph convolution can transform data which is not linearly separable into data which is linearly separable. However, we also show empirically that graph convolution can be disadvantageous if the intra-class edge probability is close to the inter-class edge probability. Furthermore, we show that a classifier trained on the convolved data can generalize to out-of-distribution data which have different intra- and inter-class edge probabilities. 

Our work is only the first step in understanding the effects of graph convolution for semi-supervised classification. There is still a lot of future work to be done. Below we indicate two questions that need to be addressed.
\begin{enumerate}
    \item Graph neural networks~\cite{HamilBook} have recently dominated practical aspects of relational machine learning. A lot of these models utilize graph convolution in the same way that we do in this paper. However, the key point of these models is to utilize more than $1$ layers in the graph neural network. It is still an open question to understand the benefits of graph convolution for these highly non-linear models for semi-supervised node classification.
    \item Our analysis holds for graphs with average number of neighbors at least $\omega(\log^2 n)$. Since a lot of large-scale data consist of sparse graphs it is still an open question to extend our results to sparser graphs where the average number of neighbors per node is $O(1)$.
\end{enumerate}

We end by noting here that while we only study the two class setting, we expect that our arguments  extend to the $k$-class setting with $k=O(1)$ with only minor modifications under natural assumptions.\footnote{ 
Let us briefly sketch how our arguments extend to the $k$-class setting. As we are only interested in the relative performance of logistic regression as compared to graph convolutions, the key issue is to show that $k-1$ one-v.s.-all linear classifiers fail or perform well. The failure threshold for logistic regression evidently generalizes as if any two means are closer than $1/\sqrt{d}$, a one-v.s.-all classifier must incorrectly classify a large fraction of the corresponding samples, a similar argument holds if one of the means is not an extreme point of the convex hull of the means as can be seen, e.g., in the case of a mixture of three Gaussians whose means are co-linear. Furthermore, if the means are sufficiently far apart for the graph convolution to work in the two-class setting, a similar argument yields that the one-v.s.-all classifier separates there as well provided the means satisfy certain simple geometric constraints, e.g., the convex hull condition mentioned earlier. Once one shows that the classifiers perform well or poorly, the corresponding loss bounds are immediate as in the two-component case. }

\section*{Acknowledgements}
\label{acknowledgements}
K.F. would like to acknowledge the support of the Natural Sciences and Engineering Research Council of Canada (NSERC). Cette recherche a \'et\'e financ\'ee par le Conseil de recherches en sciences naturelles et en g\'enie du Canada (CRSNG), [RGPIN-2019-04067, DGECR-2019-00147].

A.J. acknowledges the support of the Natural Sciences and Engineering Research Council of Canada (NSERC). Cette recherche a \'et\'e financ\'ee par le Conseil de recherches en sciences naturelles et en g\'enie du Canada (CRSNG),  [RGPIN-2020-04597, DGECR-2020-00199].

\bibliography{references}
\bibliographystyle{plain}

\appendix
\section{Additional experiments}
\label{additional-experiments}
In this section we present the additional experiments. Our conclusions are similar to the ones made by the experiments in \cref{experiments}.

\subsection{Out-of-distribution generalization}
In this experiment we test the performance of the trained classifier on out-of-distribution datasets.
We perform this experiment for two different distances between the means, $16/\sqrt{d}$ and $2/\sqrt{d}$.
We train on a CSBM using various combinations of $p_{\train}$ and $q_{\train}$, while $p_{\train}>q_{\train}$. In all experiments we set $n=400$ and $d=60$. We test on CSBMs with $n=400$, $d=60$ and varying $p_{\test}$ and $q_{\test}$ while $p_{\test}>q_{\test}$. The results for distance of means equal to $2/\sqrt{d}$ are presented in \Cref{fig:CSBM-1}, and the results for distance between the means equal to $16/\sqrt{d}$ are presented in \Cref{fig:CSBM-2}.
\begin{figure}[ht!]
	\centering
	\begin{subfigure}[t]{1.5in}
		\centering
		\includegraphics[width=\columnwidth]{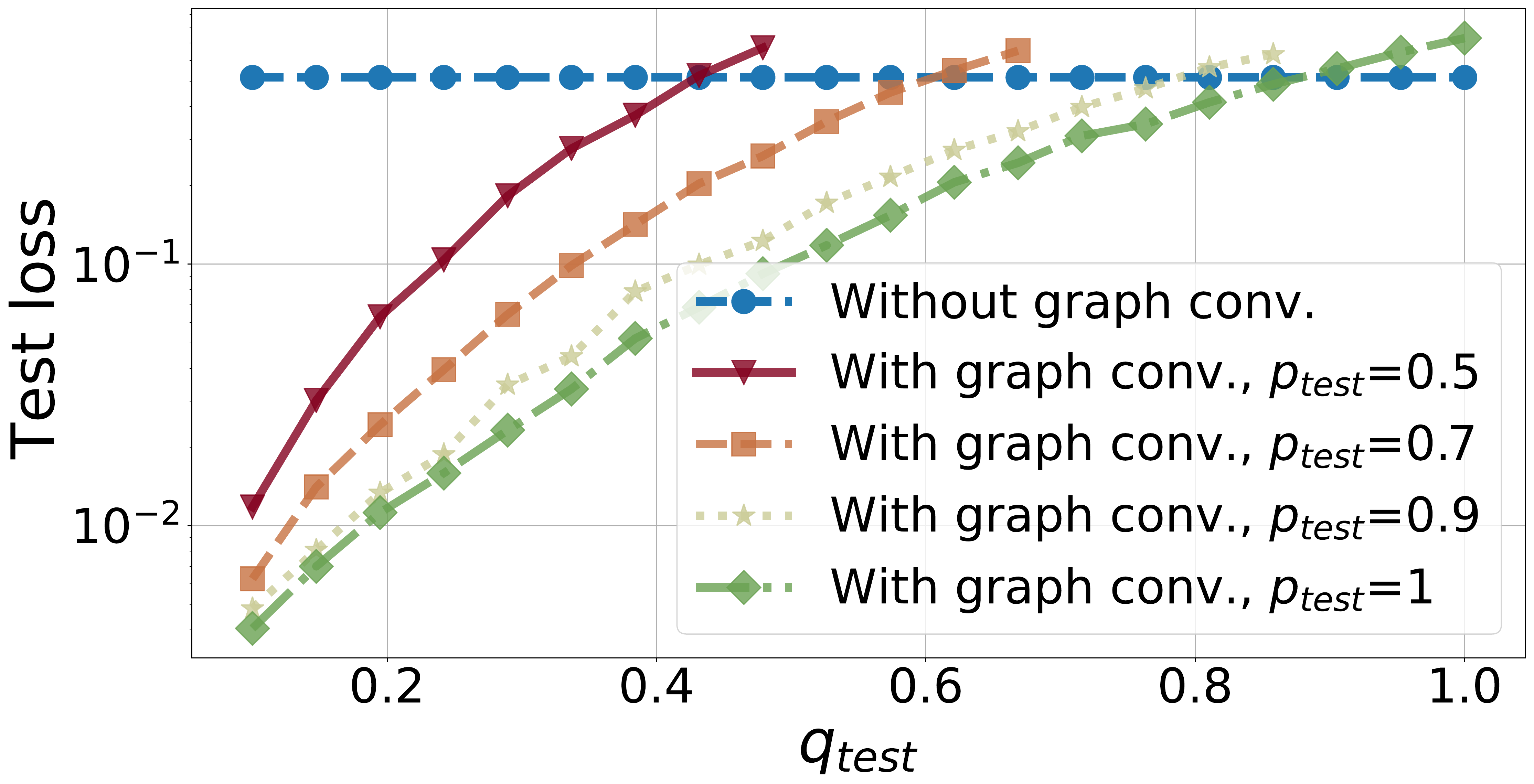}
		\caption{$p=0.5,q=0.38$}\label{fig:CSBM-1-1}	
	\end{subfigure}
	\begin{subfigure}[t]{1.5in}
		\includegraphics[width=\columnwidth]{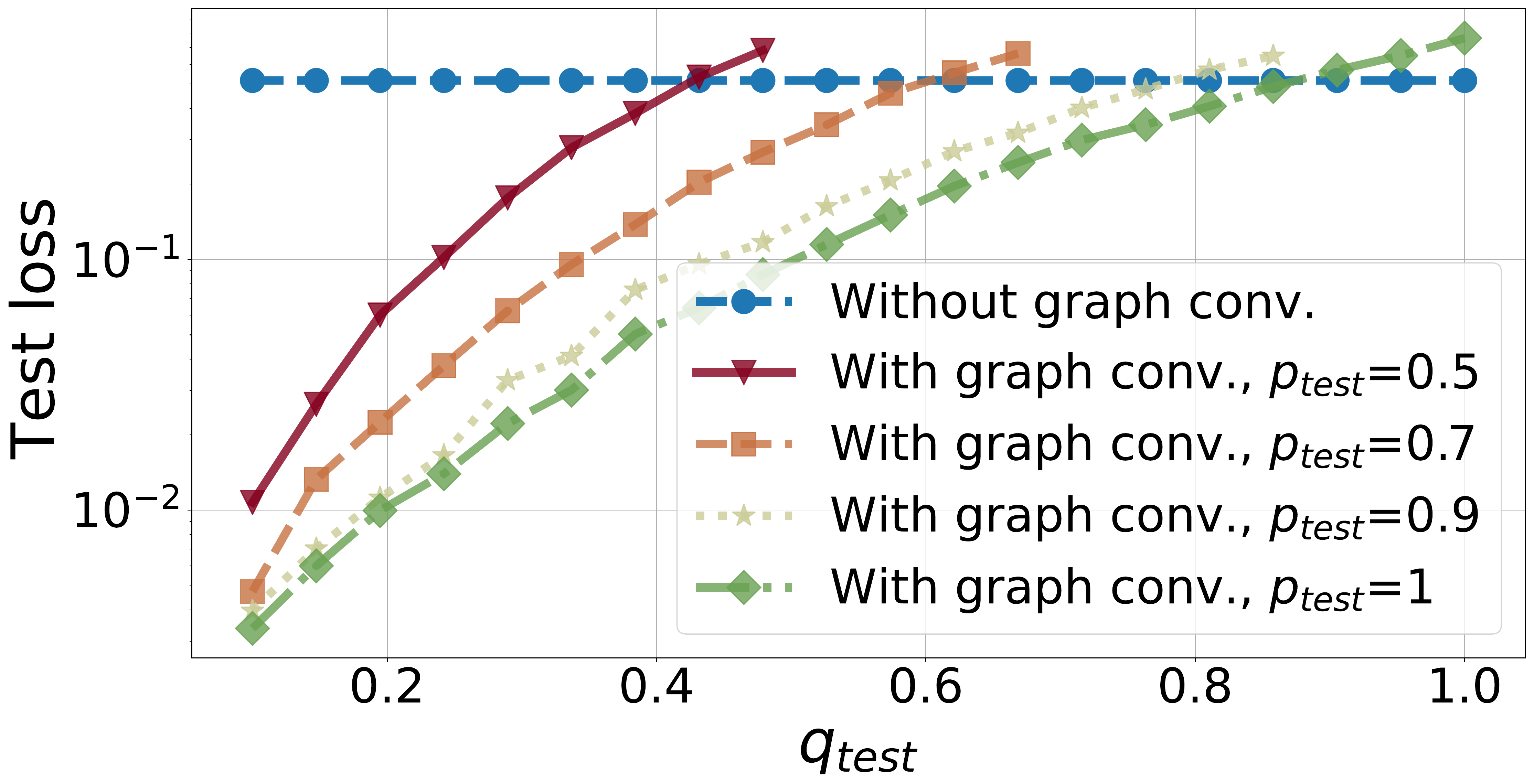}
		\caption{$p=0.7,q=0.1$}\label{fig:CSBM-1-2}
	\end{subfigure}
	\begin{subfigure}[t]{1.5in}
		\centering
		\includegraphics[width=\columnwidth]{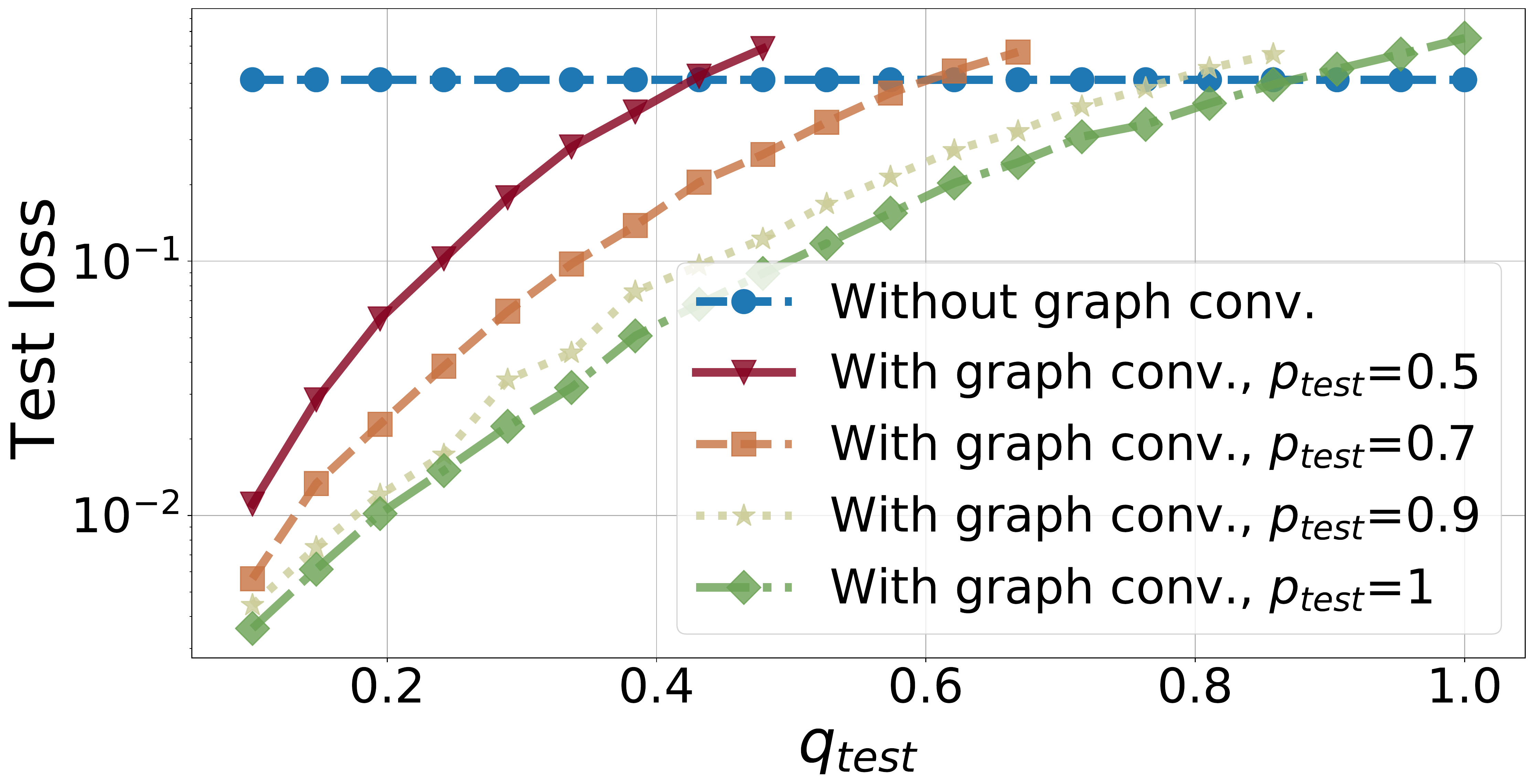}
		\caption{$p=0.7,q=0.38$}\label{fig:CSBM-1-3}	
	\end{subfigure}
	\begin{subfigure}[t]{1.5in}
		\includegraphics[width=\columnwidth]{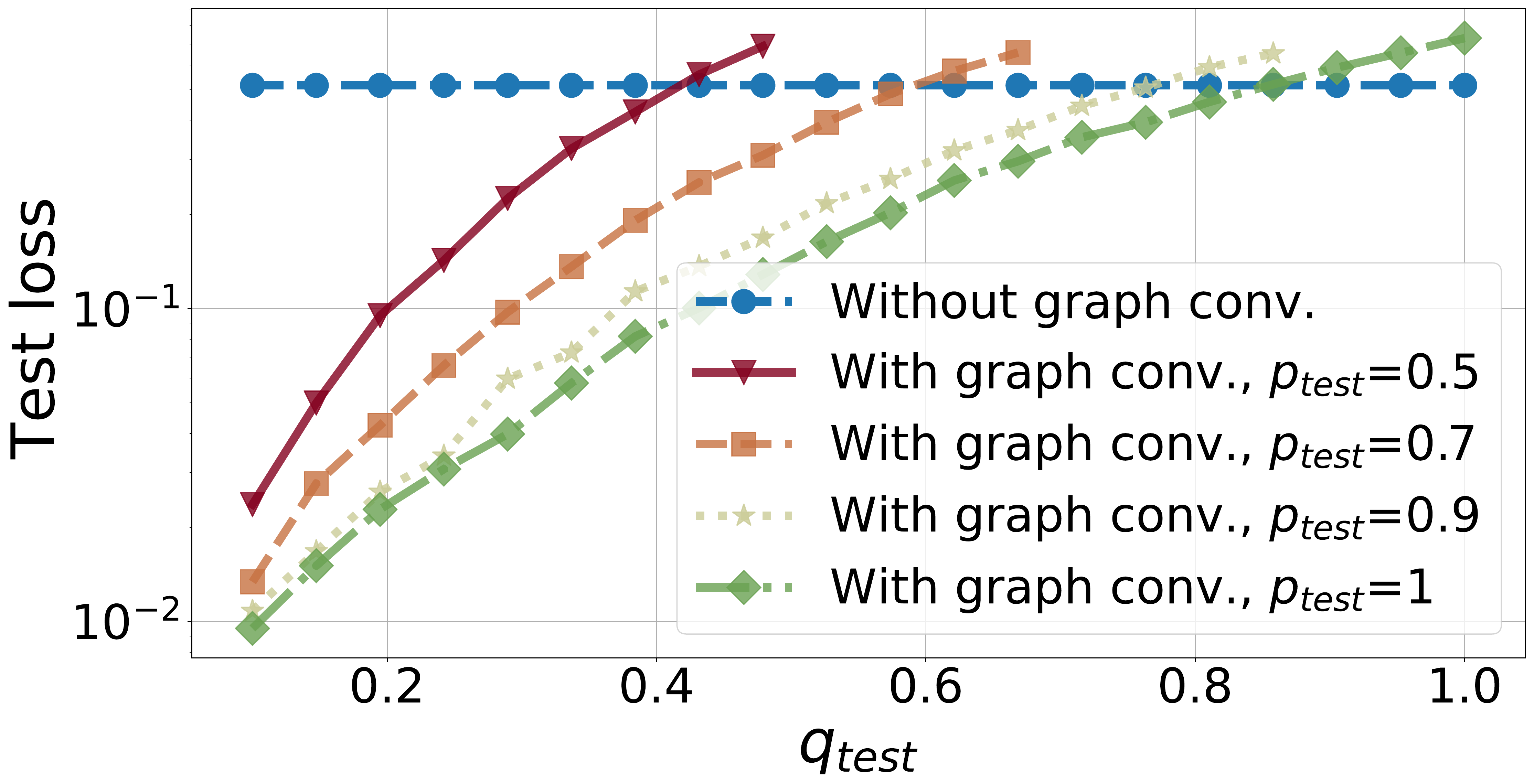}
		\caption{$p=0.7,q=0.67$}\label{fig:CSBM-1-4}
	\end{subfigure}
	\\\vspace{0.2cm}
	\begin{subfigure}[t]{1.5in}
		\centering
		\includegraphics[width=\columnwidth]{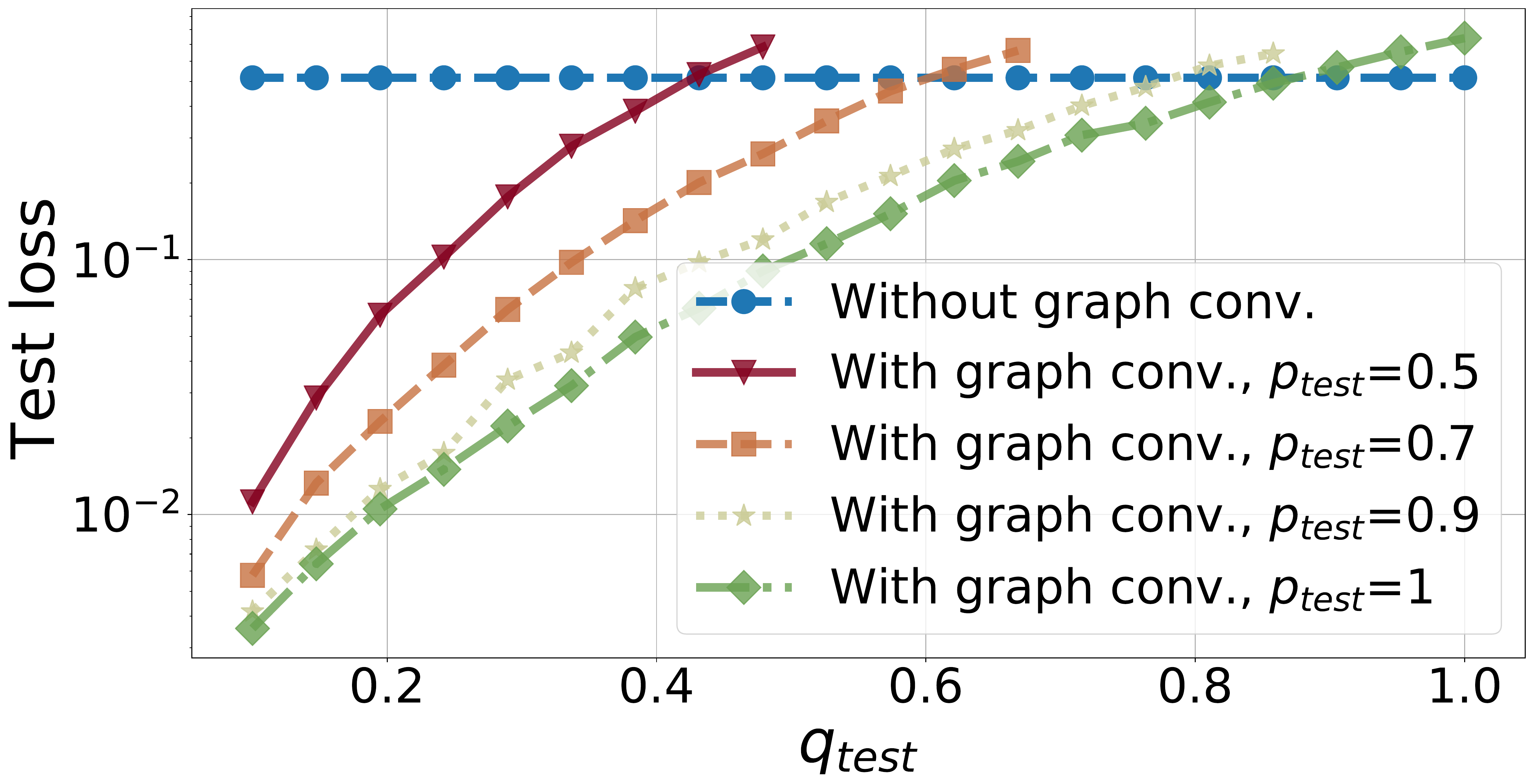}
		\caption{$p=0.9,q=0.1$}\label{fig:CSBM-1-5}	
	\end{subfigure}
	\begin{subfigure}[t]{1.5in}
		\includegraphics[width=\columnwidth]{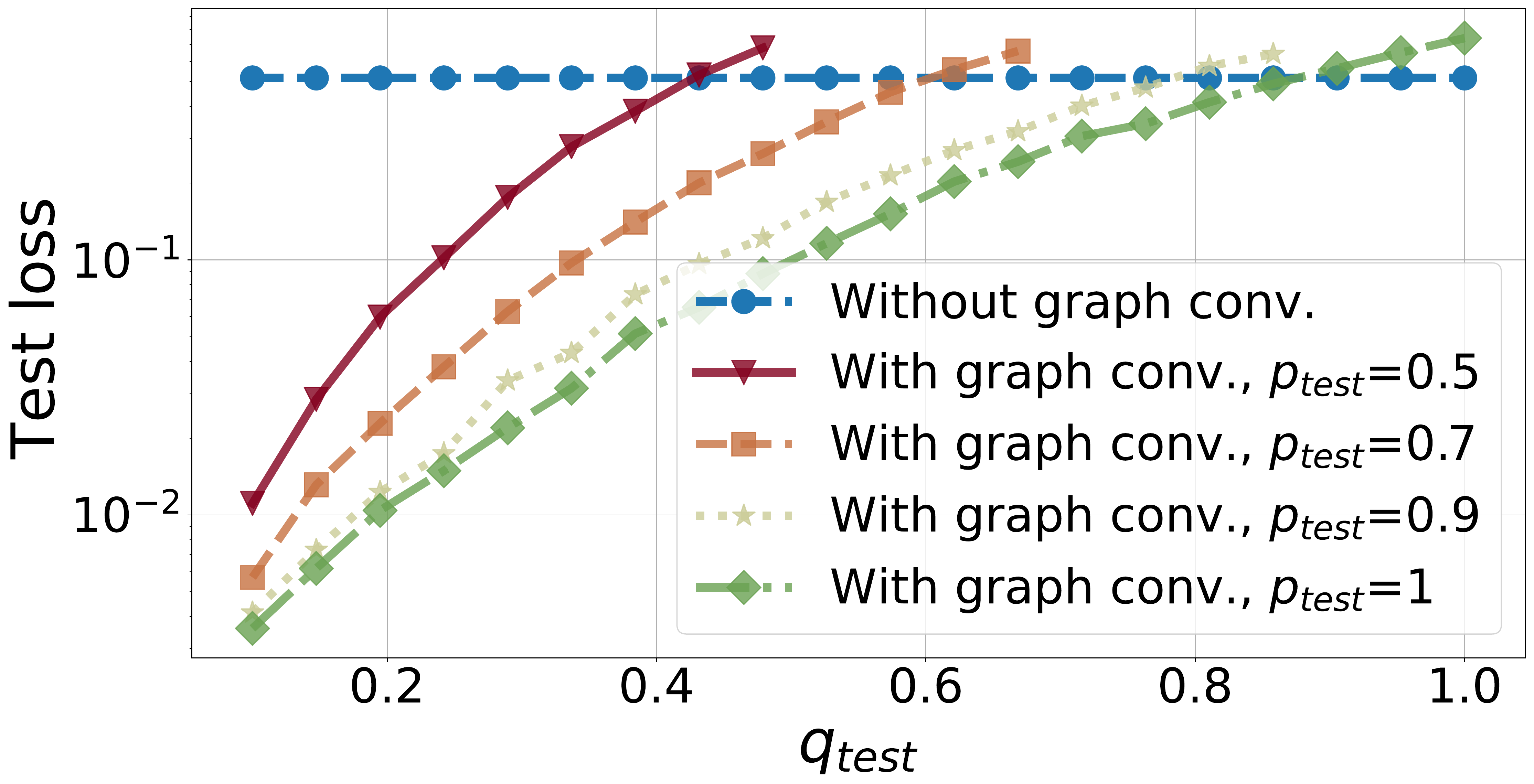}
		\caption{$p=0.9,q=0.38$}\label{fig:CSBM-1-6}
	\end{subfigure}
	\begin{subfigure}[t]{1.5in}
		\centering
		\includegraphics[width=\columnwidth]{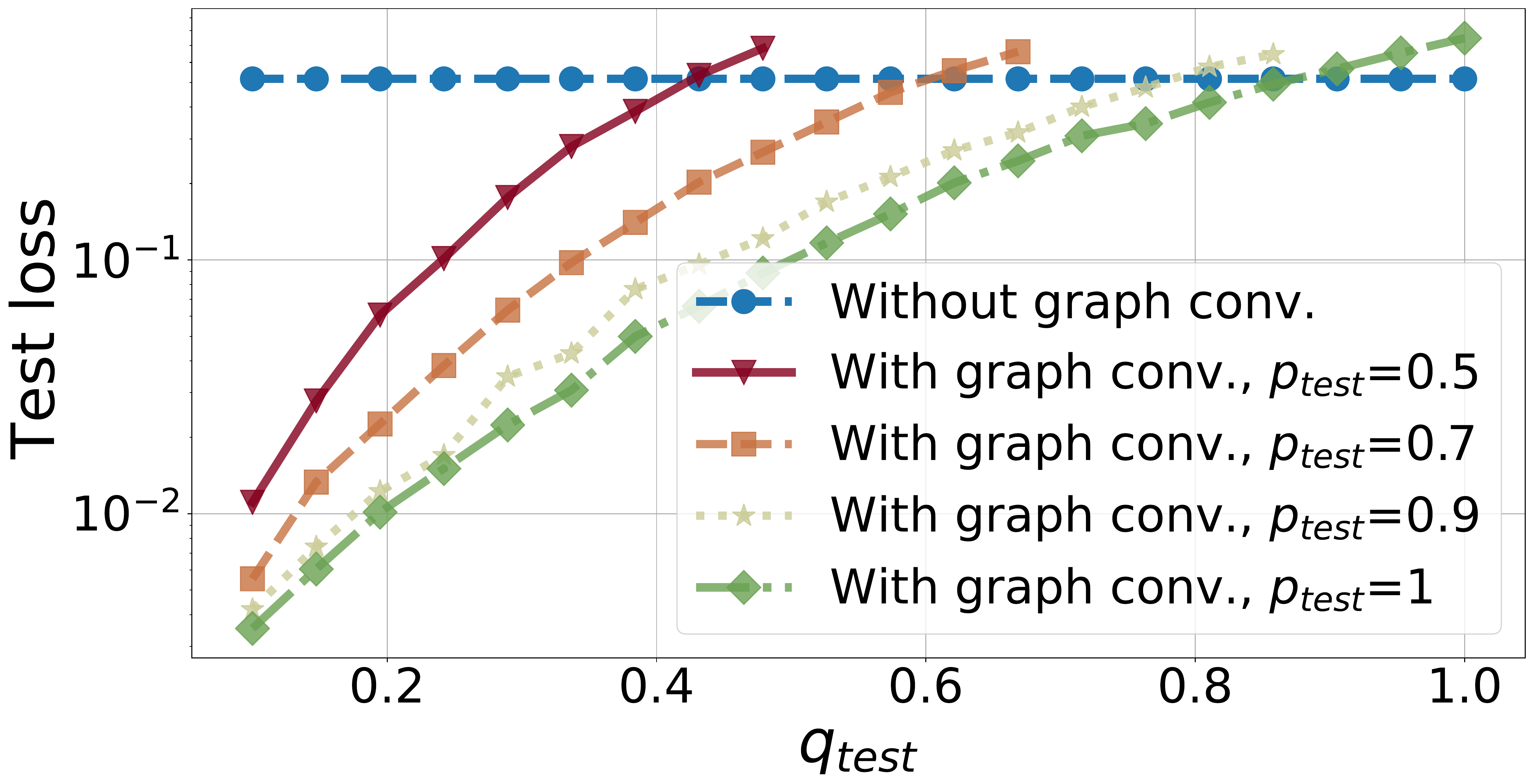}
		\caption{$p=0.9,q=0.67$}\label{fig:CSBM-1-7}	
	\end{subfigure}
	\begin{subfigure}[t]{1.5in}
		\includegraphics[width=\columnwidth]{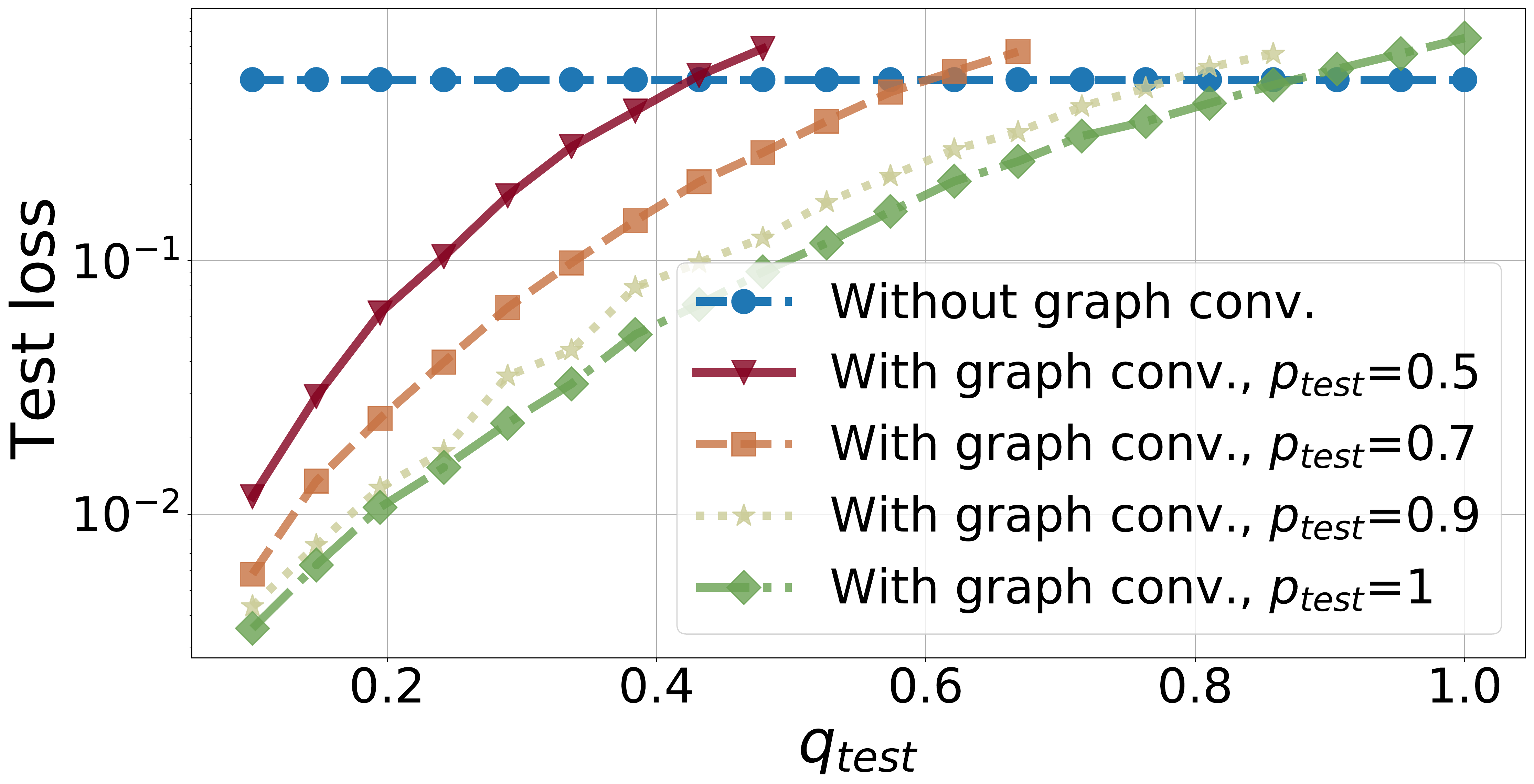}
		\caption{$p=1,q=0.1$}\label{fig:CSBM-1-8}
	\end{subfigure}
	\\\vspace{0.2cm}
	\begin{subfigure}[t]{1.5in}
		\centering
		\includegraphics[width=\columnwidth]{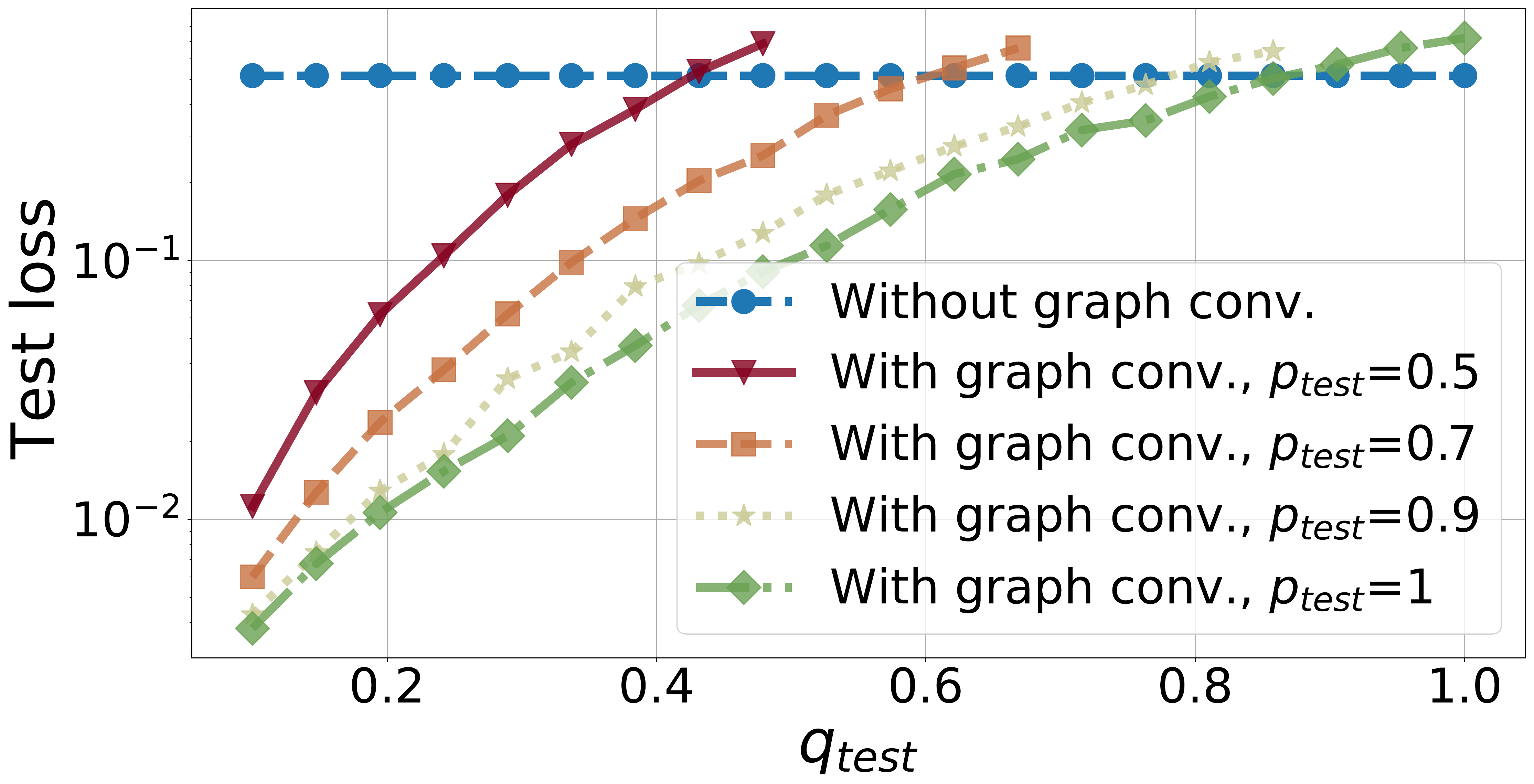}
		\caption{$p=1,q=0.38$}\label{fig:CSBM-1-9}	
	\end{subfigure}
	\begin{subfigure}[t]{1.5in}
		\includegraphics[width=\columnwidth]{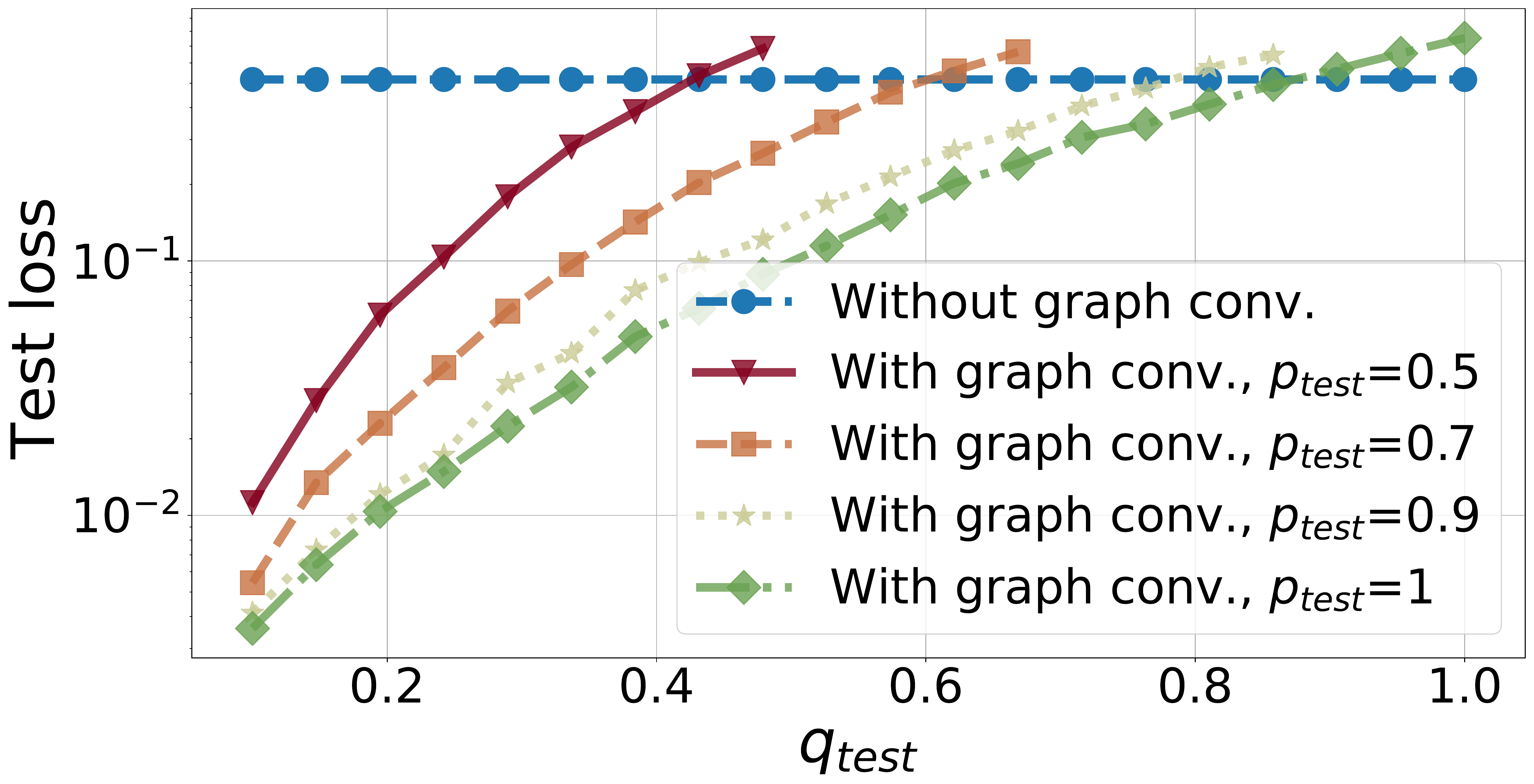}
		\caption{$p=1,q=0.67$}\label{fig:CSBM-1-10}
	\end{subfigure}
	\begin{subfigure}[t]{1.5in}
		\centering
		\includegraphics[width=\columnwidth]{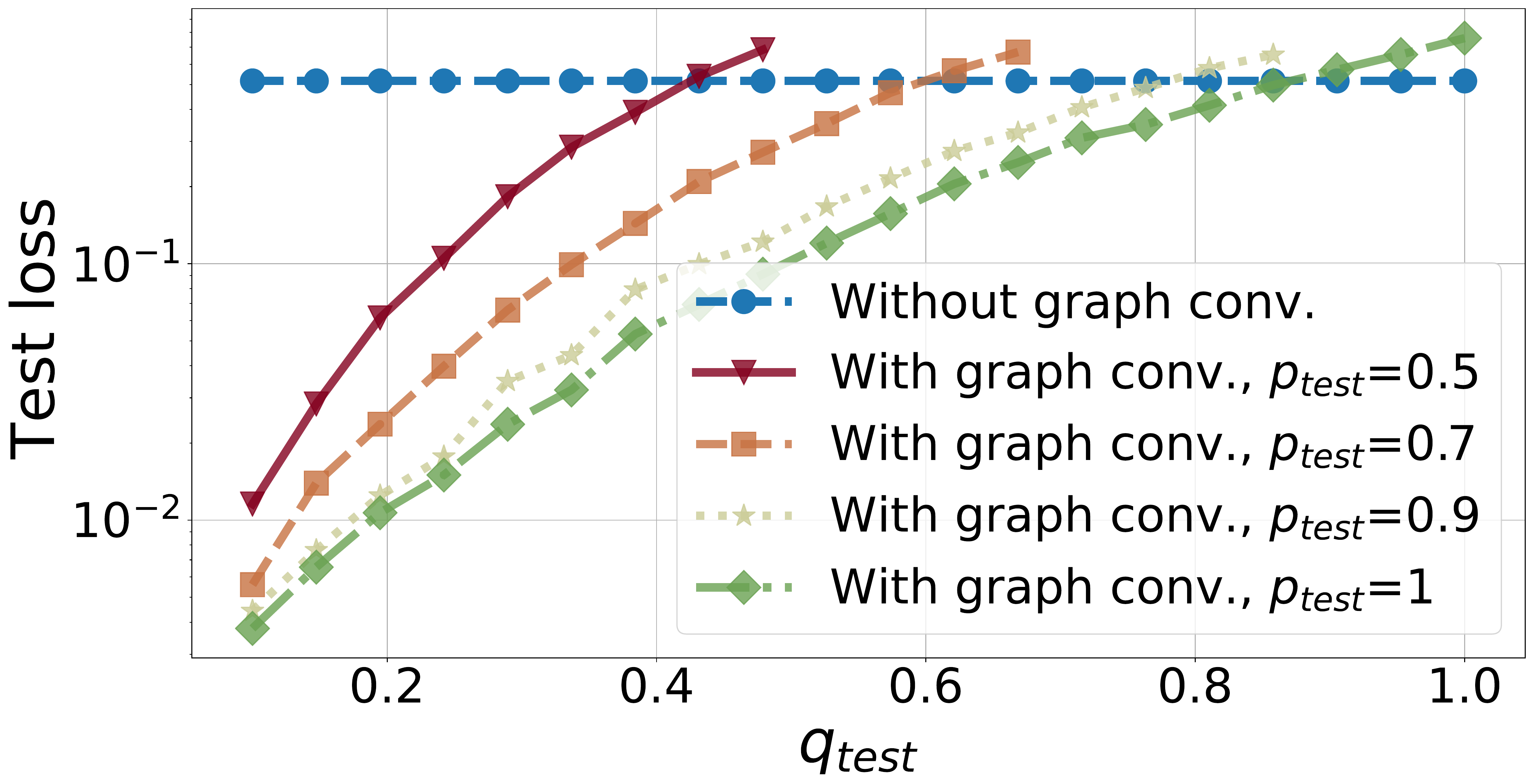}
		\caption{$p=1,q=0.95$}\label{fig:CSBM-1-11}	
	\end{subfigure}
	\caption{Out-of-distribution generalization for distance between the means equal to $2/\sqrt{d}$. The subcaption of each figure is the $p_{\train}$ and $q_{\train}$ pair. Note that we omit the sub-index $train$ from $p$ and $q$ in the subcaption due to space limitation. We test on CSBMs with $n=400$, $d=60$ and varying $p_{\test}$ and $q_{\test}$ while $p_{\test}>q_{\test}$ and fixed means. The $y$-axis is in log-scale.}\label{fig:CSBM-1}
\end{figure}

\begin{figure}[ht!]
	\centering
	\begin{subfigure}[t]{1.5in}
		\includegraphics[width=\columnwidth]{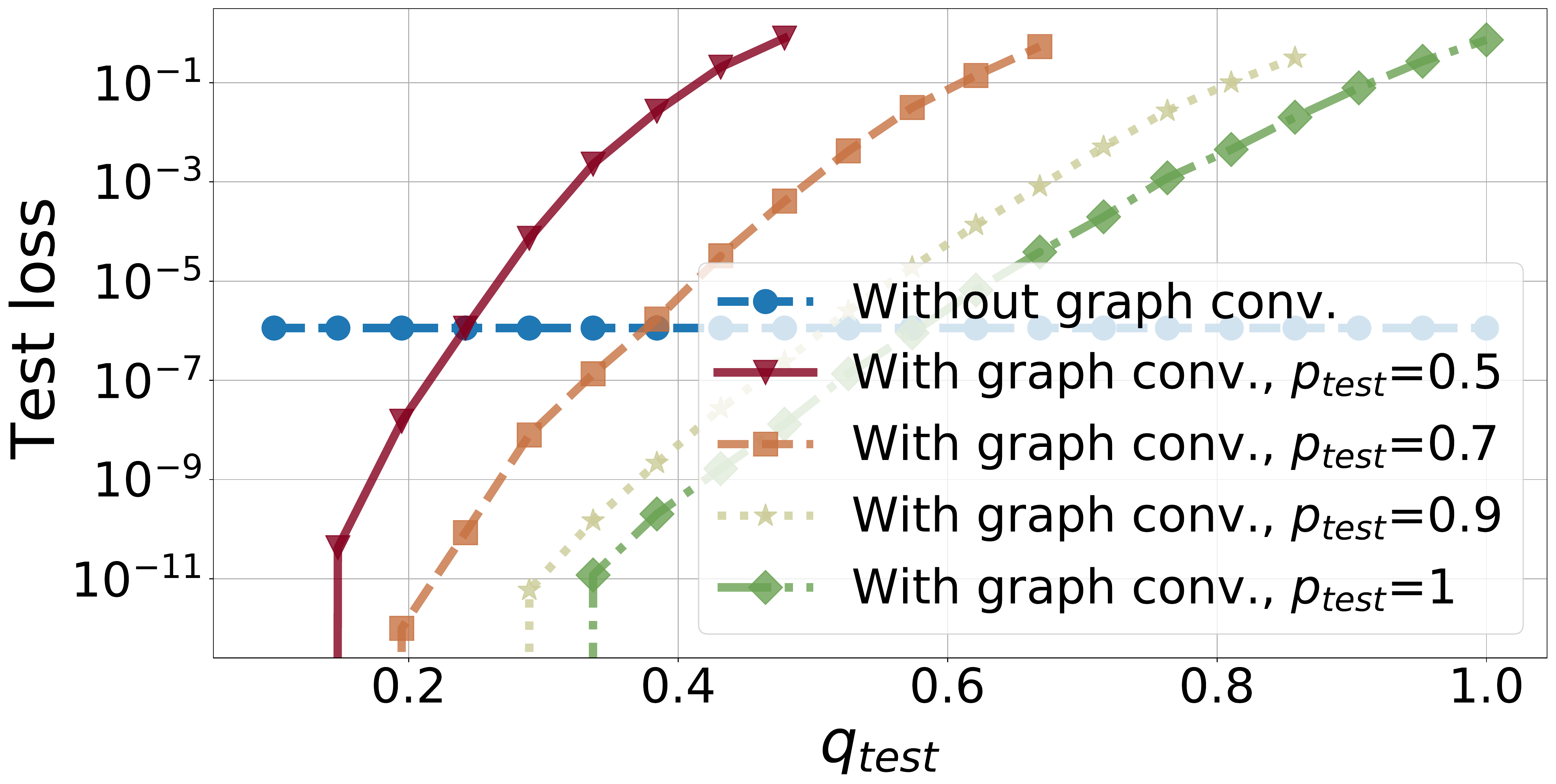}
		\caption{$p=0.5,q=0.38$}\label{fig:CSBM-2-1}
	\end{subfigure}
	\begin{subfigure}[t]{1.5in}
		\includegraphics[width=\columnwidth]{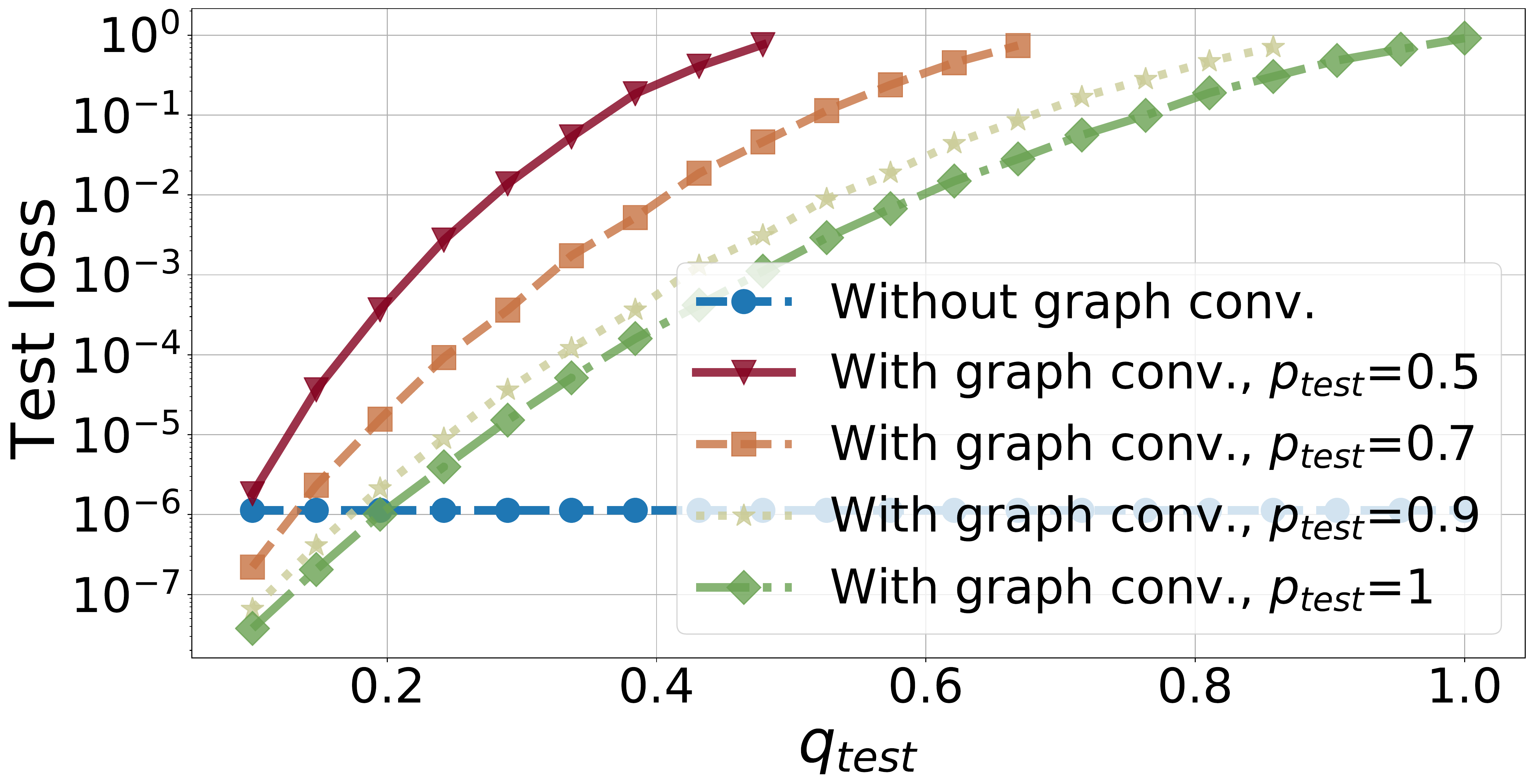}
		\caption{$p=0.7,q=0.1$}\label{fig:CSBM-2-2}
	\end{subfigure}
	\begin{subfigure}[t]{1.5in}
		\includegraphics[width=\columnwidth]{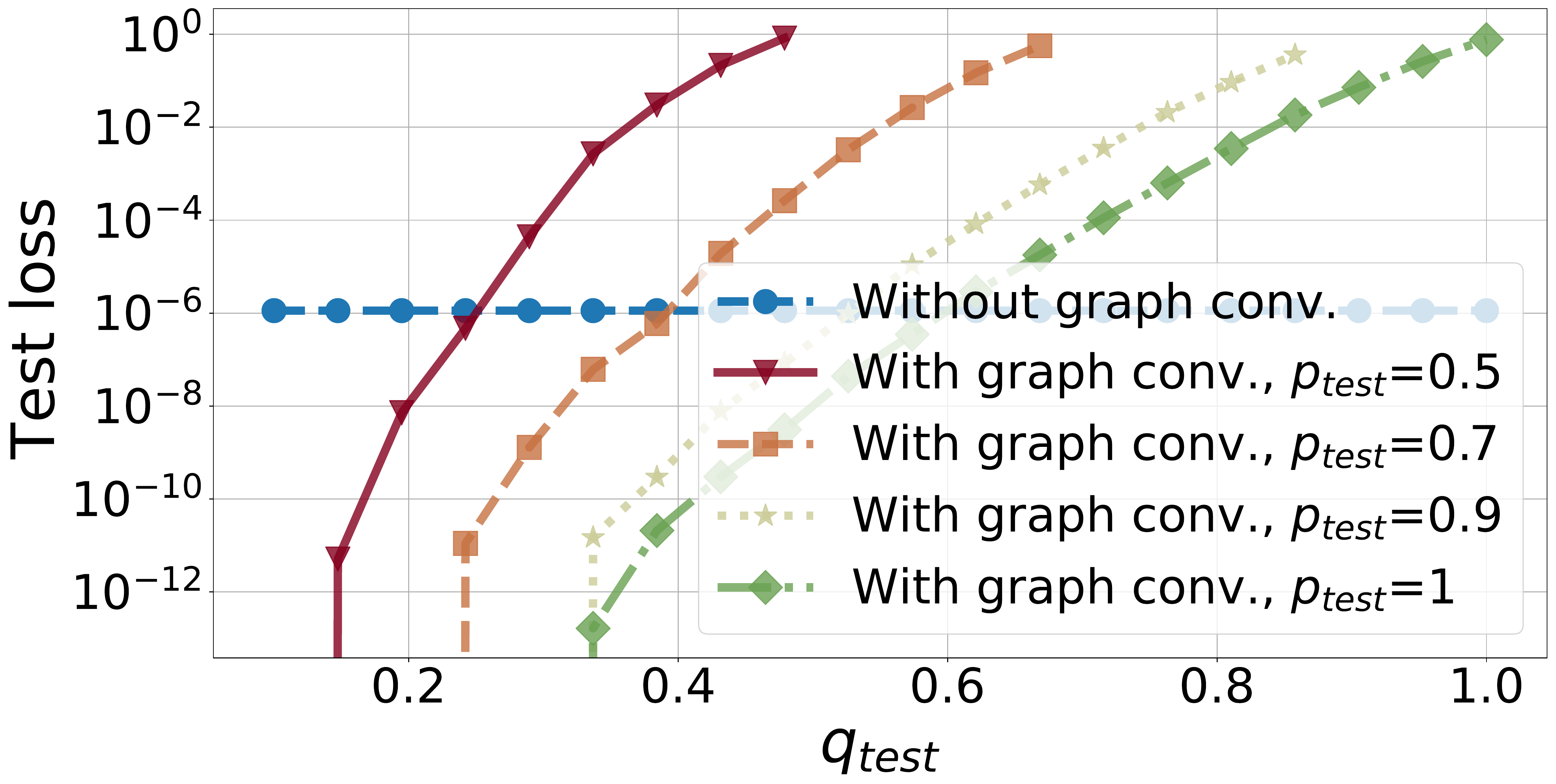}
		\caption{$p=0.7,q=0.38$}\label{fig:CSBM-2-3}
	\end{subfigure}
	\begin{subfigure}[t]{1.5in}
		\includegraphics[width=\columnwidth]{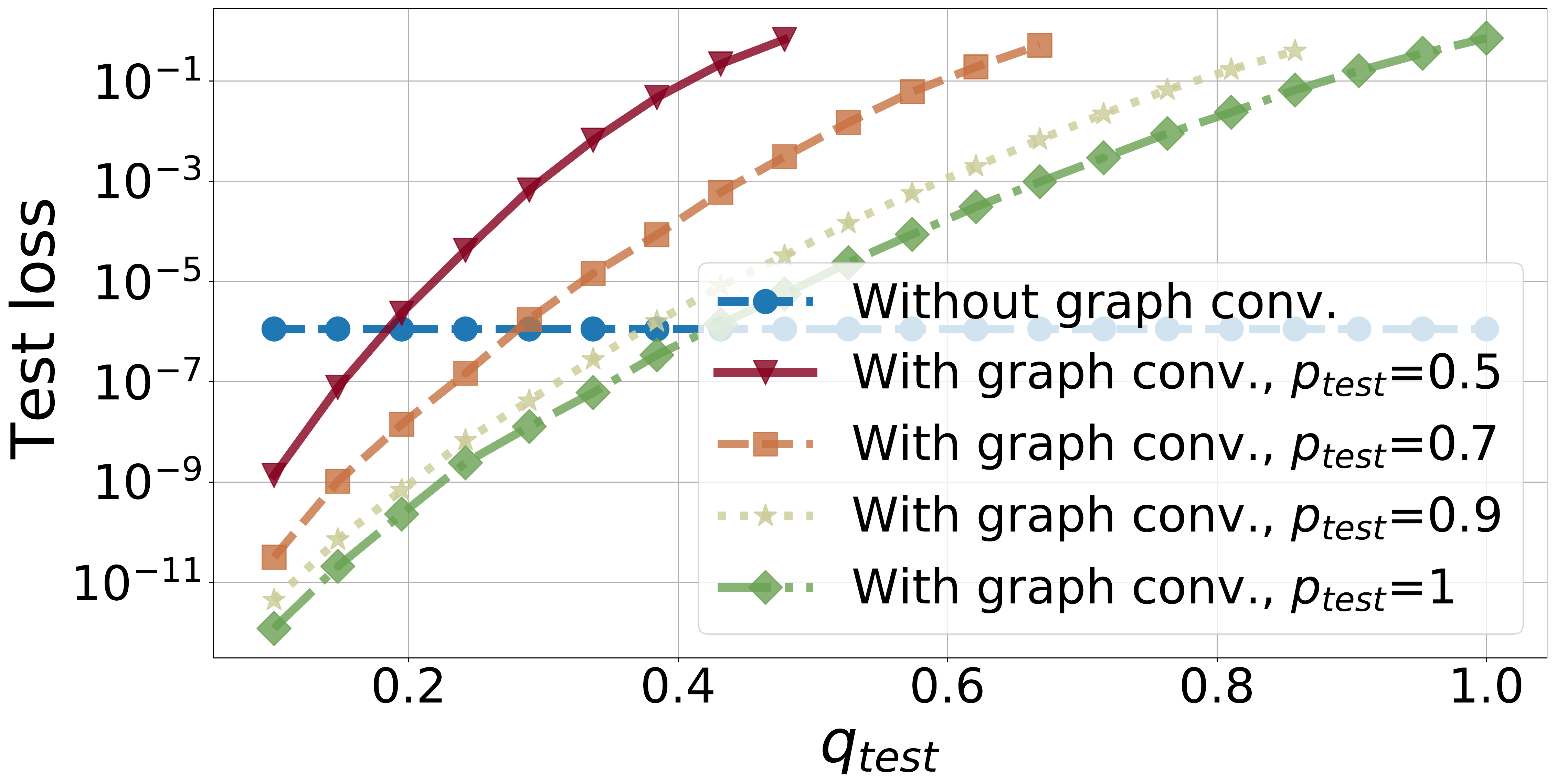}
		\caption{$p=0.7,q=0.67$}\label{fig:CSBM-2-4}
	\end{subfigure}
	\\\vspace{0.2cm}
	\begin{subfigure}[t]{1.5in}
		\includegraphics[width=\columnwidth]{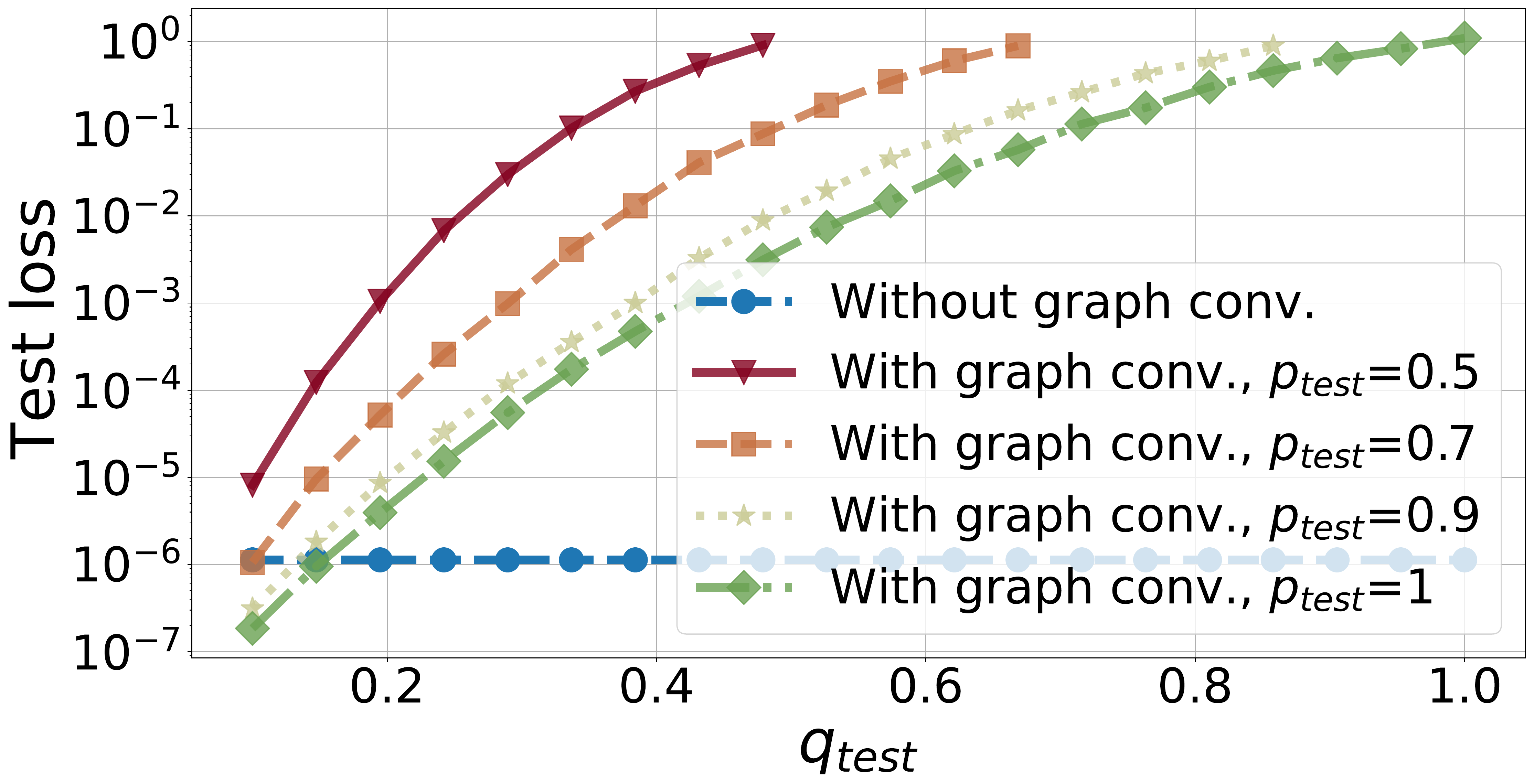}
		\caption{$p=0.9,q=0.1$}\label{fig:CSBM-2-5}
	\end{subfigure}
	\begin{subfigure}[t]{1.5in}
		\includegraphics[width=\columnwidth]{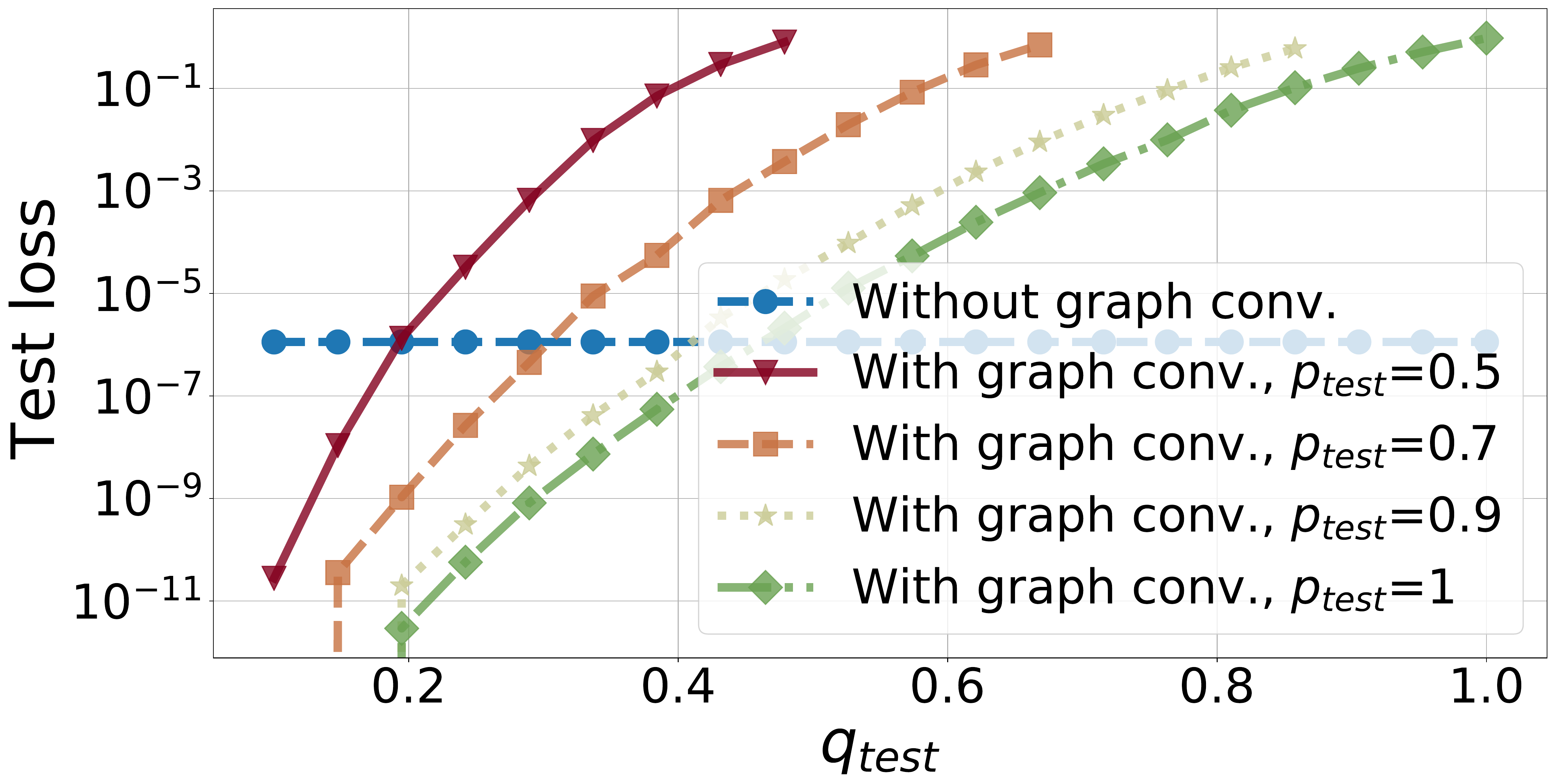}
		\caption{$p=0.9,q=0.38$}\label{fig:CSBM-2-6}
	\end{subfigure}
	\begin{subfigure}[t]{1.5in}
		\includegraphics[width=\columnwidth]{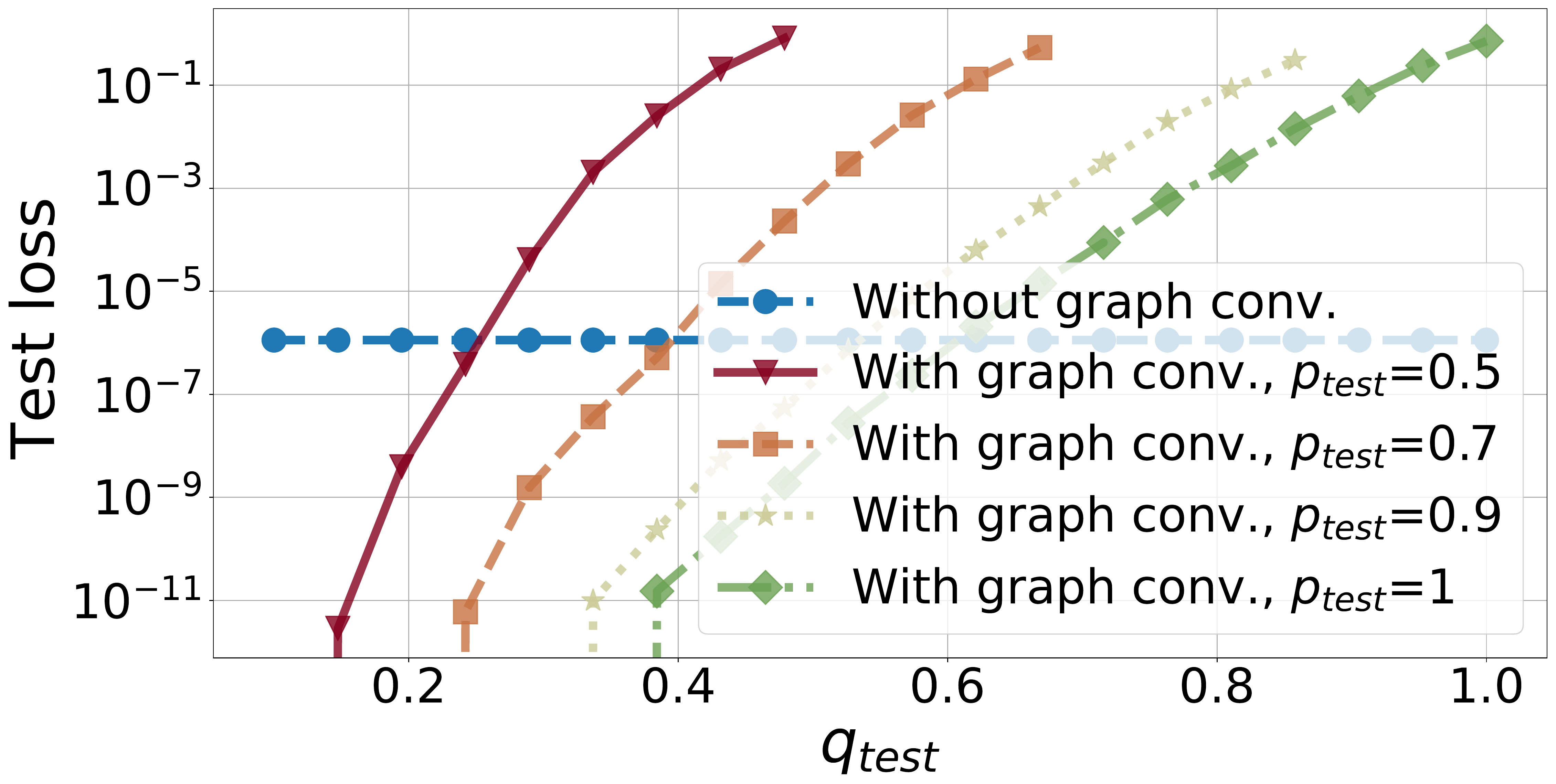}
		\caption{$p=0.9,q=0.67$}\label{fig:CSBM-2-7}
	\end{subfigure}
	\begin{subfigure}[t]{1.5in}
		\includegraphics[width=\columnwidth]{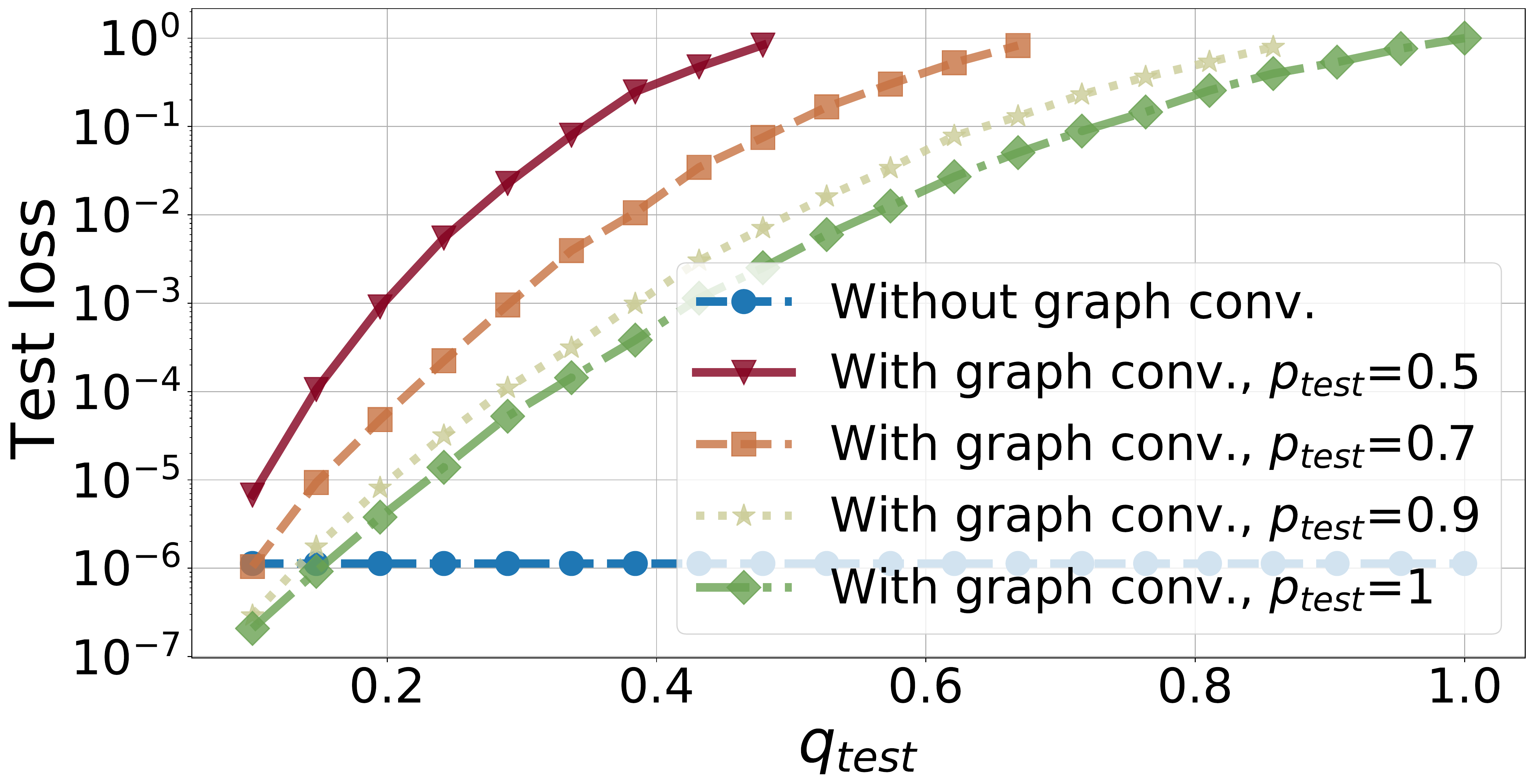}
		\caption{$p=1,q=0.1$}\label{fig:CSBM-2-8}
	\end{subfigure}
	\\\vspace{0.2cm}
	\begin{subfigure}[t]{1.5in}
		\includegraphics[width=\columnwidth]{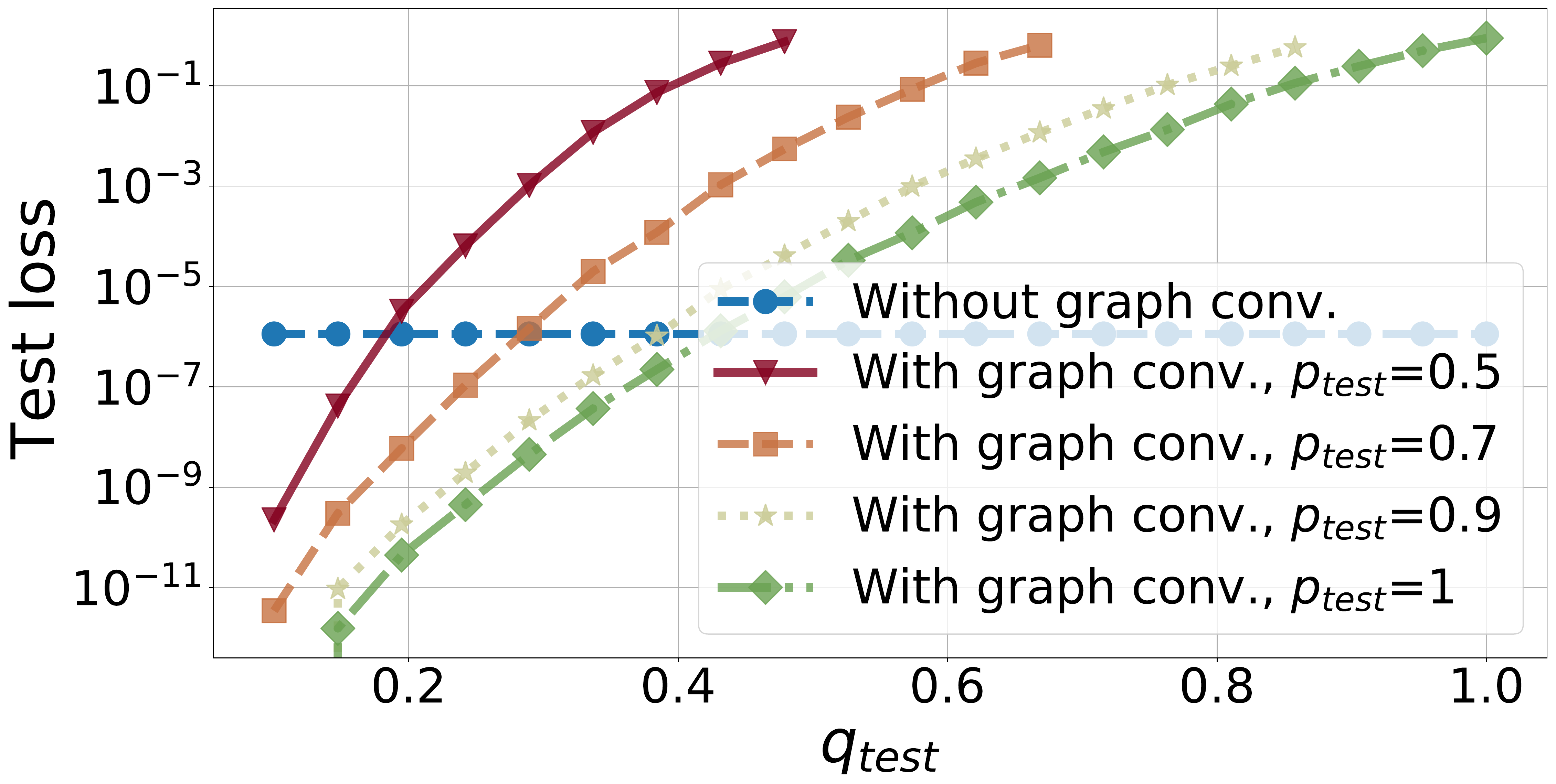}
		\caption{$p=1,q=0.38$}\label{fig:CSBM-2-9}
	\end{subfigure}
	\begin{subfigure}[t]{1.5in}
		\includegraphics[width=\columnwidth]{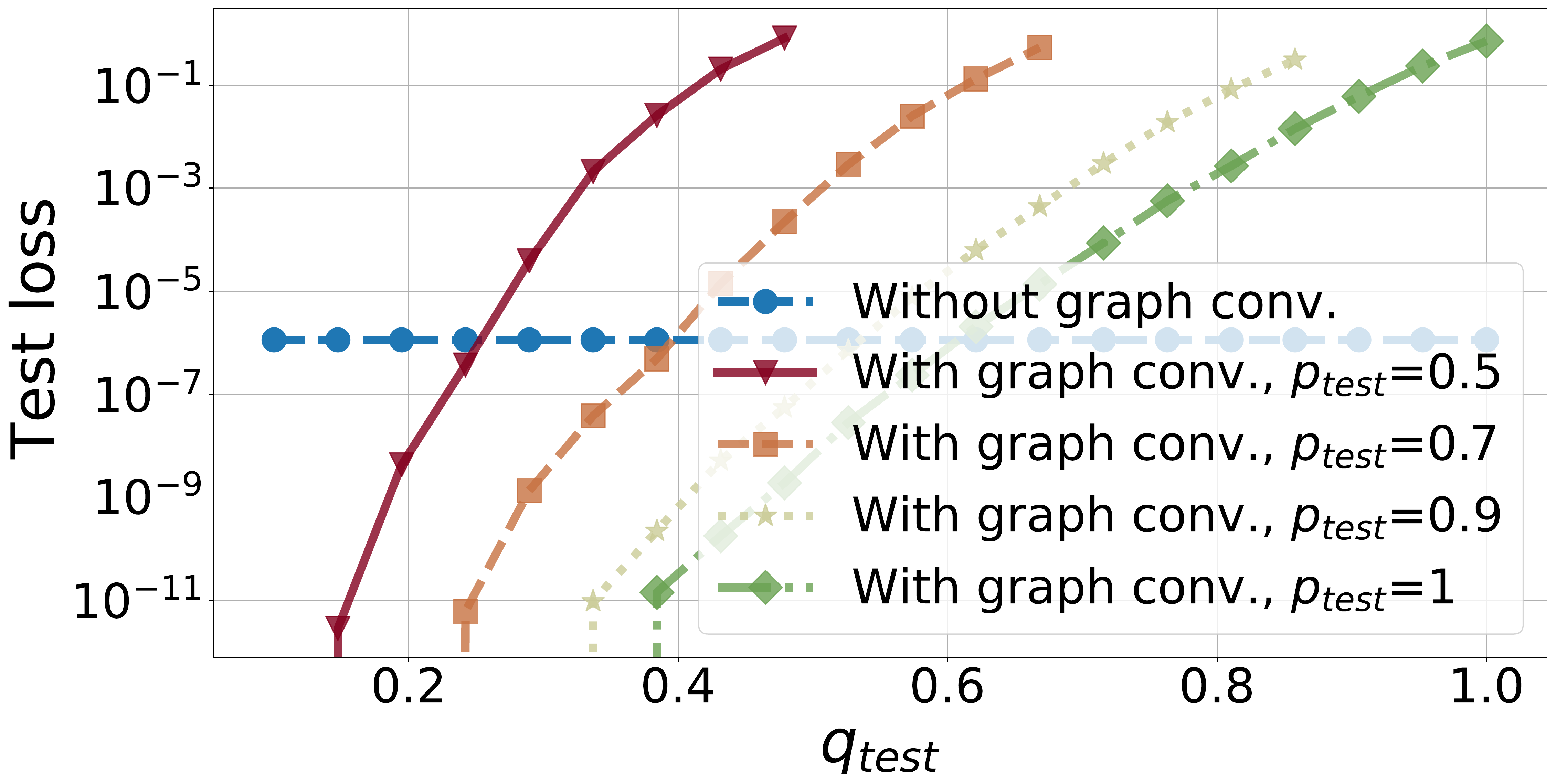}
		\caption{$p=1,q=0.67$}\label{fig:CSBM-2-10}
	\end{subfigure}
	\begin{subfigure}[t]{1.5in}
		\includegraphics[width=\columnwidth]{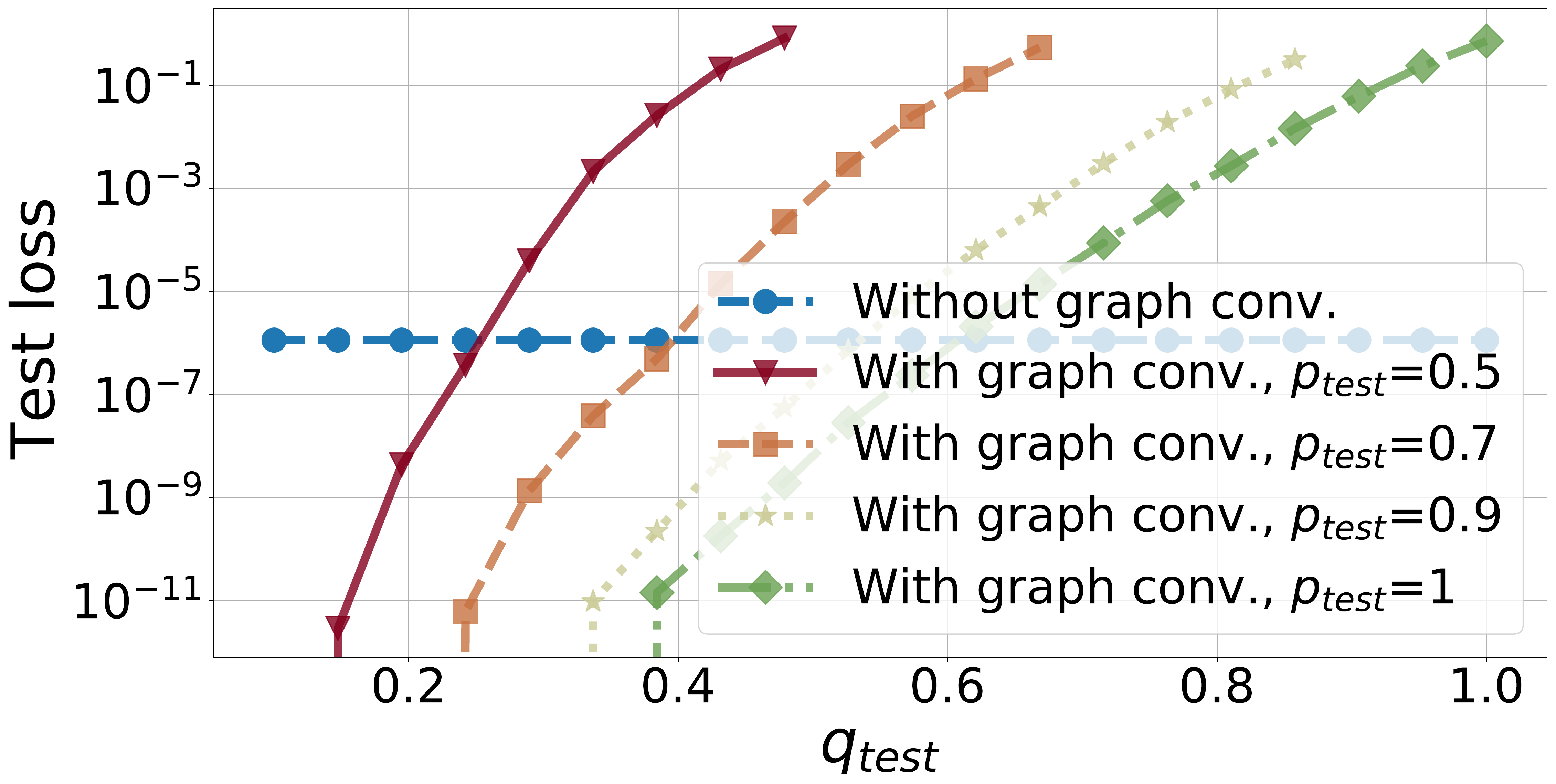}
		\caption{$p=1,q=0.95$}\label{fig:CSBM-2-11}
	\end{subfigure}
	\caption{Out-of-distribution generalization for distance between the means equal to $16/\sqrt{d}$. The subcaption of each figure is the $p_{\train}$ and $q_{\train}$ pair. Note that we omit the sub-index $train$ from $p$ and $q$ in the subcaption due to space limitation. We test on CSBMs with $n=400$, $d=60$ and varying $p_{\test}$ and $q_{\test}$ while $p_{\test}>q_{\test}$ and fixed means. The $y$-axis is in log-scale.}\label{fig:CSBM-2}
\end{figure}

\subsection{Out-of-distribution generalization on real data}

In this experiment we illustrate the generalization performance on real data for the linear classifier obtained by minimizing cross-entropy; see details about the optimization problem in the main paper. In particular, we use the partially labelled real data to train two linear classifiers, with and without graph convolution. We generate new graphs by adding inter-class edges uniformly at random. Then we test the performance of the trained classifiers on the noisy graphs with the original attributes. Therefore, the only thing that changes in the new unseen data are the graphs, the attributes remain the same.

We use the popular real data Cora, PubMed and WikipediaNetwork. These data are publicly available and can be downloaded from~\cite{FL2019}. The datasets come with multiple classes, however, for each of our experiments we do a one-v.s.-all  classification for a single class. WikipediaNetwork comes with multiple masks for the labels, in our experiments we use the first mask. Moreover, this is a semi-supervised problem, meaning that only a fraction of the training nodes have labels. Details about the classes of the datasets that were omitted from the main paper are given in \Cref{table:2}. The results of the experiments are shown in \Cref{fig:Cora,fig:PubMed,fig:Wiki}.
\begin{table}[ht!]
\caption{Information about the classes of the datasets. Here, the letter of the class refers to the original class of the dataset. Then number of nodes and attributes for Cora dataset is $2708$ and $1433$, respectively. The number of nodes and attributes for PubMed dataset is $19717$ and $500$, respectively. The number of nodes and attributes for Wiki.Net dataset is $2277$ and $2325$, respectively.} \vspace{0.1cm}
\centering 
 \begin{tabular}{||c c c c c||} 
 \hline
 Dataset &  Class & $\beta_0$ & $\beta_1$ & $\norm{\muv-\nuv}$ \\ [0.5ex] 
 \hline\hline
  \multirow{5}{*}{Cora}   & C & $5.2$e-$02$ & $4.8$e-$02$ & $9.2$e-$01$\\
                          & D & $6.3$e-$02$ & $2.4$e-$02$ & $6.8$e-$01$\\
                          & E & $5.3$e-$02$ & $4.7$e-$02$ & $7.7$e-$01$\\
                          & F & $5.0$e-$02$ & $6.7$e-$02$ & $8.5$e-$01$\\
                          & G & $4.7$e-$02$ & $1.1$e-$01$ & $8.6$e-$01$\\
 \hline
  \multirow{1}{*}{PubMed} & C& $3.4$e-$03$ & $2.5$e-$03$ & $7.0$e-$02$\\
 \hline
  \multirow{3}{*}{Wiki.Net} & C & $4.8$e-$01$ & $4.8$e-$01$ & $2.1$e-$01$\\
                            & D & $4.8$e-$01$ & $4.6$e-$01$ & $4.3$e-$01$\\
                            & E & $4.8$e-$01$ & $4.9$e-$01$ & $5.3$e-$01$\\ [1ex] 
 \hline
\end{tabular}
\label{table:2}
\end{table}

\begin{figure}[ht!]
	\centering
	\begin{subfigure}[t]{2.1in}
		\centering
		\includegraphics[width=\columnwidth]{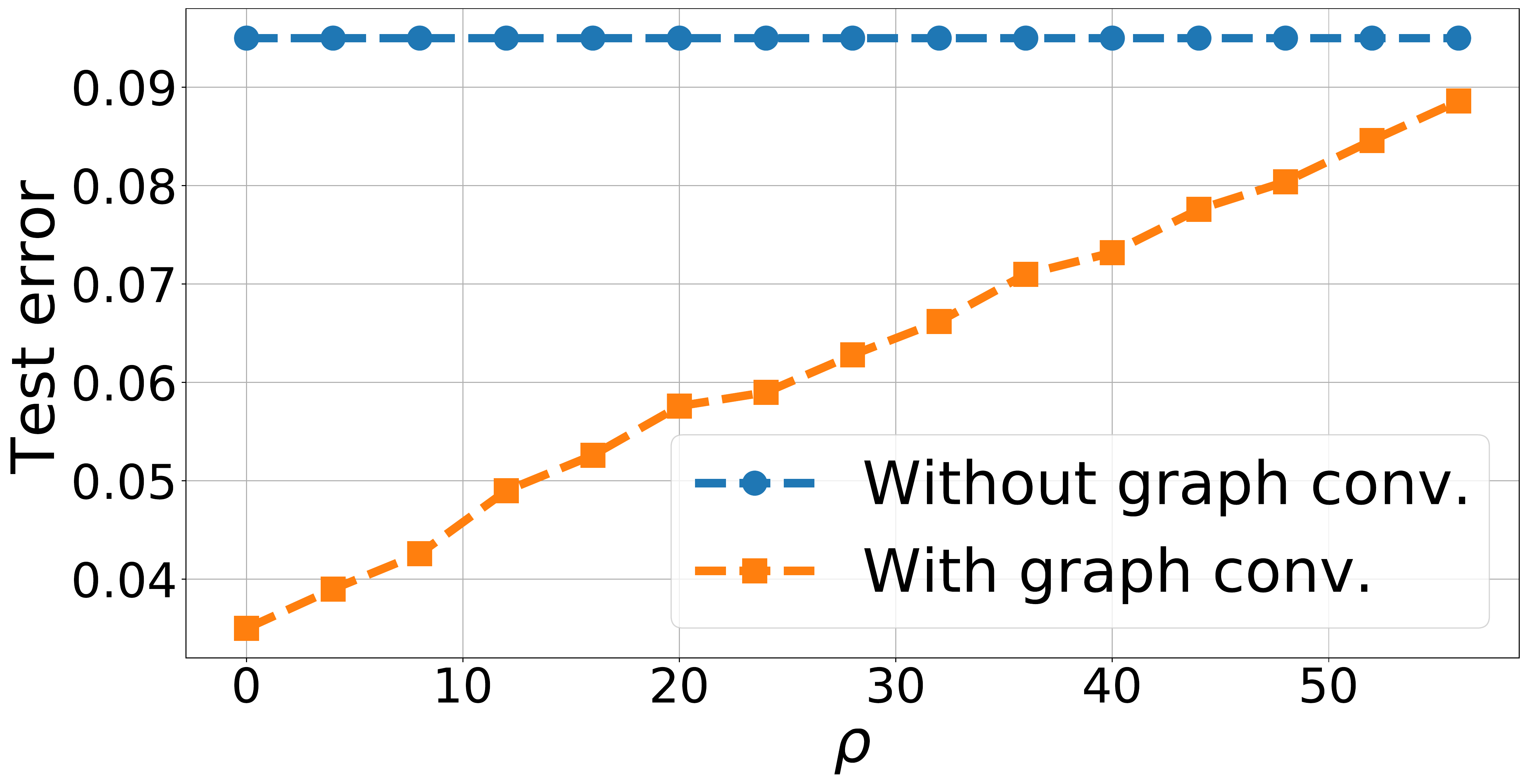}
		\caption{Cora, class $C$}\label{fig:Cora-c}
	\end{subfigure}
	\begin{subfigure}[t]{2.1in}
		\centering
		\includegraphics[width=\columnwidth]{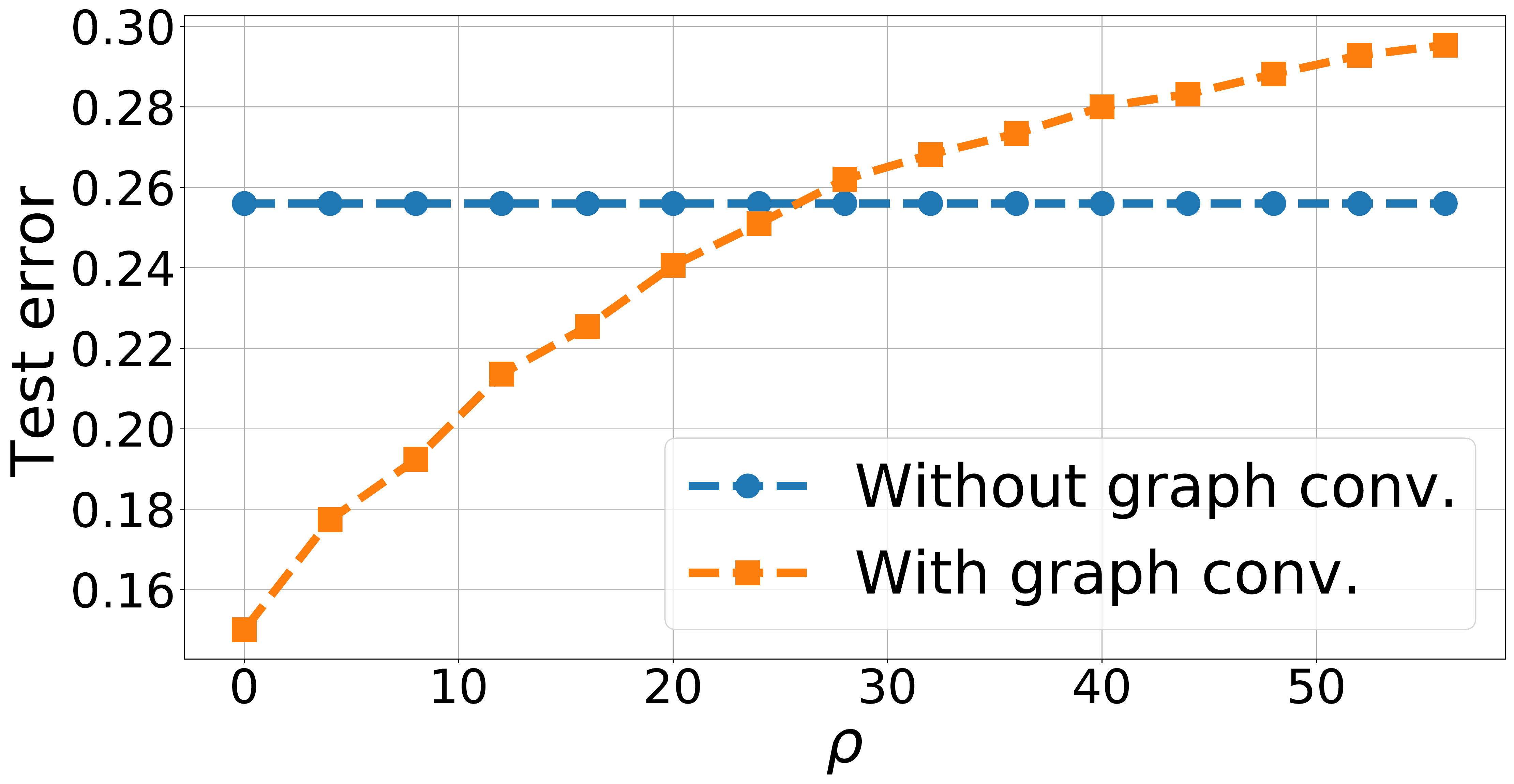}
		\caption{Cora, class $D$}\label{fig:Cora-d}
	\end{subfigure}
	\begin{subfigure}[t]{2.1in}
		\centering
		\includegraphics[width=\columnwidth]{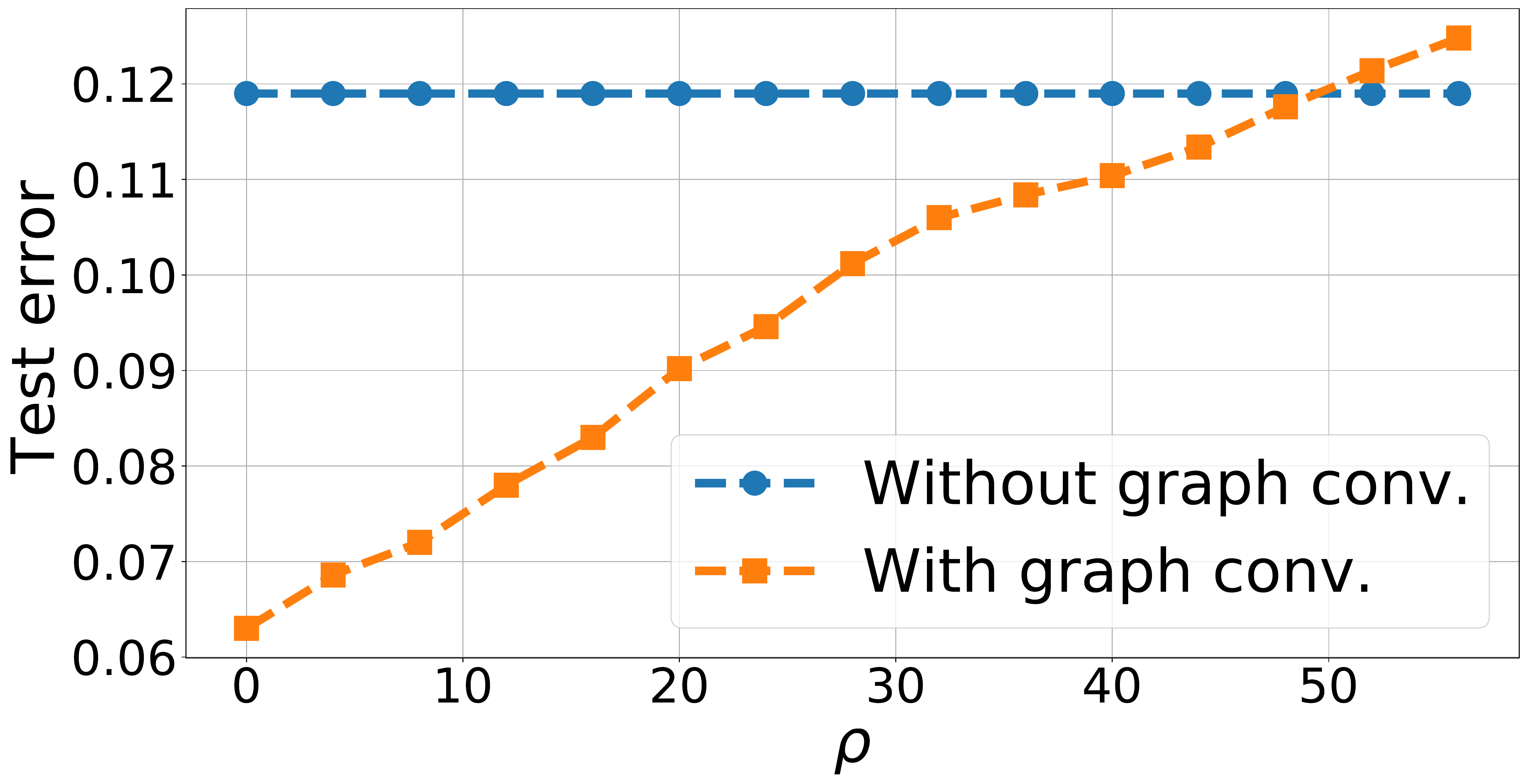}
		\caption{Cora, class $E$}\label{fig:Cora-e}
	\end{subfigure}
	\\\vspace{0.2cm}
	\begin{subfigure}[t]{2.1in}
		\centering
		\includegraphics[width=\columnwidth]{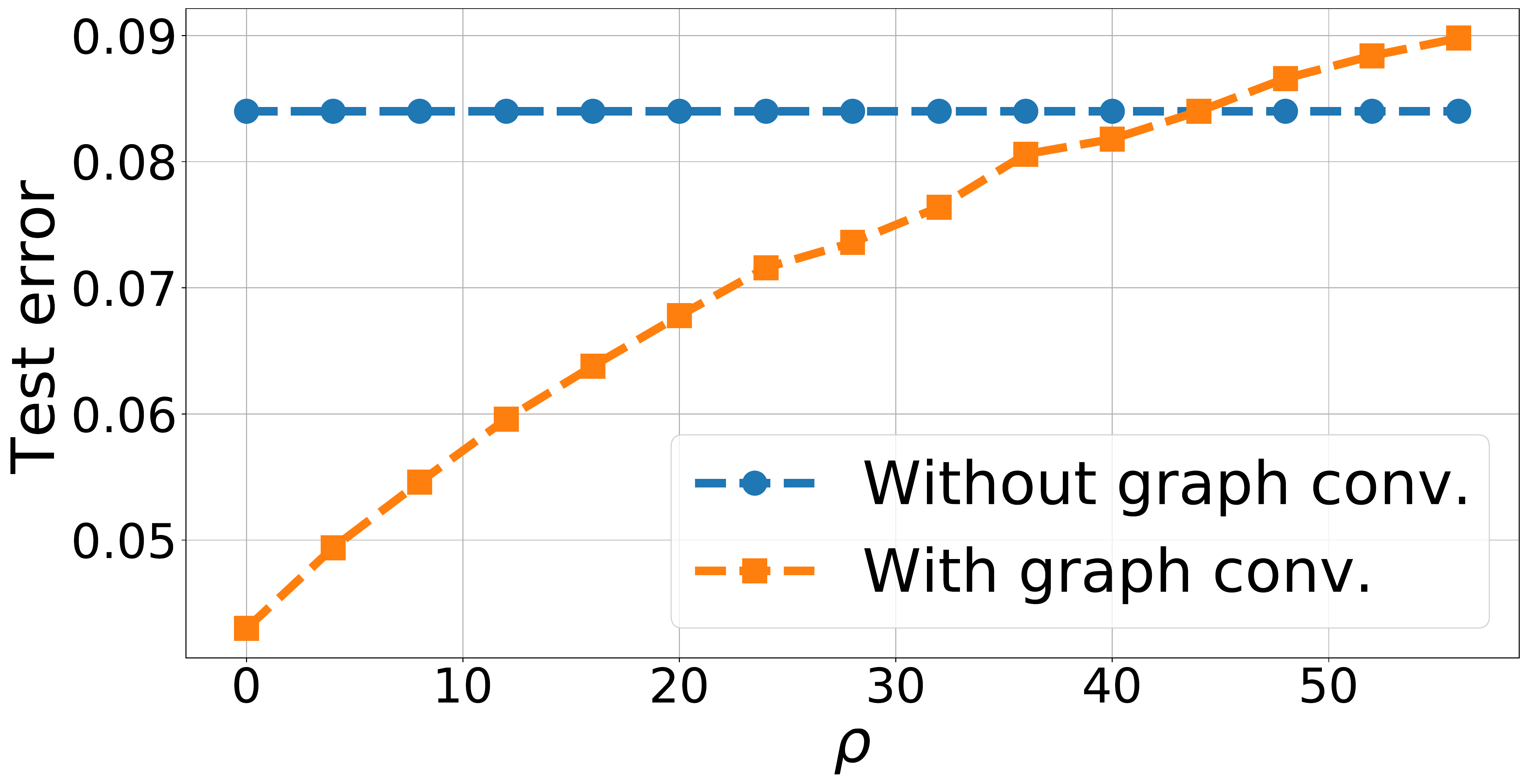}
		\caption{Cora, class $F$}\label{fig:Cora-f}
	\end{subfigure}
	\begin{subfigure}[t]{2.1in}
		\centering
		\includegraphics[width=\columnwidth]{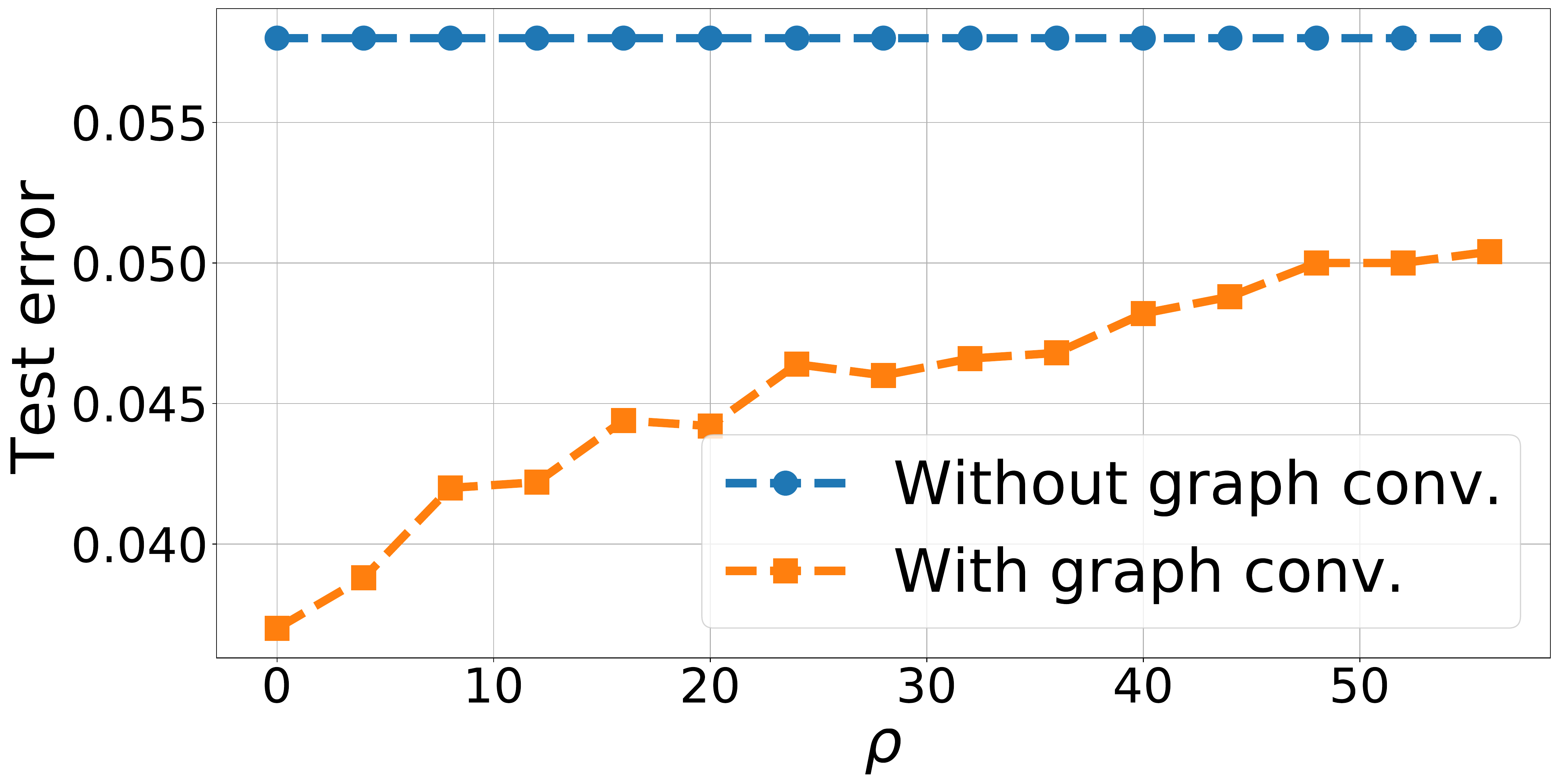}
		\caption{Cora, class $G$}\label{fig:Cora-g}
	\end{subfigure}
	\caption{Test loss as the number of nodes increases for Cora. The test error measures the number of misclassified nodes over the number of nodes in the graph. Here, $\rho$ denotes the ratio of added inter-class edges over the number of inter-class edges of the original graph. The $y$-axis is in log-scale.}\label{fig:Cora}
\end{figure}

\begin{figure}[ht!]
	\centering
	\begin{subfigure}[t]{2.1in}
		\centering
		\includegraphics[width=\columnwidth]{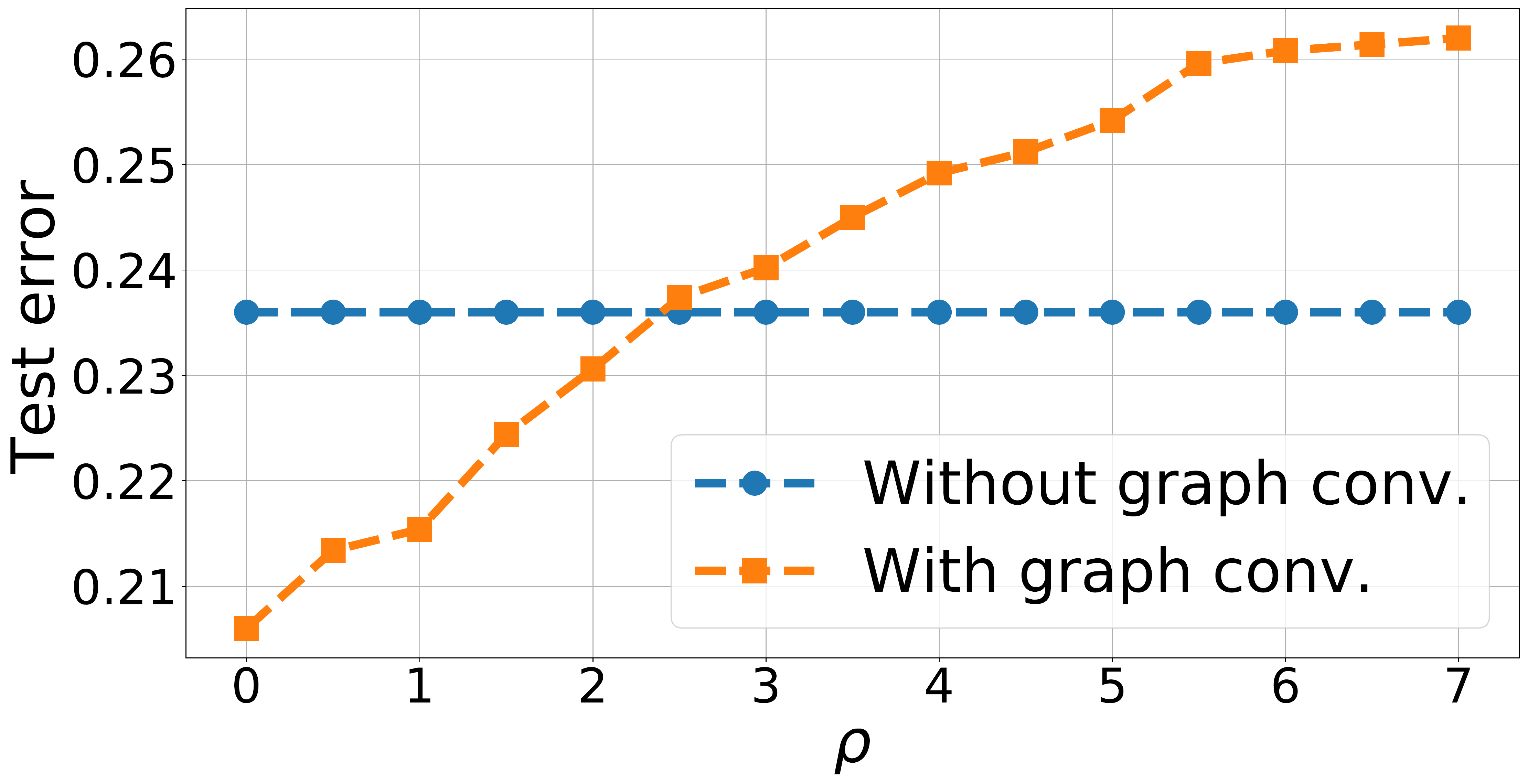}
		\caption{PubMed, class $C$}\label{fig:PubMed-c}
	\end{subfigure}
	\caption{Test loss as the number of nodes increases for PubMed.}\label{fig:PubMed}
\end{figure}

\begin{figure}[ht!]
	\centering
	\begin{subfigure}[t]{2.1in}
		\centering
		\includegraphics[width=\columnwidth]{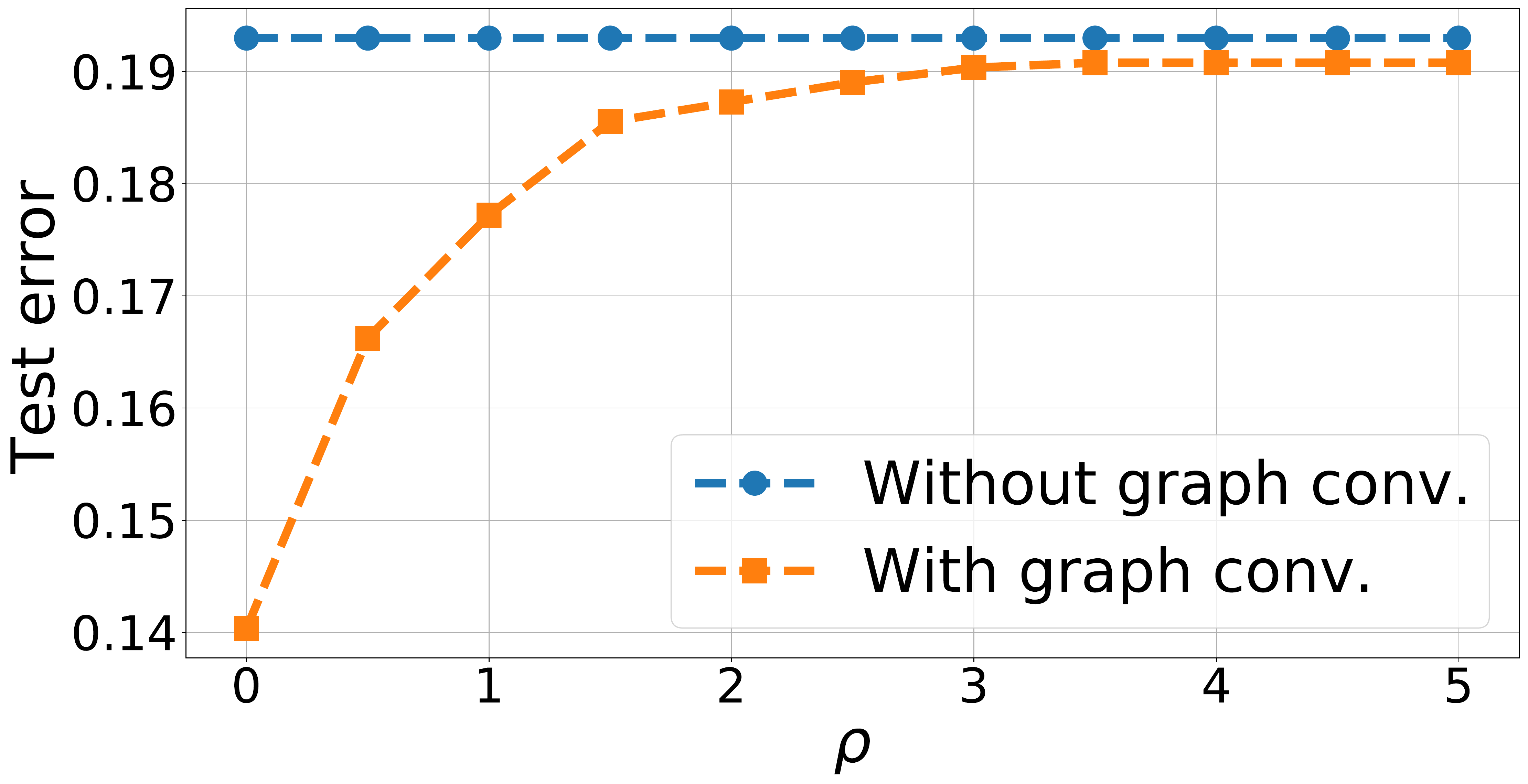}
		\caption{Wiki.Net, class $C$}\label{fig:Wiki-c}
	\end{subfigure}
	\begin{subfigure}[t]{2.1in}
		\centering
		\includegraphics[width=\columnwidth]{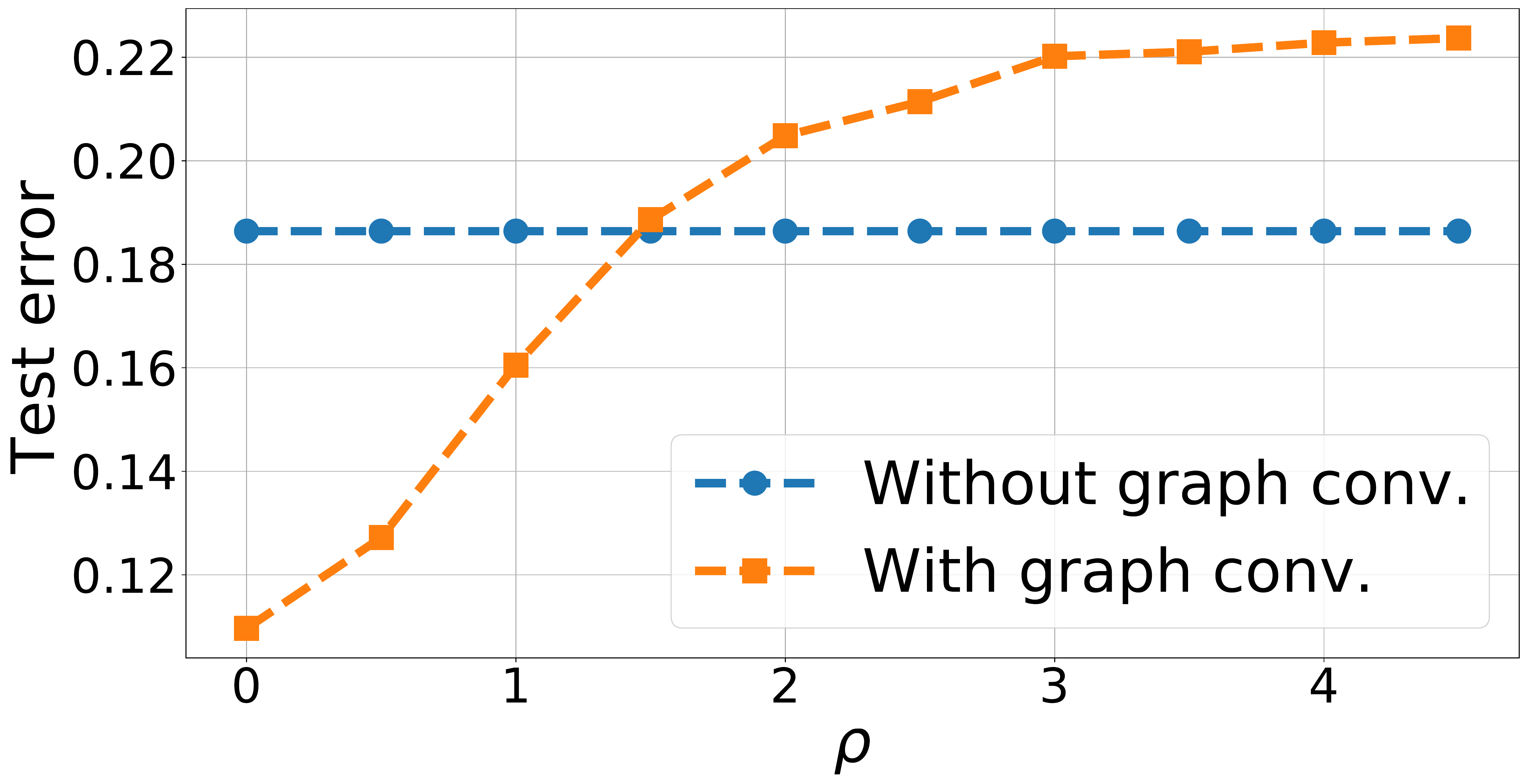}
		\caption{Wiki.Net, class $D$}\label{fig:Wiki-d}
	\end{subfigure}
	\begin{subfigure}[t]{2.1in}
		\centering
		\includegraphics[width=\columnwidth]{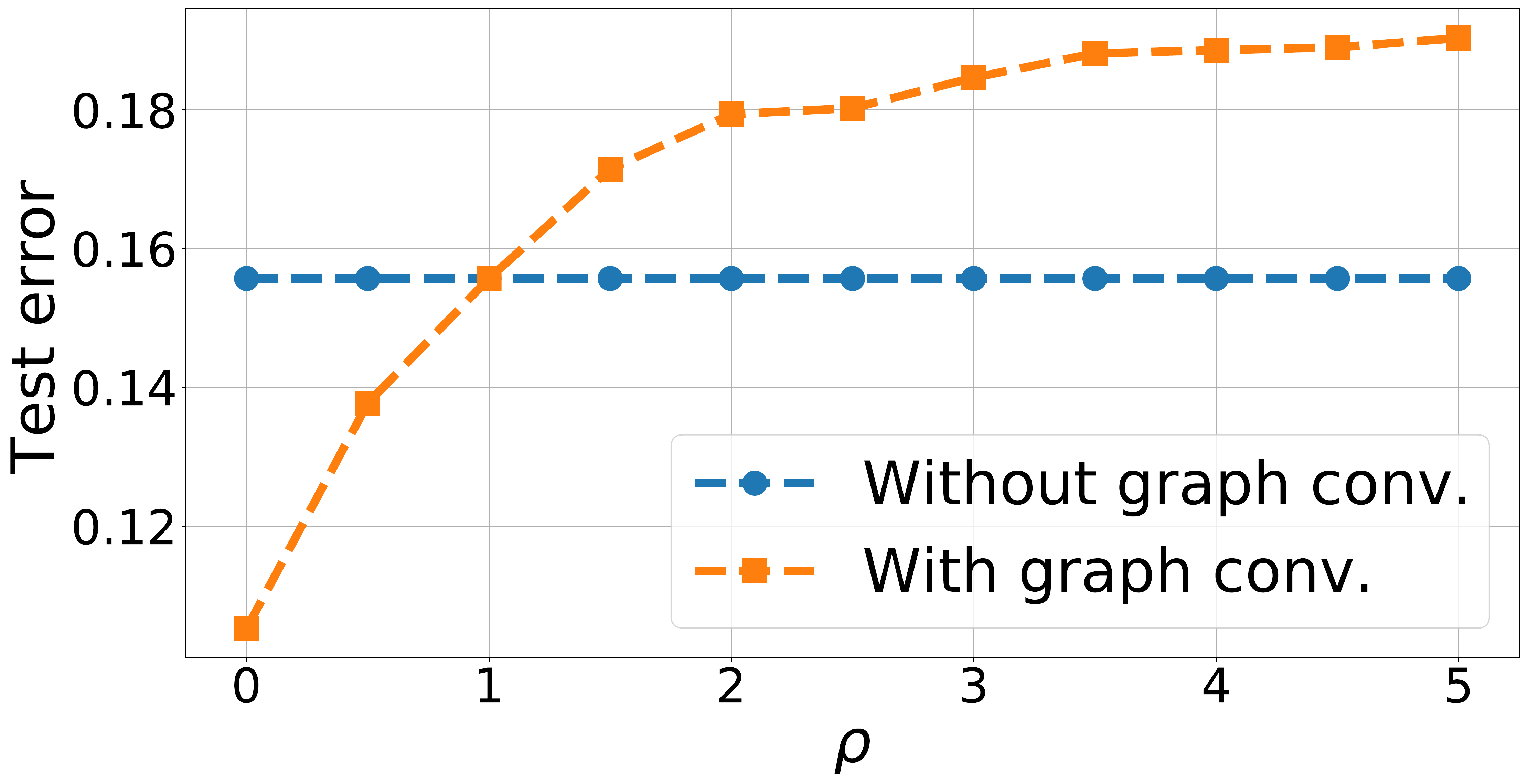}
		\caption{Wiki.Net, class $E$}\label{fig:Wiki-e}
	\end{subfigure}
	\caption{Test loss as the number of nodes increases for Wiki.Net.}\label{fig:Wiki}
\end{figure}

\end{document}